\definecolor{darkpink}{rgb}{0.91, 0.33, 0.5}
\definecolor{puorange}{rgb}{0.80,0.20,0}
\definecolor{bluegray}{rgb}{0.04,0,0.7}
\definecolor{greengray}{rgb}{0.05,0.50,0.15}
\definecolor{darkbrown}{rgb}{0.40,0.2,0.05}
\definecolor{darkcyan}{rgb}{0,0.4,1}
\definecolor{black}{rgb}{0,0,0}
\definecolor{grey}{rgb}{0.93,0.93,0.93}
\definecolor{royalazure}{rgb}{0.0, 0.22, 0.66}
\crefname{section}{Sec.}{Sections}
\crefname{appendix}{Appx.}{Appxs}
\crefname{theorem}{Thm.}{Thms.}
\crefname{lemma}{Lem.}{Lems.}
\crefname{lem}{Lem.}{Lems.}
\crefname{corollary}{Cor.}{Cors.}
\crefname{proposition}{Prop.}{Props.}
\crefname{prop}{Prop.}{Props.}
\crefname{assumption}{Asm.}{Asms.}
\crefname{asm}{Asm.}{Asms.}
\crefname{algorithm}{Alg.}{Algs.}
\Crefname{algorithm}{Algorithm}{Algorithms}
\crefname{figure}{Fig.}{Figs.}
\crefname{table}{Tab.}{Tabs.}
\newif\ifcomments
\definecolor{lightblue}{RGB}{0,160,200}
\definecolor{gray}{rgb}{0.4,0.4,0.4}
\newenvironment{proofsketch}{%
  \proof}{\endproof}
\newif\ifnew
\newcommand{\new}[1]{
\ifnew
{\color{teal}#1}~\color{black}
\else
#1
\fi
}
\newcommand{\myparagraph}[1]{\paragraph{#1.}\hspace{-0.8em}}  %
\renewcommand{\cite}{\citep} %
\newcommand \reals {\mathbb{R}}
\newcommand \T {^{\top}}	%
\newcommand{\ones}{\mathbf{1}}
\newcommand \eps \epsilon
\newcommand{\pow}[1]{^{(#1)}}
\newcommand \expect {\mathbb{E}}
\newcommand{\Var}[1]{\operatorname{Var}\left[#1\right]}
\newcommand \prob {{\mathbb{P}}}
\DeclarePairedDelimiterX{\inp}[2]{\langle}{\rangle}{#1, #2} 
\newcommand{\norm}[1]{\left\Vert #1 \right\Vert} 
\newcommand{\normsq}[1]{\left\Vert #1 \right\Vert^2} 
\newcommand{\abs}[1]{\left\lvert #1 \right\rvert}
\newcommand \argmin {\operatorname*{arg\,min}} %
\newcommand \argmax {\operatorname*{arg\,max}} %
\newcommand \conv {\operatorname*{conv}} %
\newcommand \grad {\nabla}
\newcommand \D {\mathrm{d}}
\newcommand \Fcal {\mathcal F}
\newcommand \Pcal {\mathcal P}
\definecolor{cadmiumgreen}{rgb}{0.0, 0.42, 0.24}
\definecolor{harvardcrimson}{rgb}{0.79, 0.0, 0.09}
\newtheorem{theorem}{Theorem}
\newtheorem{lem}[theorem]{Lemma}
\newtheorem{property}[theorem]{Property}
\newtheorem{proposition}[theorem]{Proposition}
\newtheorem{prop}[theorem]{Proposition}
\newtheorem{corollary}[theorem]{Corollary}
\newtheorem{lemma}[theorem]{Lemma}
\newtheorem{assumption}[theorem]{Assumption}
\theoremstyle{definition}
\newtheorem{asm}{Assumption}[section]
\newcommand{\R}{\mathbb{R}}
\renewcommand{\L}{\mathbb{L}}
\renewcommand{\d}{\ensuremath{\, \text{d}}}
\newcommand{\ceil}[1]{\ensuremath{\left\lceil#1\right\rceil}}
\newcommand{\p}[1]{\ensuremath{\left(#1\right)}}
\newcommand{\br}[1]{\ensuremath{\left\{#1\right\}}}
\newcommand{\sbr}[1]{\ensuremath{\left[#1\right]}}
\newcommand{\sse}{\subseteq}
\renewcommand{\P}[2]{\ensuremath{{\mathbb P}_{#1}\left[#2\right]}}
\newcommand{\E}[2]{\ensuremath{{\mathbb E}_{#1}\left[#2\right]}}
\newcommand{\I}[2]{\ensuremath{{\mathbbm 1}_{#1}\left(#2\right)}}
\newcommand{\mc}[1]{\ensuremath{\mathcal{#1}}}
\newcommand{\mbb}[1]{\ensuremath{\mathbb{#1}}}
\newcommand{\msc}[1]{\ensuremath{\mathscr{#1}}}
\newcommand{\ip}[1]{\ensuremath{\left\langle #1 \right\rangle}}
\newcommand{\argsort}{\operatorname{argsort}}
\newcommand{\osvrg}{LSVRG\xspace}
\newcommand{\lstat}{h}
\newcommand{\dualvar}{\lambda}
\newcommand{\primobj}{\msc{R}_\sigma}
\newcommand{\mbobj}{\msc{R}_{\hat{\sigma}}}
\newcommand{\primobjreg}{\msc{R}_{\sigma, \mu}}
\newcommand{\W}{\msc{W}}
\newcommand{\surrobj}{\bar{\msc{R}}_{\hat{\sigma}}}
\newcommand{\surrobjreg}{\bar{\msc{R}}_{\hat{\sigma}, \mu}}
\newcommand{\primobjsmooth}[1]{\msc{R}_{\sigma, #1}}
\newcommand{\saddleobj}{\Phi}
\newcommand{\saddleobjsmooth}[1]{\Phi_{#1}}
\newcommand{\strongcvx}{\mu}
\newcommand{\avg}{{u_n}}
\newcommand{\lossval}{l}
\newcommand{\lossvalaux}{z}
\newcommand{\bregmean}{\operatorname{Avg}}
\newcommand{\pav}{\operatorname{PAV}}
\newcommand{\lse}{\operatorname{LSE}}
\newcommand{\Binom}{\operatorname{Binom}}
\newcommand{\stepsize}{\eta}
\newcommand{\rv}{Z}
\newcommand{\dat}{D}
\newcommand{\reg}{\mu}
\newcommand{\regobj}{\msc{R}_{\sigma, \mu}}
\newcommand\blfootnote[1]{%
  \begingroup
  \renewcommand\thefootnote{}\footnote{#1}%
  \addtocounter{footnote}{-1}%
  \endgroup
} 
\title{Stochastic Optimization for Spectral Risk Measures}
\author{Ronak Mehta$^1$ \qquad Vincent Roulet$^1$ \qquad Krishna Pillutla$^{2\dagger}$ \qquad Lang Liu$^1$ \qquad Zaid Harchaoui$^1$ \vspace{0.3cm} \\
$^1$Department of Statistics, University of Washington\\
$^2$Paul G. Allen School of Computer Science \& Engineering, University of Washington}
\begin{document}

\maketitle
\blfootnote{$^\dagger$ Now at Google Research.}

\begin{abstract}
  Spectral risk objectives -- also called $L$-risks -- allow for learning systems to interpolate between optimizing average-case performance (as in empirical risk minimization) and worst-case performance on a task. We develop stochastic algorithms to optimize these quantities by characterizing their subdifferential and addressing challenges such as biasedness of subgradient estimates and non-smoothness of the objective. We show theoretically and experimentally that out-of-the-box approaches such as stochastic subgradient and dual averaging are hindered by bias and that our approach outperforms them. 
\end{abstract}

\section{Introduction} \label{sec:intro}
A cornerstone of machine learning is the {\it empirical risk minimization (ERM)} problem, written
\begin{align}
    \min_{w \in \R^d} \left[ \msc{R}(w) := \frac{1}{n} \sum_{i=1}^n  \ell_{i}(w) \right],
    \label{eqn:erm}
\end{align}
where $\ell_i(w)$ quantifies loss on training example $i$ using a model with weights $w \in \R^d$. The objective~\eqref{eqn:erm} represents an often unquestioned modeling choice: to summarize $\ell_1(w), ..., \ell_n(w)$, the empirical distribution of losses, using its average. At first glance, this is a natural summary, inheriting both the statistical amenability of the sample mean \cite{Shalev-Shwartz2014Understanding} and the wide arsenal of optimization algorithms designed specifically for finite sum objectives \cite{Roux2012AStochastic, Defazio2014SAGA, Johnson2013Accelerating, Reddi2016FastIncremental}.
However, as modern learning systems are deployed in critical domain applications such as energy planning \cite{Guigues2013Risk-averse}, materials engineering \cite{Yeh2006Analysis}, and financial regulation \cite{He2022RiskMeasures}, safe and reliable performance in ``worst-case'' scenarios is paramount.

This imperative can be modeled by alternate {\it risk measures} (statistical functionals of the loss distribution), particularly those that encapsulate the behavior of the distribution's upper tail. We investigate objectives of the form
\begin{align} 
    \min_{w \in \R^d} \left[ \msc{R}_\sigma(w) := \sum_{i=1}^n \sigma_i \ell_{(i)}(w) \right],
    \label{eqn:orm}
\end{align}
where $\ell_{(1)}(w) \leq ... \leq \ell_{(n)}(w)$ are the {\it order statistics} of the empirical loss distribution, and $0 \leq \sigma_1 \leq \cdots \leq \sigma_n \leq 1$ is a sequence of non-decreasing weights satisfying $\sum_{i=1}^n \sigma_i = 1$, called the {\it spectrum} of $\msc{R}_\sigma$. 

The expression \eqref{eqn:orm} is called an {\it $L$-risk} \cite{Shorack2017Probability, Maurer2020RobustUnsupervised} in statistics and a {\it spectral risk measure} in economics and finance \cite{He2022RiskMeasures, Holland2021Spectralrisk}. The $\sigma_i$'s allow the practitioner to interpolate between the average-case ($\sigma_i = {1}/{n} \ \forall i$) and worst-case ($\sigma_n = 1$) performance on the training set. Such objectives have garnered a flurry of recent interest in machine learning \cite{Fan2017Learningwith,williamson2019fairness, Khim2020uniformConvergence, Maurer2020RobustUnsupervised, Holland2021Spectralrisk, Liu2019OnHuman, Lee2020LearningBounds, kawaguchi2020ordered}. 

Despite their increasing adoption, however, optimization approaches have relied on using the full-batch or stochastic subgradient method out-of-the-box \cite{Fan2017Learningwith, kawaguchi2020ordered, Laguel2020First-Order, Levy2020Large-Scale}, both enduring considerable limitations.
The per-iteration complexity of full-batch methods is $O(n)$ function/gradient evaluations and $O(n\log n)$ elementary operations (as we discuss in  \Cref{prop:cvxity}). For stochastic%
\footnote{
    We use the term ``stochastic'' to include both \emph{streaming} algorithms in which fresh samples from the data-generating distribution are provided at each iterate, and \emph{incremental} algorithms, in which multiple passes are made over a fixed dataset.
} 
variants, unbiased estimates of any subgradient, while needing only $O(1)$ gradient evaluations, still need $O(n)$ function calls and $O(n \log n)$ elementary operations. This yields practically the same per-iteration complexity as the full-batch method, inspiring methods that abandon convergence to the minimal $L$-risk altogether and resort to biased stochastic subgradient updates that use $O(1)$ function value and gradient calls per-iteration~\cite{kawaguchi2020ordered, Levy2020Large-Scale}. 

In remains an open question whether there exist optimization algorithms that converge to the minimum spectral risk while needing only $O(1)$ gradient calls per iteration; in this work, we answer the question in the affirmative. In \Cref{sec:theory}, we give a consistency result that relates empirical $L$-risks to their population counterpart. In \Cref{sec:optim}, we characterize the subdifferential and continuity properties of $L$-risks as a function of the underlying losses, quantify the bias of current stochastic approaches and propose a linearly convergent $L$-risk minimization algorithm requiring only $O(1)$ function/gradient evaluations and $O(\log n)$ elementary operations per iteration. Finally, we demonstrate superior convergence of the method experimentally via numerical evaluations in \Cref{sec:experiments}, with concluding remarks in \Cref{sec:discussion}.
 
\myparagraph{Related work} \label{sec:related_work}
Risk measures have been studied extensively in quantitative finance~\cite{Artzner1999Coherent, Follmer2002Convexmeasures, Rockafeller2013Thefundamental,acerbi2002coherence,pflug2005measuring,kuhn2019wasserstein}, convex analysis \cite{Rockafeller2014RandomVariables, Ben2007AnOld}, and distributionally robust learning~\cite{sarykalin2008value, Guigues2013Risk-averse, Fan2017Learningwith,hu2018does,lee2018minimax,duchi2019variance, Laguel2020First-Order,chen2020distributionally,li2020tilted}. We refer to \citet{He2022RiskMeasures} for a review of the axiomatic theory of risk measures and \citet[Chap. 6]{Shapiro2014Lectures} for applications to optimization. 

A number of recent works study $L$-risks, with a focus on statistical properties. The works \citet{Khim2020uniformConvergence} and \citet{Maurer2020RobustUnsupervised} provide classical statistical learning theoretic bounds for $L$-risk objectives and the latter focuses on unsupervised tasks like clustering.  
\citet{Holland2021Spectralrisk} presents a derivative-free learning procedure for general $L$-risk problems in the fully stochastic/streaming setting.
A particular spectral risk measure called the {\it superquantile} or {\it conditional value-at-risk (CVaR)}, has recently received careful attention in the learning setting~\cite{curi2020adaptive, Levy2020Large-Scale, Laguel2020First-Order, Laguel2022Superquantiles}. Other non-spectral risk functionals include CPT measures (which can be thought of as nonconvex $L$-risks) and OCE measures \cite{Liu2019OnHuman, Lee2020LearningBounds}.

\citet{Fan2017Learningwith} and \citet{kawaguchi2020ordered} study batch and stochastic optimization algorithms respectively for the ``average top-$k$" loss, which is exactly equivalent to the superquantile. We instead focus on developing incremental algorithms, as in \citet{Mairal2014Incremental, Roux2012AStochastic, Defazio2014SAGA, Shalev-Shwartz2013Stochastic, Johnson2013Accelerating} for ERM. We aim to find algorithms that operate on non-smooth objectives, a fixed training set and require only a constant number of function value and gradient computations per iterate. 
\section{Spectral Risk Measures}\label{sec:theory}
In this section, we relate the empirical quantity~\eqref{eqn:orm} to its population counterpart, justifying its use as an estimator for $n$ sufficiently large. To achieve this, we will write spectral/$L$-risks as functionals of an empirical cumulative distribution function (CDF), and show that it consistently estimates the value of the same functional applied to a population CDF.

\myparagraph{Notation}
Let $\{\dat_1, \dots, \dat_n\}$ be an i.i.d.~sample from a distribution $\prob$ over a sample space $\msc{\dat}$.
Let $\ell: \R^d \times \msc{\dat} \rightarrow \R$ be a loss function consuming model weights $w \in \R^d$ and $\msc{\dat}$-valued training example $\dat$ (e.g., a feature-label pair). We denote the training loss as $\ell_i(w) := \ell(w, \dat_i)$ for short.
Let $\rv_i := \ell(w, \dat_i)$ for $i \in \{1, \dots, n\}$.
It follows that $\{\rv_1, \dots, \rv_n\}$ is a real-valued i.i.d.~sample whose CDF is denoted by $F$, and
\begin{align} \label{eq:orm_x}
    \msc{R}_\sigma(w) = \sum_{i=1}^n \sigma_i \ell_{(i)}(w) = \sum_{i=1}^n \sigma_i \rv_{(i)},
\end{align}
where $\rv_{(1)} \leq ... \leq \rv_{(n)}$ are order statistics of $\{\rv_i\}_{i=1}^n$.

We describe subsequent results as if $\{\rv_i\}_{i=1}^n$ are arbitrary real-valued random variables drawn i.i.d.~from CDF $F$, keeping in mind that in our case, these refer to losses on data instances $\dat_i$ under parameters $w$. 

\myparagraph{$L$-functional}
We rewrite the spectral risk \eqref{eq:orm_x} as an \emph{$L$-functional} of the CDF.
Let $F_n(z):= \frac{1}{n} \sum_{i=1}^n \I{}{\rv_i \leq z}$ denote the (random) empirical CDFs of the losses. We also define the empirical {\it quantile function} (or inverse CDF) as $F_n^{-1}(t) := \inf\{z: F_n(z) \geq t\}$ for $t \in (0, 1)$.
The population quantile function is defined similarly as $F^{-1}(t) := \inf\{z: F(z) \geq t\}$.
The quantile functions are always well-defined.
The empirical quantile function can be written in terms of the order statistics as $F_n^{-1}\p{t} = \rv_{\p{\ceil{nt}}}$.
The empirical CDF and quantile function of the losses are plotted in \Cref{fig:cdf_quantile} (top left). Notice in particular that when $t \in \left(\frac{i-1}{n}, \frac{i}{n}\right)$, we have that $F_n^{-1}(t) = \rv_{(i)}$, where end-points are chosen to make $F_n^{-1}$ left continuous.
\begin{figure}[t!]
    \centering
    \adjincludegraphics[width=0.375\textwidth,trim={0 20pt 0 20pt},clip=true]{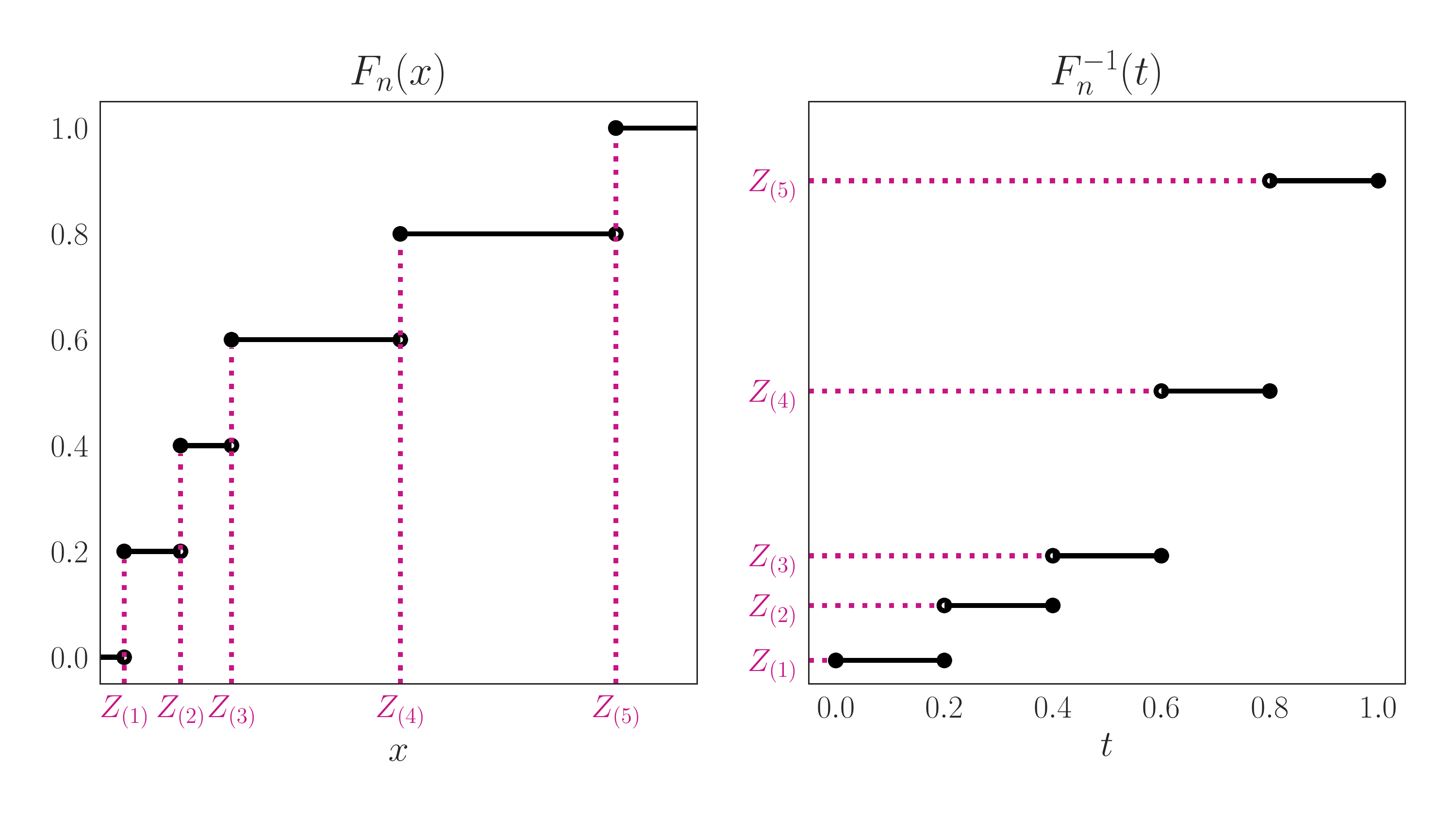}
    \adjincludegraphics[width=0.57\textwidth,trim={120pt 0 100pt 0},clip=true]{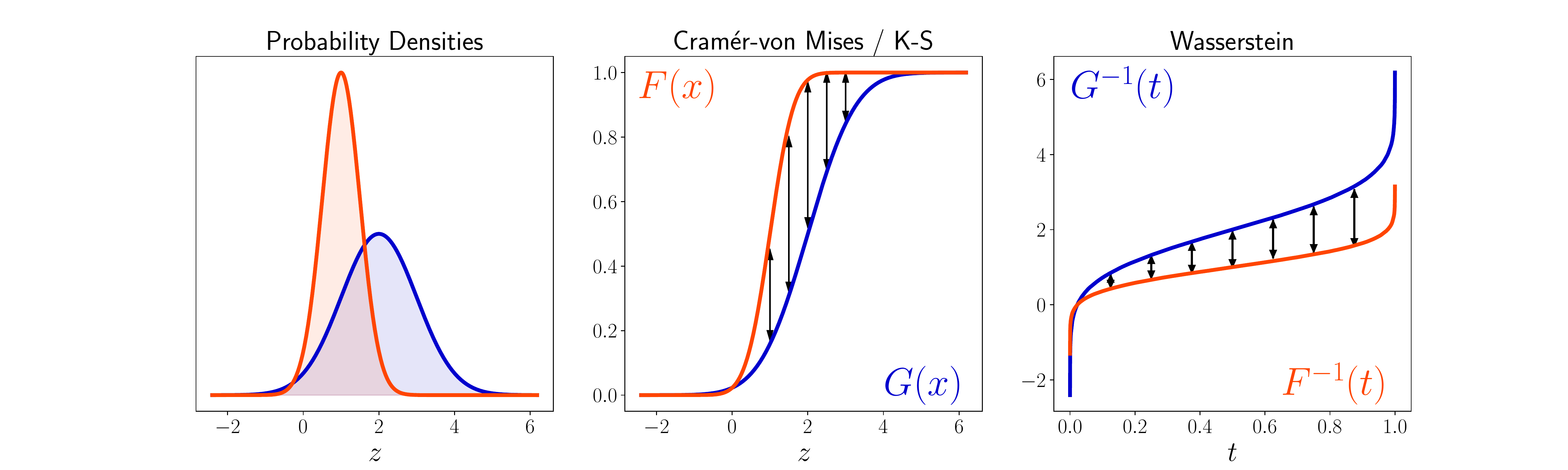}
    \adjincludegraphics[width=\textwidth, trim={50pt 0 50pt 0}, clip=true]{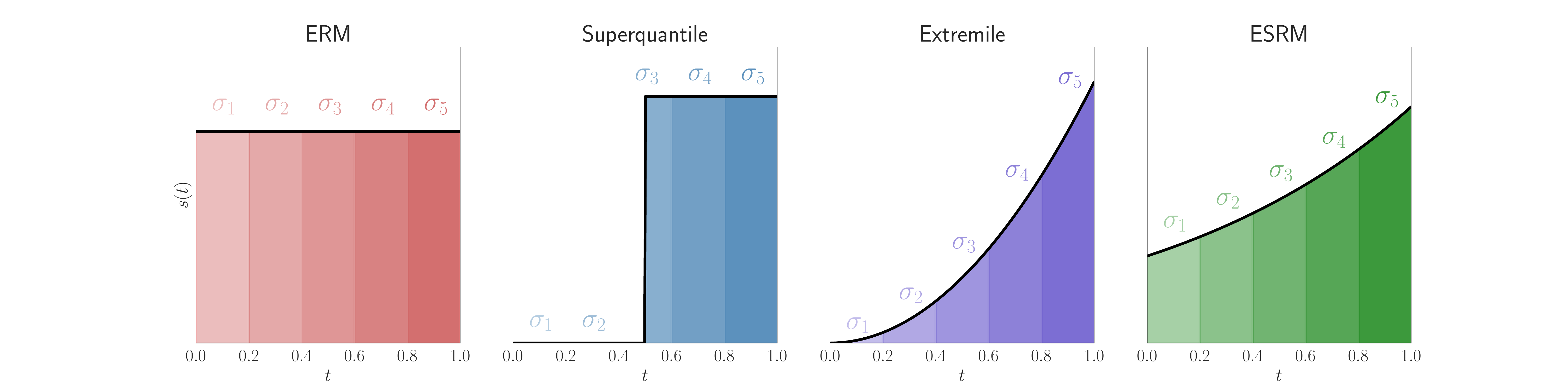}
    \caption{{\bf Top Left:} Empirical CDF $F_n$ and quantile function $F_n^{-1}$ of $\rv_1, \cdots, \rv_5$. {\bf Top Right: }Comparison of two distribution in CDFs ($F$ and $G$) as well as quantile functions ($F^{-1}$ and $G^{-1}$). {\bf Bottom:} Continuous spectra $s(t)$ and their discretization ($\sigma_1, \ldots, \sigma_5$) for various risk measures.}
    \label{fig:cdf_quantile}
\end{figure}

The spectrum $\sigma$ of a spectral risk is typically defined as a discretization of a probability density $s$ on $[0, 1]$, such that $\sigma_i = \int_{(i-1)/n}^{i/n} s(t) \d t$. Examples of spectra for various risk measures are shown in \Cref{fig:cdf_quantile} (bottom), in which the value of $\sigma_i$ is equal to the area of the shaded region immediately under it. The associated formulae are in \Cref{tab:spectra}. 
  The superquantile with parameter $q \in (0, 1)$ has enjoyed much attention in quantitative finance and more recently, machine learning \cite{Laguel2022Superquantiles},  the \emph{extremile} with parameter $r\geq 1$ has been introduced by~\cite{Daouia2019Extremiles} as an alternative risk measure, and the \emph{exponential spectral risk measure (ESRM)} with parameter $\rho> 0$ is a risk-aversion model used in futures clearinghouse margin requirements \cite{Cotter2006Extreme}.
Given both the construction of $s$ and $F_n^{-1}$ we can  rewrite the spectral risk \eqref{eq:orm_x} as 
\begin{align*}
    \msc{R}_\sigma(w) &= \sum_{i=1}^n \sigma_i \rv_{(i)} = \sum_{i=1}^n \p{\int_{(i-1)/n}^{i/n} s(t) \d t} \rv_{(i)}\\
    &= \sum_{i=1}^n \p{\int_{(i-1)/n}^{i/n} s(t) \cdot \rv_{\p{\ceil{nt}}} \d t}\\
    &= \int_0^1 s(t) \cdot F_n^{-1}(t) \d t =: \L_s\sbr{F_n},
\end{align*}
where $\L_s\sbr{G} := \int_0^1 s(t) G^{-1}(t) \d t$ is called an {\it $L$-functional} with spectrum $s$ applied to CDF $G$. It stands to reason that this quantity converges to $\L_s\sbr{F}$ in an appropriate sense.
Our proof relies on the notion of Wasserstein distances which we briefly recall here.

\myparagraph{Wasserstein distances}
For two probability distributions $\mu$ and $\nu$ on $\R$, the $1$-Wasserstein distance $W_1(\mu, \nu)$ between $\mu$ and $\nu$ is defined by
\begin{align*}
    W_1(\mu, \nu) := \inf_{\gamma \in \Pi(\mu, \nu)} \int_\R \abs{x - y} \d \gamma(x, y),
\end{align*}
where $\Pi(\mu, \nu)$ is the set of couplings (or joint distributions) with marginals being $\mu$ and $\nu$.
It is a metric on the space of probability distributions on $\R$.
If $F$ and $G$ are the CDFs associated with $\mu$ and $\nu$, respectively,
it is known \citep[e.g.,][Thm.~2.10]{Bobkov2019OnedimensionalEM} that $W_1(\mu, \nu)$ quantifies the disagreement in either the CDF or quantile functions, i.e.,
\begin{align}
    W_1(\mu, \nu) &= \int_0^1 \abs{F^{-1}(t) - G^{-1}(t)} \d t \label{eqn:quantile_W}\\
    &= \int_{-\infty}^{+\infty} \abs{F(z) - G(z)} \d z. \label{eqn:cdf_W}
\end{align}
In contrast, other statistical divergences such as the Cramer von Mises criterion $\int_{-\infty}^\infty \abs{F_n(z) - F(z)} \d F(z)$ and the Kolmogorov-Smirnoff statistic $\sup_{z \in \R} \abs{F_n(z) - F(z)}$ only measure the disagreement in CDFs, as illustrated in \Cref{fig:cdf_quantile} (top right). The relation \eqref{eqn:quantile_W} is used to prove the upcoming \Cref{prop:sgd}, whereas \eqref{eqn:cdf_W} helps establish the consistency result below.
\begin{restatable}{prop}{consistency}
    Assume that $\expect\abs{\rv}^p < \infty$ for some $p > 2$ and that $\norm{s}_\infty := \sup_{t \in (0, 1)} \abs{s(t)} < \infty$. Then,
    \begin{align*}
        \expect\, {\big|{\L_s\sbr{F_n} - \L_s\sbr{F}}\big|^2} &\leq \frac{4\norm{s}^2_\infty \p{\frac{p}{p-2}}^2 \E{}{\abs{\rv}^p}^{\frac{2}{p}}}{n}.
    \end{align*}
    \label{prop:consistency}
\end{restatable}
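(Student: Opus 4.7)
The plan is to pass through the $1$-Wasserstein distance $W_1(F_n, F)$ between the empirical and population CDFs, leveraging both equivalent forms. The quantile form \eqref{eqn:quantile_W} immediately dominates the $L$-functional deviation by $\norm{s}_\infty W_1(F_n, F)$, and the CDF form \eqref{eqn:cdf_W} then yields a second-moment estimate via binomial variances. Specifically, since $s$ is bounded, \eqref{eqn:quantile_W} gives
\[
\abs{\L_s\sbr{F_n} - \L_s\sbr{F}} = \abs{\int_0^1 s(t)\p{F_n^{-1}(t) - F^{-1}(t)} \d t} \leq \norm{s}_\infty \int_0^1 \abs{F_n^{-1}(t) - F^{-1}(t)} \d t = \norm{s}_\infty W_1(F_n, F),
\]
so, writing $M := \p{\expect\abs{\rv}^p}^{1/p}$, it suffices to prove $\expect\sbr{W_1(F_n, F)^2} \leq 4\p{p/(p-2)}^2 M^2/n$.

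To that end, I switch to \eqref{eqn:cdf_W}, expand the square via Fubini, and apply Cauchy--Schwarz pointwise in $(z_1, z_2)$:
\[
\expect\sbr{W_1(F_n, F)^2} = \int\int \expect\sbr{\abs{F_n(z_1) - F(z_1)}\,\abs{F_n(z_2)-F(z_2)}} \d z_1 \d z_2 \leq \p{\int \sqrt{\expect\sbr{\p{F_n(z)-F(z)}^2}}\, \d z}^2.
\]
Since $n F_n(z) \sim \Binom(n, F(z))$, we have $\expect\sbr{\p{F_n(z)-F(z)}^2} = F(z)(1-F(z))/n$, reducing the remaining task to bounding the deterministic integral $\int_{-\infty}^\infty \sqrt{F(z)(1-F(z))}\,\d z$.

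To close the argument, I split this integral at $\pm M$. On $\sbr{-M, M}$ use $\sqrt{F(1-F)} \leq 1$, contributing $2M$. Outside, bound $\sqrt{F(1-F)} \leq \sqrt{\min(F, 1-F)}$ and invoke Markov's inequality: $1-F(z) \leq M^p/z^p$ for $z > M$, and symmetrically $F(z) \leq M^p/\abs{z}^p$ for $z < -M$. The two tails together contribute $2\int_M^\infty M^{p/2}/z^{p/2}\,\d z = 4M/(p-2)$, for a total of $2Mp/(p-2)$; squaring gives the target constant $4\p{p/(p-2)}^2 M^2$. The main step is the Fubini-plus-Cauchy--Schwarz decoupling of $\expect\sbr{W_1^2}$ into a squared one-variable integral, since this is what breaks the correlation between the integrands; tracking the constant $p/(p-2)$ through the final tail integration is routine but must be done carefully to match the stated bound.
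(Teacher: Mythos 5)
Your proof is correct and follows essentially the same route as the paper: bound the $L$-functional deviation by $\norm{s}_\infty W_1(F_n,F)$ via the quantile form, switch to the CDF form to reduce $\expect[W_1^2]$ to $(n^{-1/2}\int\sqrt{F(1-F)}\,\d z)^2$, and bound that tail integral by splitting at $\pm(\expect|\rv|^p)^{1/p}$ and applying Markov's inequality. The only cosmetic difference is that you unpack the $L^2(\prob)$ triangle inequality (Minkowski's integral inequality) into an explicit Fubini-plus-Cauchy--Schwarz computation, whereas the paper invokes it directly.
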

\begin{proofsketch}
By boundedness of $s$ and \eqref{eqn:cdf_W},
\begin{align*}  
    \expect\, \big|{\L_s\sbr{F_n} - \L_s\sbr{F}\big|^2} = \expect\abs{\int_0^1 s(t) \cdot \p{F^{-1}_n(t) - F^{-1}(t)}\d t}^2 \leq \norm{s}_\infty^2 \cdot \expect\, \p{\int_{-\infty}^{+\infty} \abs{F_n(z) - F(z)} \d z}^2.
\end{align*}
Apply the triangle inequality on $L^2(\prob)$ to obtain
\begin{align*}
    \sqrt{\expect\, \p{\int_{-\infty}^{+\infty} \abs{F_n(z) - F(z)} \d z}^2} \leq \int_{-\infty}^{+\infty} \sqrt{\expect\,\abs{F_n(z) - F(z)}^2} \d z = n^{-1/2}\int_{-\infty}^{+\infty} \sqrt{F(z)(1 - F(z))} \d z,
\end{align*}
where the last step uses that for any $z \in \R$, we have $nF_n(z) \sim \Binom\p{n, F(z)}$ and compute its variance.
The remainder of the proof uses elementary concentration inequalities to bound $\int_{-\infty}^{+\infty} \sqrt{F(z)(1 - F(z))} \d z$ (see \Cref{sec:a:consistency}).
\end{proofsketch}

\Cref{prop:consistency} operates in general conditions that are of particular importance in optimization. To put this in context, a number of works provide non-asymptotic uniform learning bounds on spectral (and related) risks \cite{Maurer2020RobustUnsupervised, Khim2020uniformConvergence, Lee2020LearningBounds}. However, these approaches require boundedness of the random variable of interest, which eliminates any potential application to heavy-tailed losses. Asymptotic approaches proceed by assuming Lipschitz continuity of the spectrum $s$ \cite{SHAO1989Functional}, the trimming of $s$ (i.e. $s(t) = 0$ for all $t \in [0, \alpha) \cup (1-\alpha,1]$ with $0 < \alpha \leq 1$)  \cite{Shorack2017Probability, SHAO1989Functional}, or bounded derivatives of the population quantile function $F^{-1}$ \cite{XIANG1995Anoteon}. The $q$-superquantile does not even have a continuous spectrum, whereas the spectrum of the $r$-extremile is not Lipschitz for $1 \leq r < 2$. Because $s$ must be non-decreasing to achieve convexity (as we discuss in the upcoming \Cref{prop:cvxity}), trimming the upper tail of $s$ is not reflective of practice. Finally, because losses such as the square loss or logistic loss can grow to infinity, the derivative $F^{-1}(t)$ as $t \rightarrow \infty$ cannot be assumed to be bounded. 
\Cref{prop:consistency} only requires that the population losses satisfy a moment condition and holds without trimming or assumptions of boundedness or Lipschitz continuity on the spectrum. 

\begin{table}[t!]
    \centering
    \begin{tabular}{ccc}
    \toprule
	   \textbf{Risk} & \textbf{Spectrum} $s(t)$ & \textbf{$L$-functional} $\mathbb{L}_s(F)$ \\
        \hline
        \rule{0pt}{1.2\normalbaselineskip}
        Uniform & $1$ & $\mathbb{E}[\rv]$ \\
        $q$-Superquantile
        &  $\tfrac{\I{[q, 1]}{t}}{1-q}$ & $\mathbb{E}[\rv | \rv\geq F^{-1}(q)] $ \\
        $r$-Extremile
        & $rt^{r-1}$ & $\mathbb{E}[\max_{k = 1, ... r}\rv_k] $ \\
        $\rho$-ESRM
        & $\tfrac{\rho e^{-\rho}e^{\rho t}}{1 - e^{-\rho}} $ & N/A
        \\
        \bottomrule
    \end{tabular}
    \caption{Common spectral risk measures, with spectra $s(t)$, interpretation of the  $L$-statistics $\mathbb{L}_s(F)$ for $F$ the CDF of $\rv$. \label{tab:spectra}}
\end{table} 
\section{Stochastic Optimization Algorithms}\label{sec:optim}
We now consider the optimization of the regularized empirical $L$-risk objective
\begin{align}
   \msc{R}_\sigma(w) + \frac{\reg}{2} \|w\|_2^2 \quad \mbox{for} \  \msc{R}_\sigma(w)= \sum_{i=1}^n \sigma_i \ell_{(i)}(w).
    \label{eqn:orm2}
\end{align}
for $\reg >0$, $0 \leq \sigma_1 \leq \ldots \sigma_n\leq 1$ such that $\sum_{i=1}^n \sigma_i=1$ and $\ell_i$ convex. 

\myparagraph{Convexity and subdifferential}
As in ERM, the function $\msc{R}_\sigma$ is convex as long as each $\ell_i$ is convex, as we see next. Let $\partial f$ denote the subdifferential of a convex function $f$ and $a S_1 + b S_2 = \{a s_1 + b s_2 \, :\, s_1 \in S_1, \, s_2 \in S_2\}$ denote the Minkowski sum of sets $S_1, S_2$ with weights $a, b \in \reals$.
\begin{restatable}{prop}{cvxity} \label{prop:cvxity}
    If $\ell_1, \ldots, \ell_n$ are convex, the function $\msc{R}_\sigma$ is also convex, with subdifferential
    \[
    \partial \msc{R}_\sigma (w) = \conv \left(\bigcup_{\pi \in \argsort\p{\ell(w)}}\sum_{i=1}^n \sigma_i \partial \ell_{\pi(i)}(w) \right)\,,
    \]
    where
    $
    \argsort\p{\ell(w)} = \br{\pi: \ell_{\pi(1)}(w) \leq ... \leq \ell_{\pi(n)}(w)}.
    $
    Moreover, if each $\ell_i$ is $G$-Lipschitz continuous,
    $\msc{R}_\sigma$ is also $G$-Lipschitz continuous.
\end{restatable}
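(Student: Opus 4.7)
The plan is to rewrite $\msc{R}_\sigma$ as a pointwise maximum of affine combinations of the $\ell_i$'s, and then apply two standard convex-analytic facts: (i) a pointwise maximum of finitely many convex functions is convex, and (ii) the subdifferential of such a maximum at a point $w$ is the convex hull of the union of the subdifferentials of the active functions at $w$.

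The key algebraic step is the rearrangement inequality. Since $\sigma_1 \leq \cdots \leq \sigma_n$, for any real numbers $a_1, \ldots, a_n$ the quantity $\sum_{i=1}^n \sigma_i a_{\pi(i)}$ is maximized over permutations $\pi \in S_n$ precisely when $a_{\pi(1)} \leq \cdots \leq a_{\pi(n)}$. Applied to $a_i = \ell_i(w)$, this gives the identity
\begin{equation*}
\msc{R}_\sigma(w) \;=\; \sum_{i=1}^n \sigma_i \, \ell_{(i)}(w) \;=\; \max_{\pi \in S_n} \sum_{i=1}^n \sigma_i \, \ell_{\pi(i)}(w),
\end{equation*}
with the maximum attained exactly on the set $\argsort(\ell(w))$. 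Because each $\sigma_i \geq 0$ and each $\ell_i$ is convex, each function $w \mapsto \sum_{i=1}^n \sigma_i \ell_{\pi(i)}(w)$ is convex, and hence so is $\msc{R}_\sigma$ as a finite pointwise maximum of convex functions.

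For the subdifferential formula, I would invoke the standard characterization of the subdifferential of a max of finitely many convex functions (see, e.g., Hiriart-Urruty and Lemar\'echal): if $f(w) = \max_{\pi \in S_n} f_\pi(w)$ with each $f_\pi$ convex (and finite-valued on $\reals^d$, which holds here), then $\partial f(w) = \conv\bigcup_{\pi \in \Pi^*(w)} \partial f_\pi(w)$, where $\Pi^*(w) = \{\pi : f_\pi(w) = f(w)\}$. With $f_\pi(w) = \sum_i \sigma_i \ell_{\pi(i)}(w)$, the active set is exactly $\argsort(\ell(w))$, and the Moreau-Rockafellar sum rule (which applies since each $\sigma_i \ell_{\pi(i)}$ is finite-valued and convex on $\reals^d$) yields $\partial f_\pi(w) = \sum_{i=1}^n \sigma_i \,\partial \ell_{\pi(i)}(w)$. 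Combining these gives the stated formula.

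Finally, for Lipschitz continuity, if each $\ell_i$ is $G$-Lipschitz then $\|g\| \leq G$ for every $g \in \partial \ell_i(w)$. Any element of $\partial \msc{R}_\sigma(w)$ is a convex combination of vectors of the form $\sum_i \sigma_i g_{\pi(i)}$ with $g_{\pi(i)} \in \partial \ell_{\pi(i)}(w)$; by the triangle inequality and $\sum_i \sigma_i = 1$, such a vector has norm at most $G$, so $\msc{R}_\sigma$ is $G$-Lipschitz. The only subtle step is the rearrangement identity for the max representation, but it is a classical fact; the rest is a direct assembly of standard convex-analysis tools.
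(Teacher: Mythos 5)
Your proof is correct and follows essentially the same route as the paper's: both rewrite $\msc{R}_\sigma$ as the pointwise maximum over permutations $\pi$ of the convex functions $w \mapsto \sum_i \sigma_i \ell_{\pi(i)}(w)$ (justified by the monotonicity of $\sigma$, i.e., the rearrangement inequality), then invoke the Hiriart-Urruty--Lemar\'echal characterization of the subdifferential of a finite pointwise max together with the sum rule, and bound subgradient norms by $G$ for the Lipschitz claim. The only cosmetic difference is that you name the rearrangement inequality and the Moreau--Rockafellar sum rule explicitly where the paper invokes them implicitly under a single citation.
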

Convexity crucially relies on $\sigma_i$'s being non-decreasing. If each $\ell_i$ is differentiable, the function $\msc{R}_\sigma$ is differentiable almost everywhere, as $\argsort\p{\ell(w)}$ is a singleton at almost all $w \in \R^d$. 
The objective \new{can be non-differentiable at vectors $w\in \R^d$ leading up to ties in the losses such as  $\ell_i(w) = \ell_j(w)$ for $i \neq j$.}

\myparagraph{Computing subgradients}
\Cref{prop:cvxity} also gives us a simple recipe to retrieve some $g \in \partial \msc{R}_\sigma(w)$ with a differentiable programming framework like JAX or PyTorch: 
(i) compute the losses $\ell_i(w)$,
(ii) sort the losses to get $\ell_{\pi(1)}(w), ..., \ell_{\pi(n)}(w)$,
(iii) compute the weighted sum of the sorted losses $\sum_i \sigma_i \ell_{\pi(i)}(w)$, and
(iv) access the gradient $g = \sum_i \sigma_i \grad \ell_{\pi(i)}(w)$ at the sorting given by $\pi$ using automatic differentiation.
We can write this in PyTorch as:
\begin{lstlisting}[language=Python]
l = compute_losses(w)
l_ord = torch.sort(l)[0]
risk = torch.dot(sigmas, l_ord)
g = torch.autograd.grad(risk, w)[0]
\end{lstlisting}
The dependence of the sorting permutation $\pi$ on $w$ is not recorded in the computation graph. 
Multiple options for $\pi$ occur with probability zero if the losses are continuous random variables, though if they do, we select one arbitrarily.

\myparagraph{Stochastic subgradient method (SGD)}
\begin{algorithm}[t]
\caption{
Stochastic Subgradient Method (SGD)
}\label{alg:sgd}
\begin{algorithmic}[1]
\Require Number of iterates $T$, minibatch size $m$, learning rate sequence $(\stepsize\pow{t})_{t=1}^T$, spectrum $s$, oracles $(\ell_i)_{i=1}^n$ and $(\grad \ell_i)_{i=1}^n$, regularization $\reg >0$.
\State Initialize $w\pow{0} = 0 \in \R^d$.
\State Compute $\hat\sigma_1, ..., \hat\sigma_m$, where $\hat{\sigma_j} := \int_{(j-1)/m}^{j/m} s(t) \d t$.
\label{alg:sgd:discretize}
\For{$t = 0, ..., T - 1$}
    \State Sample without replacement $(i_1, ..., i_m){\subseteq} [n]$.\label{alg:sgd:sample}
    \State Select $\pi \in \argsort\p{\ell_{i_1}(w\pow{t}), ..., \ell_{i_m}(w\pow{t})}$.
    \label{alg:sgd:sort}
    \State Set $v_m\pow{t} = \sum_{j=1}^m \hat{\sigma}_j \nabla \ell_{i_{\pi(j)}}\p{w\pow{t}}$.
    \label{alg:sgd:gradient}
    \State Set $w\pow{t+1} = (1 - \stepsize\pow{t} \reg) w\pow{t} - \stepsize\pow{t} v_m\pow{t}$.
    \label{alg:sgd:update}
\EndFor
\State \Return $\bar w\pow{T} = \frac{1}{T}\sum_{t=0}^{T-1} w \pow t$.
\end{algorithmic}
\end{algorithm}
A baseline approach is the stochastic subgradient method displayed in \Cref{alg:sgd}; we refer to this as (minibatch) SGD for convenience. Given a minibatch size $m$, the method discretizes the spectrum $s$ into $m$ bins (line \ref{alg:sgd:discretize}) instead of $n$ (as in objective \eqref{eqn:orm2}). We then sample $m$ indices $\{i_1, ..., i_m\}$ randomly sampled from $\{1, ..., n\}$ (line \ref{alg:sgd:sample}). We retrieve a sorting permutation $\pi: [m] \rightarrow [m]$ satisfying $\ell_{i_{\pi(1)}} \leq \ldots \leq \ell_{i_{\pi(m)}}$ (line \ref{alg:sgd:sort}) and use it to compute the update direction $v_m\pow{t}$ (line \ref{alg:sgd:gradient}). 
While the per-iteration cost is $m$ gradient evaluations and $O(md)$ time complexity, \Cref{alg:sgd} can fail to minimize the true objective $\msc{R}_\sigma$ for non-uniform $s$ due to the bias of the minibatch estimate. For instance, at the extreme $m=1$, notice that $\hat{\sigma}_1 = \hat{\sigma}_m = 1$ and the subgradient estimate corresponds to $\nabla \ell_{i}(w)$ for some $i$. This is an unbiased gradient estimate of the ERM objective rather than $\msc{R}_\sigma$, reducing the algorithm to standard SGD. For non-uniform $s$, the bias can only be fully avoided at $m = n$, recovering the full batch subgradient method.

\myparagraph{SGD analysis}
To analyze (biased) minibatch SGD, we make the following standard assumptions.
\begin{asm}
    Let $\ell_1, ..., \ell_n$ be $G$-Lipschitz continuous, differentiable, convex functions defining an objective $\regobj(w)= \msc{R}_\sigma(w) + \reg \|w\|_2^2/2$ as in~\eqref{eqn:orm2} with $\sigma_i$ non-negative, non-decreasing, summing up to 1.
    \label{asm:sgd_standard}
\end{asm}
Let $u(t) := \I{(0, 1)}{t}$ be the uniform distribution on $(0, 1)$.
We have the following convergence guarantee.
\begin{restatable}{prop}{sgd}
Given \Cref{asm:sgd_standard}, the output $\bar{w}\pow{T}$ of \cref{alg:sgd} with $\stepsize\pow{t} = \frac{1}{\mu (t+1)}$ satisfies
    \begin{align*}
        \E{}{\regobj\p{\bar{w}\pow{T}}} - \regobj(w^*) \leq \underbrace{2C_s B\frac{n-m}{n}}_{\text{bias term}} + \underbrace{\frac{2 G^2 (1 + \log T)}{T}}_{\text{optimization term}}.
    \end{align*}
    where $w^* =\argmin_{w\in \reals^d} \regobj(w)$, 
    $C_s = \sup_{t \in (0, 1)} \abs{s(t) - u(t)}$, and
    $
    B = \sup_{w: \norm{w}_2 \leq G/\mu} \ell_{(n)}(w) - \ell_{(1)}(w) < \infty.
    $
    The expectation is taken over the sampling of each minibatch.
    \label{prop:sgd}
\end{restatable}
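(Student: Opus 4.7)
The plan is to introduce an auxiliary \emph{surrogate} risk that the minibatch update genuinely minimizes in expectation, then split the suboptimality into a bias gap plus a standard strongly-convex SGD error. For a random size-$m$ subset $I \subseteq [n]$ drawn uniformly without replacement, let $F_{m,I}$ be the empirical CDF of $\{\rv_i(w) : i \in I\}$ and define
$$\surrobj(w) := \E{I}{\sum_{j=1}^m \hat{\sigma}_j\, \ell_{(j)}^I(w)} = \int_0^1 s(t)\,\E{I}{F_{m,I}^{-1}(t)}\,dt,$$
with $\surrobjreg(w) := \surrobj(w) + \reg \norm{w}_2^2/2$. By \Cref{prop:cvxity} applied to each random minibatch, $v_m\pow{t}$ is a subgradient of the minibatch risk, so conditionally it is an unbiased subgradient estimator of $\surrobj$ at $w\pow{t}$; and $\norm{v_m\pow{t}} \leq G$ since $\sum_j \hat{\sigma}_j = 1$ and each $\norm{\nabla \ell_i} \leq G$. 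The triangle inequality then gives
$$\primobjreg(\bar w\pow{T}) - \primobjreg(w^*) \leq \underbrace{\sbr{\surrobjreg(\bar w\pow{T}) - \surrobjreg(w^*)}}_{\text{(B): optimization gap}} + 2\sup_w \abs{\primobj(w) - \surrobj(w)},$$
using that the $\reg$-term cancels inside the supremum.

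For the bias control, the identities $\int_0^1 s(t)\,dt = \int_0^1 u(t)\,dt = 1$ and $\E{I}{\int_0^1 u(t) F_{m,I}^{-1}(t)\,dt} = \int_0^1 u(t) F_n^{-1}(t)\,dt$ (linearity of expectation: the minibatch mean is unbiased for the full mean) yield
$$\primobj(w) - \surrobj(w) = \int_0^1 (s(t) - u(t))\bigl(F_n^{-1}(t) - \E{I}{F_{m,I}^{-1}(t)}\bigr)\,dt.$$
Bounding $\abs{s(t) - u(t)} \leq C_s$, applying Jensen's inequality, and invoking the quantile form \eqref{eqn:quantile_W} of $W_1$ gives $\abs{\primobj(w) - \surrobj(w)} \leq C_s\,\E{I}{W_1(F_n, F_{m,I})}$. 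It then suffices to show $\E{I}{W_1(F_n, F_{m,I})} \leq B(n-m)/n$ uniformly over the ball $\{w : \norm{w}_2 \leq G/\reg\}$ that contains every SGD iterate for this step-size schedule (a standard consequence of $\reg$-strong convexity and $G$-Lipschitzness). A direct route is the CDF form \eqref{eqn:cdf_W} with the representation $F_n(z) - F_{m,I}(z) = \sum_i \bigl(\tfrac{1}{n} - \tfrac{\mathbbm{1}\{i \in I\}}{m}\bigr)\mathbbm{1}\{\rv_i(w) \leq z\}$, exploiting that the integrand is supported on $[\ell_{(1)}(w), \ell_{(n)}(w)]$ of width at most $B$ and that the summation vanishes at $m=n$.

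The third step is standard. Since $\surrobjreg$ is $\reg$-strongly convex and $v_m\pow{t}$ is an unbiased subgradient of $\surrobj$ with $\norm{v_m\pow{t}} \leq G$, the classical non-smooth strongly-convex SGD analysis with step size $\stepsize\pow{t} = 1/(\reg(t+1))$ and uniform iterate averaging yields $\E{}{\surrobjreg(\bar w\pow{T})} - \min_w \surrobjreg(w) \leq 2G^2(1 + \log T)/T$ (modulo the usual $1/\reg$ factor absorbed via the iterate-boundedness argument). Combining this with the bias bound produces the claimed estimate.

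The main obstacle is the bias inequality $\E{I}{W_1(F_n, F_{m,I})} \leq B(n-m)/n$: the intuition ``$(n-m)/n$ is the fraction of unseen samples, each missing a mass of range at most $B$'' is natural, but making it rigorous requires using the hypergeometric structure of sampling without replacement so that the bound collapses exactly at $m=n$, and converting the $L^1$-in-$z$ integral of the CDF gap into the range factor $B$ times the missing-mass ratio. A secondary technical care is that $\E{I}{v_m\pow{t}}$ should lie in $\partial \surrobj(w\pow{t})$ even at configurations with ties among losses (a $\prob$-null event, but needed to plug cleanly into the SGD template).
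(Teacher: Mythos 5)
Your proposal mirrors the paper's proof in architecture: you introduce the same surrogate $\surrobj(w)=\E{}{\mbobj(w)\mid w}$, make the same split of the excess risk into a bias gap over $\W=\{w:\norm{w}_2\le G/\mu\}$ plus a standard strongly-convex SGD optimization term, and reduce the bias to a $1$-Wasserstein estimate between the minibatch and full-batch empirical loss distributions exactly as in \Cref{lem:mini_bias} and \Cref{coro:bias}. The one place you genuinely diverge is the proof of the per-realization bound $W_1(F_n,F_{m,I})\le ((n-m)/n)\,(\ell_{(n)}(w)-\ell_{(1)}(w))$: the paper's \Cref{lem:wasserstein_bound} constructs an explicit coupling (and for simplicity assumes $m\mid n$), while you propose the CDF form \eqref{eqn:cdf_W} together with the representation $F_n(z)-F_{m,I}(z)=\sum_i\bigl(\tfrac1n-\tfrac{\mathbbm{1}\{i\in I\}}{m}\bigr)\mathbbm{1}\{\ell_i(w)\le z\}$. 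You flagged this as your main open step; it does close. The coefficients with $i\notin I$ are each $1/n>0$ and sum to $(n-m)/n$, while those with $i\in I$ are each $1/n-1/m\le 0$ and sum to $-(n-m)/n$, so $\sup_z\abs{F_n(z)-F_{m,I}(z)}\le (n-m)/n$ for \emph{every} realization of $I$; multiplying by the width $\ell_{(n)}(w)-\ell_{(1)}(w)\le B$ of the integrand's support (the CDF gap vanishes outside $[\ell_{(1)}(w),\ell_{(n)}(w))$) gives the bound deterministically, and in particular without the divisibility assumption the paper invokes. Everything else in your sketch — unbiasedness of $v_m\pow{t}$ for $\partial\surrobj(w\pow{t})$ via \Cref{prop:cvxity} and \Cref{lem:unbiased_spectrum_main}, cancellation of the $\mu$-term inside the supremum, containment of the iterates and of both minimizers in $\W$, and the strongly-convex non-smooth SGD rate of \Cref{lem:nonsmooth_sgd} — coincides with the paper's argument, so the proposal is sound.
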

\begin{proofsketch}
    Given a minibatch $i_1, \ldots, i_m$, let $\ell_{i_{(1)}}(w) \leq \ldots \leq \ell_{i_{(m)}}(w)$ be the order statistics of the losses. Define $\mbobj(w) := \sum_{j=1}^m \hat{\sigma}_j \ell_{i_{(j)}}(w)$. Consider the surrogate objective $\surrobj(w) := \E{}{\mbobj(w) \mid w}$, where the expectation is taken over the randomness in the minibatch indices $i_1, \ldots, i_m$. We observe that the update directions $v_m\pow{t}$ of \Cref{alg:sgd} are \emph{unbiased} estimates for the gradient of $\surrobj(w\pow{t})$. For $\surrobjreg(w) = \surrobj(w) + \tfrac{\reg}{2}\norm{w}_2^2$, after enough iterations, we have
    \begin{align*}
        \surrobjreg(\bar{w}\pow{T}) \approx \min_{w \in \R^d} \surrobjreg(w)
    \end{align*}
    with error quantified by the optimization term.
    Letting $\W = \{w \in \R^d: \norm{w}_2 \leq G / \mu\}$, we also show that 
    \begin{align*}
        \surrobjreg(w) - \min_{w' \in \W} \surrobjreg(w') \approx \primobjreg(w) - \min_{w' \in \W} \primobjreg(w')
    \end{align*}
    for any $w \in \W$, quantified by the bias term. After showing that the minimizers of $\primobjreg$ and $\surrobjreg$ over $\R^d$ as well as $w\pow{T}$ are contained in $\W$, we sum the two errors to give the final result.
\end{proofsketch}
In \Cref{prop:sgd}, notice that the bias term can be reduced either by decreasing $C_s$ (by pushing $s$ closer to uniformity, hence ERM), or decreasing $(n - m)/n$ (by increasing the minibatch size). The optimization term is standard for SGD on convex, Lipschitz objectives with strongly convex regularizers. 

\begin{algorithm}[t]
\caption{
\osvrg
}\label{alg:osvrg-u:theory-main}
\begin{algorithmic}[1]
\Require Number of iterations $T$, loss functions $(\ell_i)_{i=1}^n$ and their gradient oracles, initial point ${w}\pow{0}$, learning rate $\eta$, sorting update frequency $N$, spectrum $(\sigma_i)_{i=1}^n$,  probability of checkpointing $q^*$, regularization $\reg$
\For{$t = 0, ..., T - 1$}
    \If {$t\mod N = 0$} 
    \Comment{Update weights}
        \State Select $\pi \in \argsort\p{\ell_1(w\pow{t}),\ldots, \ell_n(w\pow t}$. \label{line:osvrg_sorting_checkpoint}
        \State Update $\dualvar\pow{t} = (\sigma_{\pi^{-1}(i)})_{i=1}^n$.
        \label{line:osvrg_weights_checkpoint}
        \label{line:osvrg-dual-update}
    \Else 
        \State $\dualvar\pow{t} = \dualvar\pow{t-1}$.
    \EndIf
    \State Sample $q_t \sim \text{Unif}([0, 1])$.
    \If{$t \mod N = 0$ or $q_t \le q^*$} \Comment{Checkpoint} \label{line:osvrg_random_checkpoint}
        \State Set $\bar w\pow{t} = w\pow{t}$. \label{line:osvrg_checkpoint_var}
        \State $\bar g\pow{t} = \sum_{i=1}^n \dualvar_i\pow{t} \grad \ell_i(\bar w\pow{t}) $. \label{line:osvrg_checkpoint_grad}
    \Else
        \State $\bar w\pow{t} = \bar w\pow{t-1}$ and 
        $\bar g\pow{t} = \bar g\pow{t-1}$.
    \EndIf
    \State Sample $i_{t} \sim \text{Unif}([n])$.
    \State $v^{(t)} = n\dualvar_{i_t}\pow{t} \nabla \ell_{i_{t}}(w^{(t)}) - n\dualvar_{i_t}\pow{t} \nabla \ell_{i_{t}}(\bar{w}\pow{t}) + \bar{g}\pow{t}$. \label{line:osvrg_main_iter1}
    \State $w^{(t+1)} = (1-\eta \reg) w^{(t)} - \eta v^{(t)}$.
    \label{line:osvrg_main_iter2}
\EndFor
\State \Return $w\pow{T}$
\end{algorithmic}
\end{algorithm}

\myparagraph{\osvrg algorithm}
To circumvent the per-iteration cost of full batch algorithms, we consider adapting the SVRG method~\cite{Johnson2013Accelerating} for ERM to account for the ordering of the losses, leading to the \osvrg algorithm presented in \cref{alg:osvrg-u:theory-main}. Overall, the algorithm consists of considering the objective $\sum_{i=1}^n \sigma_i \ell_{(i)}(w) + \mu \|w\|_2^2/2$ as a weighted average $\frac{1}{n}\sum_{i=1}^n (n\sigma_{\pi^{-1}(i)} \ell_i(w) + \mu \|w\|_2^2/2)$ for $\pi \in \argsort(\ell(w))$ and to run epochs of a q-SVRG~\cite{hofmann2015variance} algorithm on an objective of the form $\frac{1}{n}\sum_{i=1}^n (n\sigma_{\hat \pi^{-1}(i)} \ell_i(w) + \mu \|w\|_2^2/2)$ for $\hat \pi$ the ordering of losses computed at some regular checkpoints. 

Concretely, with frequency $N$ starting with the first iterate, we compute (i) the $n$ losses at the current iterate to define a vector of weights $\lambda \pow t$ associated to the empirical ordered statistics at that point in lines~\ref{line:osvrg_sorting_checkpoint} and~\ref{line:osvrg_weights_checkpoint}, and (ii) store the current iterate as a checkpoint $\bar w \pow t$ together with the average gradients of the losses $\bar g \pow t$ at that checkpoint in lines~\ref{line:osvrg_checkpoint_var} and~\ref{line:osvrg_checkpoint_grad}. In addition, with probability $q^*$ at each iteration we update the checkpoint $\bar w \pow t$ and the associated average gradients $\bar g \pow t$ as per rule of line~\ref{line:osvrg_random_checkpoint}, without updating the weights $\lambda\pow t$. The main iteration of the algorithm in lines~\ref{line:osvrg_main_iter1} and~\ref{line:osvrg_main_iter2}  is a variance-reduced gradient step akin to SVRG on an objective of the form $\frac{1}{n}\sum_{i=1}^n (n \bar\lambda_i \ell_i(w)+\reg \|w\|_2^2/2)$ where $\bar \lambda = \lambda\pow t$ are the current weights. 

\myparagraph{\osvrg analysis}
To account for the non-differentiability of the sorting operation in the convergence analysis, we consider a variant of the \osvrg algorithm that operates on the smooth approximation $\lstat_{\nu \Omega}$ of the empirical $L$-statistic $\lstat(\lossval) = \sum_{i=1}^n \sigma_i\lossval_{(i)}$, defined using a strongly convex function $\Omega$ as~\citep{nesterov2005smooth, beck2012smoothing} 
\[
    \lstat_{\nu \Omega} (\lossval) := \max_{\dualvar \in \mathcal{P}(\sigma)} \left\{\lossval^\top \dualvar - \nu \Omega(\dualvar)\right\},
\]
where  $\mathcal{P}(\sigma) = \{ \dualvar = \Pi \sigma: \Pi\ones = \ones, \Pi^\top \ones = \ones, \Pi \in [0,1]^{n\times n}  \}$ is the permutahedron generated by $\sigma$. 
The original $L$-statistic is obtained as $\nu \to 0$ since $h(\lossval) = \max_{\dualvar \in \Pcal(\sigma)} \lossval\T\dualvar$; this follows from the $\sigma_i$'s being non-decreasing. 
The implementation of the smooth approximation of the empirical $L$-statistic and its gradient is given by solving an
isotonic regression problem
at a cost of $O(n\log n)$ elementary computations; see \cref{sec:a:smoothing}.

The resulting smooth surrogate of \eqref{eqn:orm2} is
\begin{align} \label{eq:orm:smooth}
  \primobjsmooth{\mu,\nu\Omega}(w) = \lstat_{\nu\Omega}\big(\ell(w)\big) + 
 \frac{\mu}{2}\norm{w}^2,
\end{align}
for $\ell(w) = (\ell_1(w), \ldots, \ell_n(w))$.
The smoothed version of \osvrg we analyze computes the weights in line~\ref{line:osvrg-dual-update} as $\lambda\pow{t} = \grad h_{\nu \Omega}(\ell(w\pow{t-1}))$. Note that this update recovers the original one in \Cref{alg:osvrg-u:theory-main} as $\nu \to 0$ when the losses $\ell_i(w\pow{t})$ are unique.
Under appropriate smoothness assumptions and choice of the smoothing parameter $\nu$, this variant of \osvrg converges linearly to the minimizer of the smoothed objective.

\begin{theorem}\label{thm:main_osvrg}
Consider the smooth objective \eqref{eq:orm:smooth}
where each $\ell_i$ is convex, $G$-Lipschitz continuous and $L$-smooth, and $\Omega(\lambda) = \|\lambda - \ones/n\|_2^2/2$.
Consider the sequence $(w\pow{t})$ generated by the smoothed variant of \osvrg with inputs $\nu  \geq 4n G^2/\reg$, 
$N = 4(n+ 8\kappa)$,
$\stepsize = 2 / ((n+8\kappa)\reg)$
where $\kappa = n\sigma_{\max} L / \reg + 1$
is a condition number.
We have that $w\pow{t}$ converges to $w^* = \argmin_{w \in \reals^d} \primobjsmooth{\reg,\nu\Omega}(w)$ as 
\[
    \expect \|{w\pow{kN}-w^*}\| \le \left(1/2 \right)^{k} \|{w\pow{0} - w^*}\|
\]
for $k \in \mathbb{N}$.
Consequently, \osvrg can produce a point $\hat w$ satisfying $\big(\expect\norm{\hat w - w^*})^2 \le \eps$ in
\[
    T \le C (n + \kappa) \log 
    \left( \|w \pow{0} - w^*\|_2^2/\eps \right) 
\]
gradient evaluations, where $C$ is an absolute constant.
\end{theorem}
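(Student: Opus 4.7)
The plan is to decompose the algorithm into alternating regimes---$q$-SVRG epochs of length $N$ with frozen dual weights $\dualvar$, and updates $\dualvar \leftarrow \grad \lstat_{\nu\Omega}(\ell(w))$ at the checkpoint iterations (multiples of $N$)---analyze each separately, and combine them into a single per-epoch contraction toward $w^*$. Between two consecutive checkpoints, $\dualvar\pow{t}$ is constant and the updates in lines~\ref{line:osvrg_main_iter1}--\ref{line:osvrg_main_iter2} are precisely the $q$-SVRG iterations applied to the finite-sum problem $F_\dualvar(w) = \frac{1}{n}\sum_{i=1}^n \big[n\dualvar_i \ell_i(w) + \frac{\reg}{2}\|w\|_2^2\big]$. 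Since $\dualvar \in \Pcal(\sigma)$ is a convex combination of permutations of $\sigma$, we have $\|\dualvar\|_\infty \leq \sigma_{\max}$, so each summand is $n\sigma_{\max}L$-smooth while $F_\dualvar$ is $\reg$-strongly convex---matching the condition number $\kappa$ in the statement. I would then invoke the standard $q$-SVRG analysis of~\citet{hofmann2015variance} under the theorem's choice of $N$, $\stepsize$, and (implicitly) $q^*$ to obtain, conditionally on $w\pow{kN}$,
\[
    \expect \|w\pow{(k+1)N} - w^*_{\dualvar\pow{kN}}\|_2^2 \leq c_1 \, \|w\pow{kN} - w^*_{\dualvar\pow{kN}}\|_2^2,
\]
where $w^*_\dualvar := \argmin_w F_\dualvar(w)$, with the constants tuned so that $c_1 \leq 1/25$.

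Second, I would introduce the fixed-point map $T(w) := w^*_{\dualvar^\star(w)}$ with $\dualvar^\star(w) := \grad \lstat_{\nu\Omega}(\ell(w))$. By Danskin's theorem, the first-order optimality condition of \eqref{eq:orm:smooth} yields $w^* = T(w^*)$. Since $\nu\Omega$ is $\nu$-strongly convex on the permutahedron, the conjugate-duality argument of~\citep{nesterov2005smooth,beck2012smoothing} implies that $\lstat_{\nu\Omega}$ is $(1/\nu)$-smooth; stacking the $G$-Lipschitz losses gives $\|\ell(w)-\ell(w')\|_2 \leq \sqrt n\,G\,\|w-w'\|_2$, so $\dualvar^\star$ is $(\sqrt n\, G/\nu)$-Lipschitz. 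Strong convexity of $F_\dualvar$ shows that $\dualvar \mapsto w^*_\dualvar$ is $(\sqrt n\, G/\reg)$-Lipschitz, since a perturbation $\Delta\dualvar$ shifts the stationarity condition by $\|\sum_i \Delta\dualvar_i \grad\ell_i(w^*_\dualvar)\|_2 \leq \sqrt n\, G\|\Delta\dualvar\|_2$. Composing yields $\mathrm{Lip}(T) \leq nG^2/(\reg \nu) \leq 1/4$ under the assumption $\nu \geq 4nG^2/\reg$.

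Finally, I would combine the two estimates. The checkpoint rule forces $\dualvar\pow{kN} = \dualvar^\star(w\pow{kN})$, hence $w^*_{\dualvar\pow{kN}} = T(w\pow{kN})$. Letting $e_k := \sqrt{\expect\|w\pow{kN}-w^*\|_2^2}$ and applying an $L^2(\prob)$ triangle inequality together with $\|w\pow{kN} - T(w\pow{kN})\| \leq (1+\mathrm{Lip}(T))\|w\pow{kN} - w^*\|$ and $\|T(w\pow{kN}) - w^*\| \leq \mathrm{Lip}(T)\,\|w\pow{kN} - w^*\|$ (both using $w^* = T(w^*)$) gives
\[
    e_{k+1} \leq \bigl(\sqrt{c_1}\,(1+\mathrm{Lip}(T)) + \mathrm{Lip}(T)\bigr)\, e_k \leq \tfrac{1}{2}\, e_k,
\]
since $(1/5)(5/4) + 1/4 = 1/2$. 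Iterating $k$ times and applying Jensen's inequality to pass from $\|\cdot\|_2^2$ to $\|\cdot\|_2$ yields the stated linear rate; the complexity claim then follows by counting $O(n+\kappa)$ gradient evaluations per epoch ($O(n)$ for the deterministic checkpoint and $O(N) = O(n+\kappa)$ for the inner iterations) and $k = O(\log(\|w\pow{0}-w^*\|_2^2/\eps))$ epochs to reach accuracy $\eps$. The main obstacle is the quantitative $q$-SVRG analysis: the contraction $c_1$ must be driven below $1/25$ so that it absorbs the $(1+\mathrm{Lip}(T))$ factor coming from the triangle inequality, which is exactly what forces the specific choices $N = 4(n+8\kappa)$, $\stepsize = 2/((n+8\kappa)\reg)$, and a matching $q^* = \Theta(1/N)$; the rest is a standard combination of strong-convexity perturbation bounds and smoothing-duality estimates.
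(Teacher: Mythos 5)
Your proposal is correct and follows essentially the same route as the paper's proof: both view the algorithm as $q$-SVRG epochs on $F_{\dualvar}$ with $\dualvar$ frozen, invoke the \citet{hofmann2015variance} contraction to get the factor $1/5$ after Jensen, bound the two Lipschitz constants $\|\dualvar^\star(w)-\dualvar^\star(w')\|\le(\sqrt{n}G/\nu)\|w-w'\|$ and $\|w^*_\dualvar-w^*_{\dualvar'}\|\le(\sqrt{n}G/\reg)\|\dualvar-\dualvar'\|$, and combine via a triangle inequality whose arithmetic yields $1/2$. Your fixed-point map $T(w)=w^*_{\dualvar^\star(w)}$ with Danskin's theorem is just a repackaging of the paper's saddle-point/strong-duality argument for $w^*=T(w^*)$, and the coefficient $\sqrt{c_1}(1+\mathrm{Lip}\,T)+\mathrm{Lip}\,T$ equals the paper's $\sqrt{c_1}+(1+\sqrt{c_1})\mathrm{Lip}\,T$ after rearrangement.
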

\begin{proofsketch}
    Consider 
    \[
        \saddleobj(w, \dualvar) := 
        \sum_{i=1}^n \dualvar_i \ell_i(w) + \frac{\reg}{2} \norm{w}^2 - \nu \Omega(\dualvar)\,,
    \]
    so that $\primobjsmooth{\mu, \nu\Omega}(w) =  \max_{\dualvar \in \Pcal(\sigma)} \saddleobj(w, \dualvar)$.
    We interpret \Cref{alg:osvrg-u:theory-main} as trying to find the unique saddle point $(w^*, \dualvar^*)$ of $\saddleobj$ by alternating the updates $\dualvar\pow{k} = \argmax_{\dualvar \in \Pcal(\sigma)} \saddleobj(w\pow{k}, \lambda)$ and 
    $w\pow{k+1} \approx w\pow{k+1}_* := \argmin_w \saddleobj(w, \dualvar\pow{k})$ using $N$ steps of q-SVRG. 
    An error analysis of the latter yields 
    \[
    \expect_k\|w\pow{k+1} - w\pow{k+1}_*\|
    \le \frac{1}{5} \|w\pow{k} - w\pow{k+1}_*\| \,,
    \]
    where $\expect_k$ denotes an expectation conditioned on the sigma-algebra generated by $w\pow{k}$.
    Smoothness and strong convexity/concavity of $\saddleobj$ gives
    \[
        \|w\pow{k+1}_* - w^*\| 
        \le \frac{\sqrt{n}G}{\reg} \|\dualvar\pow{k} - \dualvar^*\| 
        \le \frac{nG^2}{\reg \nu} \|w\pow{k} - w^*\| \,.
    \] 
    Putting these together with the triangle inequality and $nG^2/(\reg \nu) \le 1/4$ completes the proof.
\end{proofsketch}
The approximation error induced by the smooth approximation can be controlled by the smoothing coefficient $\nu$. For any non-negative, strongly convex, decomposable function $\Omega$ we have $0 \leq \msc{R}_{\sigma, \mu}(w) - \msc{R}_{\sigma, \mu , \nu \Omega}(w)  \leq \nu \Omega(\sigma)$. The quantity $\Omega(\sigma)$ can then itself be bounded in terms of a divergence of $s$ to the uniform distribution. In particular, using a centered negative entropy as $\Omega$, we have $\Omega(\sigma) \leq \operatorname{KL}(s\Vert u)$, Kullback-Leibler divergence from $s$ to $u$. On the other hand, using a centered squared Euclidean norm as in \cref{thm:main_osvrg}, we get $\Omega(\sigma)\leq \chi^2(s \Vert u)/n$, the $\chi^2$-divergence. See \Cref{sec:a:smoothing} for details.
In summary, if a point $\hat w$ is an $\varepsilon/2$-accurate minimizer of the smoothed objective, i.e., $\msc{R}_{\sigma, \mu, \nu\Omega}(\hat w) - \min_{w \in \reals^d} \msc{R}_{\sigma, \mu, \nu\Omega}(w)\leq \varepsilon$, then it is a $\varepsilon/2 + \nu \chi^2(s\Vert u)/n$-approximate one on the original non-smooth objective when choosing $\Omega$ as in \cref{thm:main_osvrg}.

Combining the smoothing error with the requirement $\nu > O(nG^2/\reg)$ of \Cref{thm:main_osvrg}, we get an end-to-end bound on the original non-smooth objective when $\varepsilon > G^2 \chi^2(s \Vert u) / \reg$. 
However, 
as we show empirically in \Cref{sec:a:additional}, smoothing has a minimal impact on the empirical behavior of \osvrg. 
While the non-smoothness of the empirical $L$-statistic is an obstacle for the theoretical convergence of \osvrg, this non-smoothness may not impact the empirical behavior. Indeed, if the minimizer of the objective has distinct loss values, then the objective is locally smooth around the minimizer. 

\myparagraph{Time complexity}
In practice, we consider simply taking $N=n$ and $q^*=0$ to simplify the hyperparameter choices and reduce the overall time complexity. In that case, the time complexity of \osvrg is $O(d)$ per iteration with 2 gradient evaluations, which is identical to the number of gradient calls of the biased subgradient method with batch size $m=2$. \osvrg also requires $n$ gradient evaluations and sorting at the start of an epoch, contributing an additional $O(nd + n\log n)$ time. This per-epoch complexity is nearly identical to vanilla SVRG in the ERM case. \osvrg, like vanilla SVRG, also requires an additional storage of $O(d)$ to store $\bar{g}_k \in \partial\msc{R}_{\sigma, \reg}(\bar{w}_k)$ as compared to the stochastic subgradient method. Run times are evaluated experimentally in \Cref{sec:a:additional}.
 
\section{Experimental Results}\label{sec:experiments}
\begin{figure}[t]
    \centering
    \includegraphics[width=0.9\linewidth]{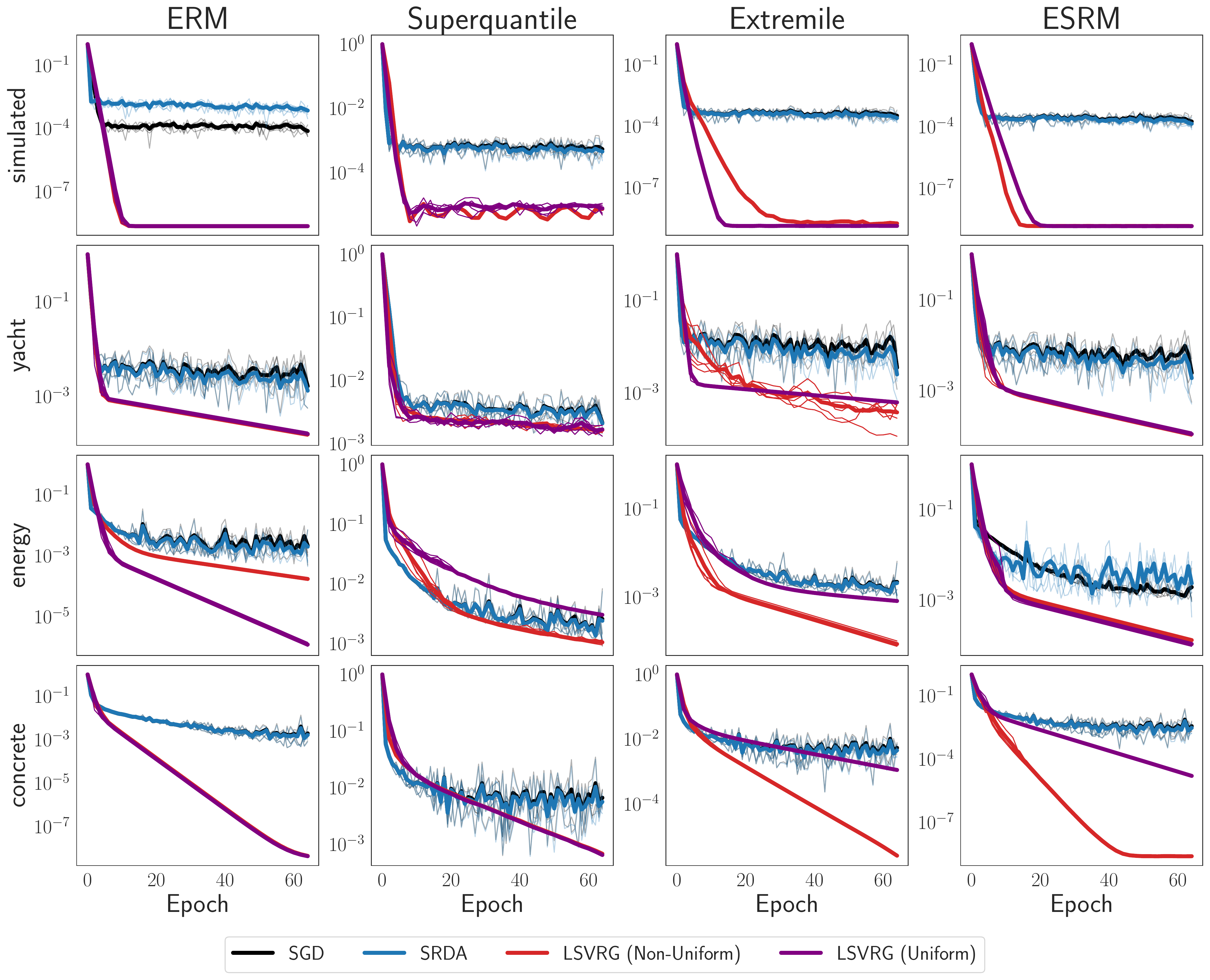}
    \caption{The suboptimality gap (\Cref{eqn:gap}) for various optimization algorithms on the mean, superquantile, extremile, and ESRM risk measures. The $x$-axis shows the number of effective passes through the data. Five seed trajectories are plotted translucently for every algorithm.
    }
    \label{fig:training_curves}
\end{figure}

\begin{figure*}[t!]
    \centering
    \includegraphics[width=.9\linewidth]{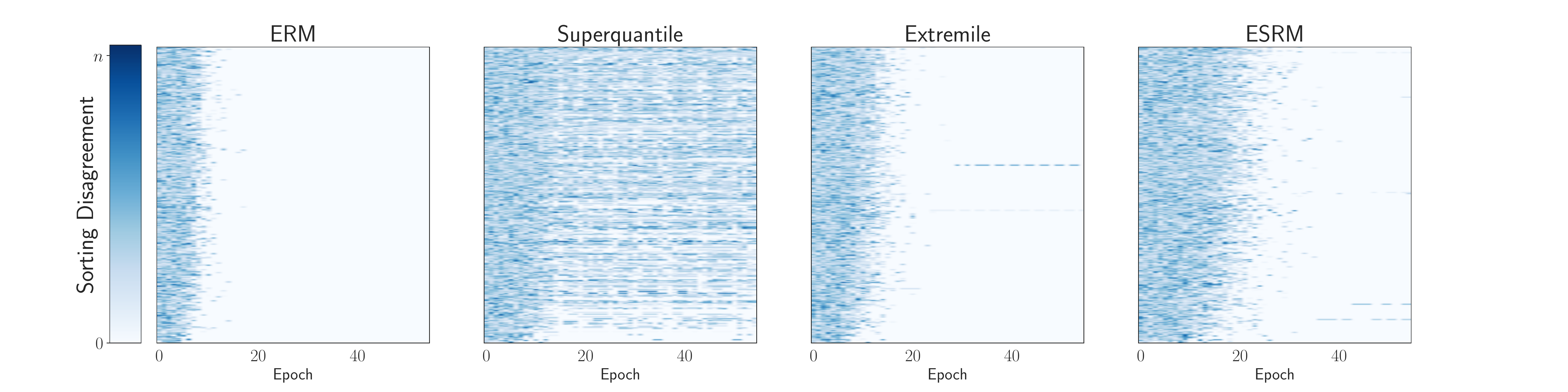}
    \caption{Sorting sensitivity for \texttt{simulated} dataset ($n = 800$) along epochs ($x$-axis) of \osvrg applied each spectral risk objective. Each heatmap shows the vector of disagreements between sorting permutations $\pi$ at each epoch of \Cref{alg:osvrg-u:theory-main}.}
    \label{fig:sensitivity}
\end{figure*}

We compare the performance of minibatch SGD and \osvrg on benchmark datasets and study their bias and variance properties in a number of supervised and unsupervised learning tasks. Experimental details can be found in \Cref{sec:a:experiments}, with additional experiments with varied hyperparameters can be found in \Cref{sec:a:additional}. The code as well as the scripts to reproduce the experiments are made publicly available online: \textsf{{https://github.com/ronakdm/lerm}}.

\subsection{Regression}
We consider 4 regression datasets: 
\begin{itemize}[nosep]
    \item \texttt{simulated}: a synthetic task of predicting observations generated from a noisy linear model.
    \item \texttt{yacht}: prediction of the residuary resistance of a sailing yacht based on its physical attributes \cite{Tsanas2012AccurateQE}.
    \item \texttt{energy}: prediction of the cooling load of a building based on its physical attributes \cite{Segota2020Artificial}.
    \item \texttt{concrete}: prediction of the compressive strength of a concrete type based on its physical and chemical attributes \cite{Yeh2006Analysis}.
\end{itemize}

\begin{figure}[t]
    \centering
    \includegraphics[width=0.88\linewidth]{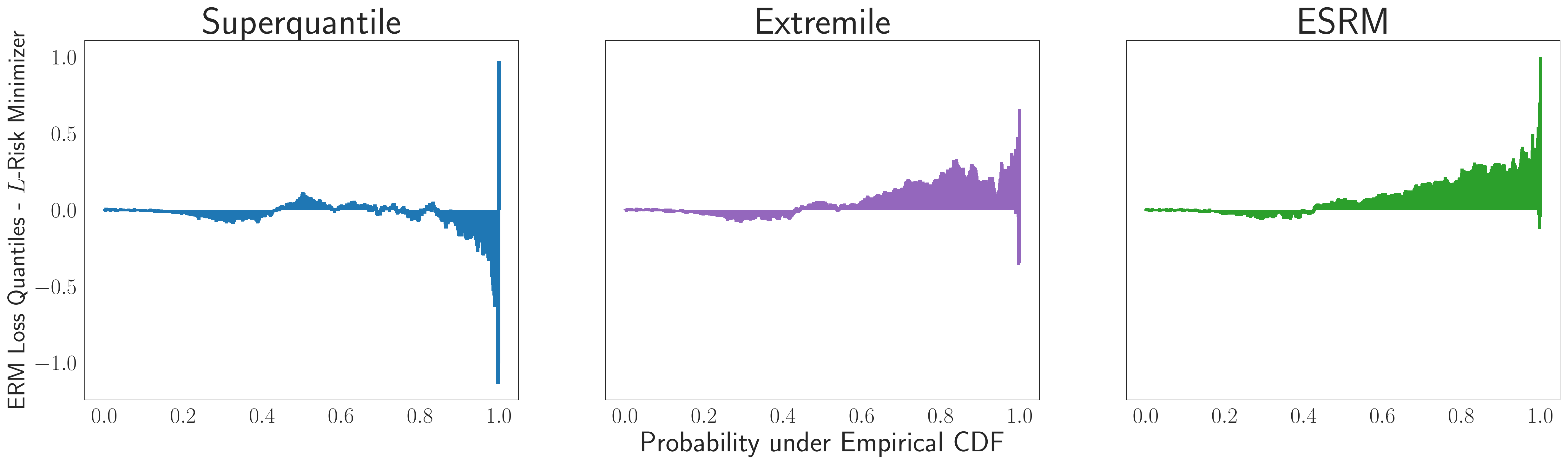}
    \includegraphics[width=0.88\linewidth]{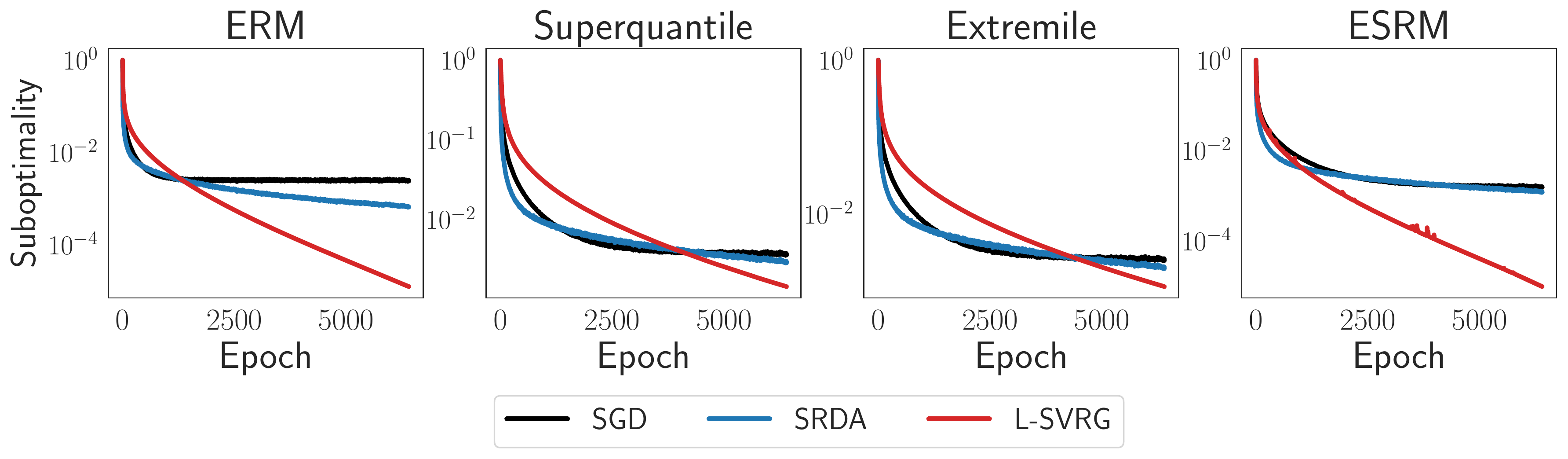}
    \caption{{\bf Top:} Differences in the $p$-th quantile of the loss distribution for the ERM solution and the $L$-Risk solution on the {\tt iWildCam} test set. {\bf Bottom:} Training curves for SGD, SRDA, and L-SVRG for different $L$-Risks on {\tt iWildCam} training set.}
    \label{fig:iwildcam_training_curves}
\end{figure}

We use the squared loss under a linear model and aim to minimize the regularized objective \eqref{eqn:orm2} 
where the spectra $s$ are obtained from the empirical mean, superquantile ($q = 0.5$), extremile ($r = 2$), and ESRM ($\rho=1$) of the losses. Both training curves and test losses for other values of $(q, r, \rho)$ are shown in \Cref{sec:a:additional}, which follow similar trends.

In addition to minibatch SGD, we consider another biased 
method {\it stochastic regularized dual averaging (SRDA)} \cite{Xiao2009Dual}, both with a batch size of $64$. 
We compare them with \osvrg, by plotting in \Cref{fig:training_curves} the suboptimality, defined as
\begin{align}
    \text{suboptimality gap}_t := \frac{\msc{R}_\sigma(w\pow{t}) - \msc{R}_\sigma(w^*)}{\msc{R}_\sigma(w\pow{0}) - \msc{R}_\sigma(w^*)}
    \label{eqn:gap}
\end{align}

We find that \osvrg (without smoothing) exhibits empirical linear convergence for the ERM, extremile, and ESRM. It often vastly outperforms SGD and SRDA, which exhibit sublinear convergence. On the superquantile,
\osvrg exhibits the same sublinear convergence as SGD, suggesting that the superquantile, with its discontinuous spectrum, can be hard to optimize. Overall, \osvrg is the best or close to the best algorithm across all tasks. 

\osvrg relies on the hypothesis that the sorted order of losses stabilize as iterates $w\pow{t}$ get close to the optimum. 
We see from \Cref{fig:sensitivity} that
there is a clear phase change after which disagreements between the true and estimated ordering are visually unnoticeable. 
The exception to this is the superquantile, where the sorting does not stabilize within $64$ epochs. This corroborates the apparent hardness of optimizing the superquantile in \Cref{fig:training_curves}.

\subsection{Classification}

We evaluate the $L$-Risk minimizers on a larger scale image classification benchmark. The {\tt iWildCam} challenge dataset contains natural images from wilderness sites with distribution shifts arising from diverse camera angles, backgrounds, and relative animal frequencies. We take a subsample of $n=20,000$ data points from classes with at least $100$ examples after removing the ``background image" class. We then featurize the images using the penultimate layer of a ResNet50 neural network with parameters pre-trained on ImageNet (see \Cref{sec:a:experiments} for further details). The convex optimization problem considered is multinomial logistic regression arising from fine-tuning the last layer of the model. The training curves in the bottom row of \Cref{fig:iwildcam_training_curves} indicate that SGD and SRDA fail to converge due to bias and variance. Letting $\hat{w}_{\text{ERM}}$ be the approximate solution of ERM, whereas $\hat{w}_{\text{LRM}}$ is the approximate solution of an $L$-Risk minimization problem other than ERM, the top row plots the following against $p$:
\begin{equation*}
    \frac{\ell_{(\ceil{np})}\p{\hat{w}_{\text{ERM}}} - \ell_{(\ceil{np})}\p{\hat{w}_{\text{LRM}}}}{\frac{1}{n} \sum_{i=1}^n \ell_i\p{\hat{w}_{\text{ERM}}}},
\end{equation*}
that is, the difference in the $p$-th quantile of the test loss of $\hat{w}_{\text{ERM}}$ and the $p$-th quantile of the test loss of $\hat{w}_{\text{LRM}}$, normalized by the mean test loss of ERM. Because logistic loss measures the negative logarithm of the probability that the model assigns to the correct label, tail events for this loss amount to a model exhibiting high confidence for a set of incorrect labels. The median test loss ($p = 0.5$) is similar between the $L$-risk minimizers and standard ERM. However, for $p > 0.5$, the ERM solution can make predictions with much higher losses. Comparing various $L$-risks, we find that the superquantile controls tail error at very high quantiles ($p > 0.95$), but generally underperforms for the rest of the loss distribution. The extremile and ESRM on the other hand, have generally better performance than ERM throughout the distribution. We also plot the quantile differences for the regression tasks in \Cref{sec:a:additional}.

Crucially, we find that the large $n$ regime exacerbates the bias issues when the epoch length is set to $n$. We instead use a smaller epoch length of $100$, and plot suboptimality by number of gradient evaluations in \Cref{fig:iwildcam_training_curves} to ensure a fair comparison. Each epoch is defined as the number of gradient evaluations in SGD or SRDA, which is $100m = 6,400$.

\begin{figure}[t!]
    \centering
    \includegraphics[width=0.35\linewidth]{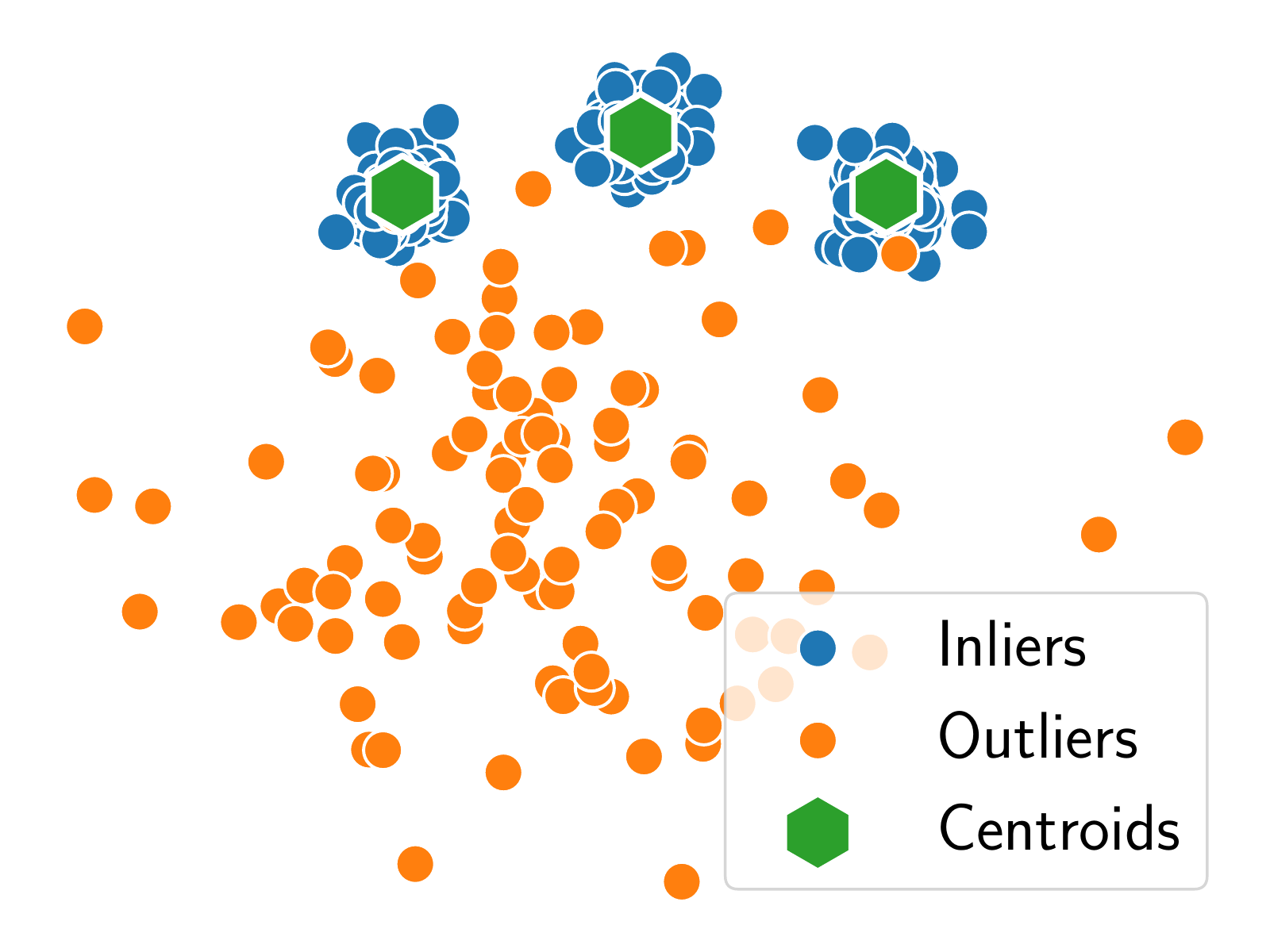}~
    \includegraphics[width=0.35\linewidth]{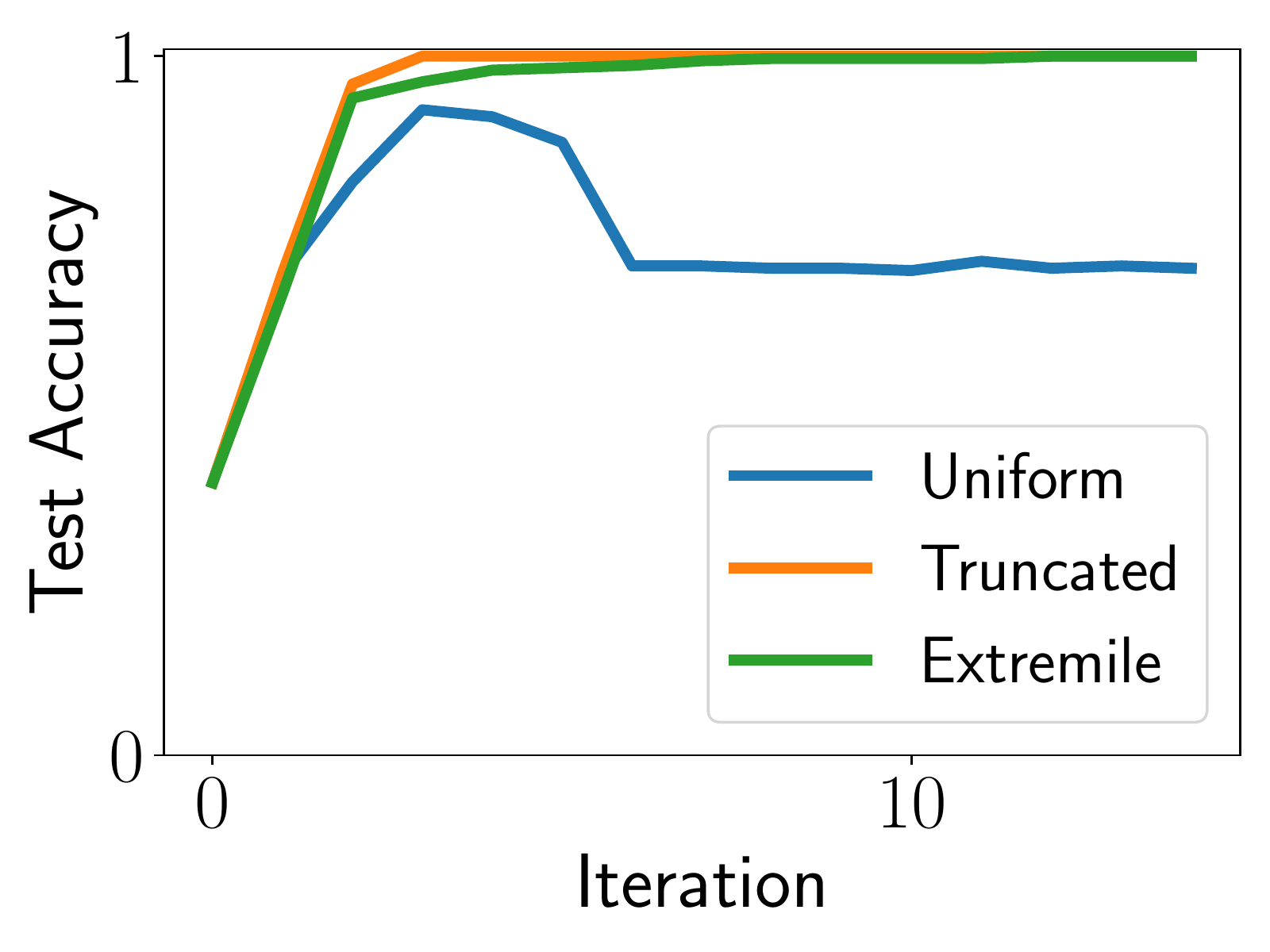}
    \includegraphics[width=0.35\linewidth]{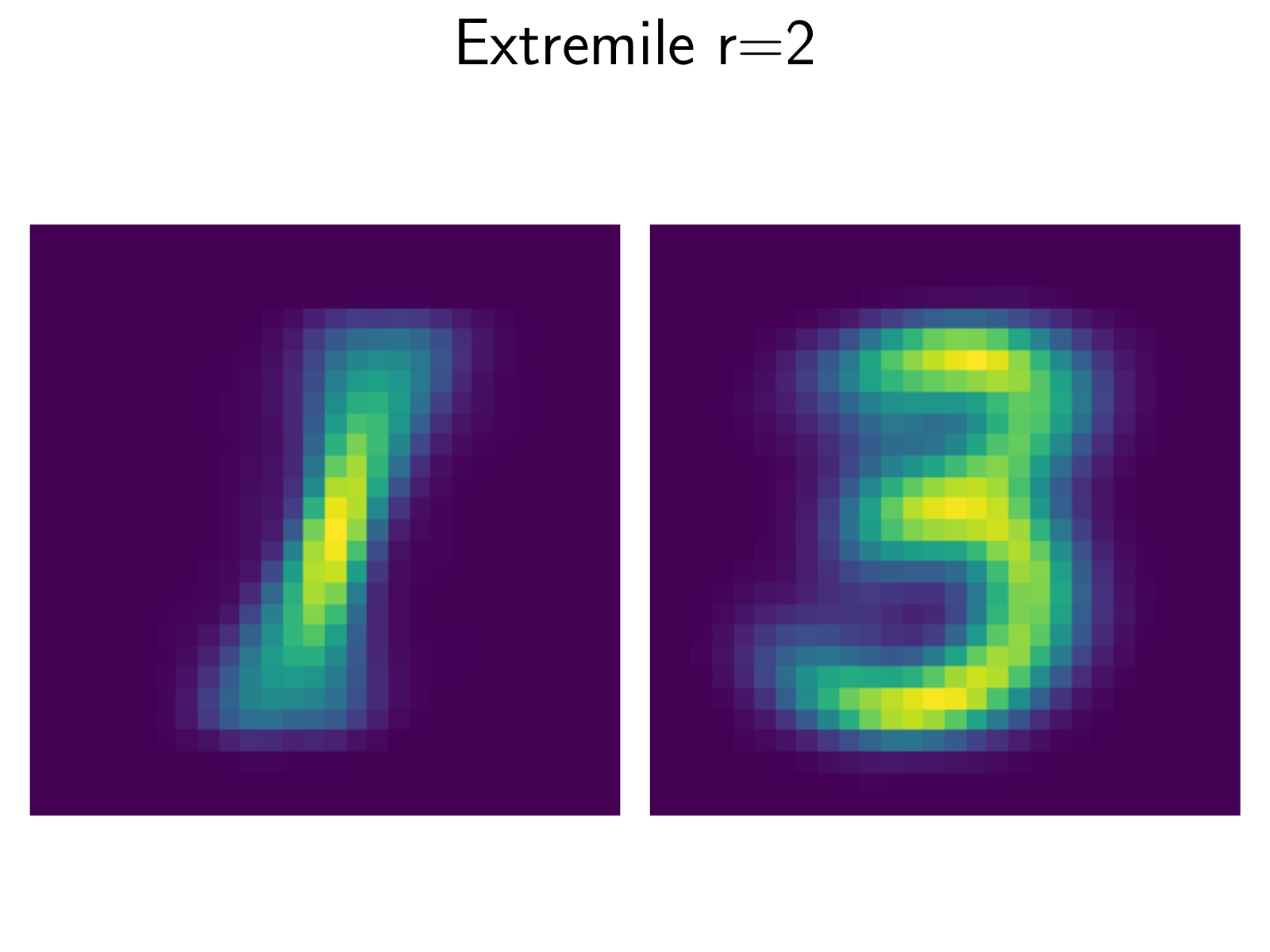}~
    \includegraphics[width=0.35\linewidth]{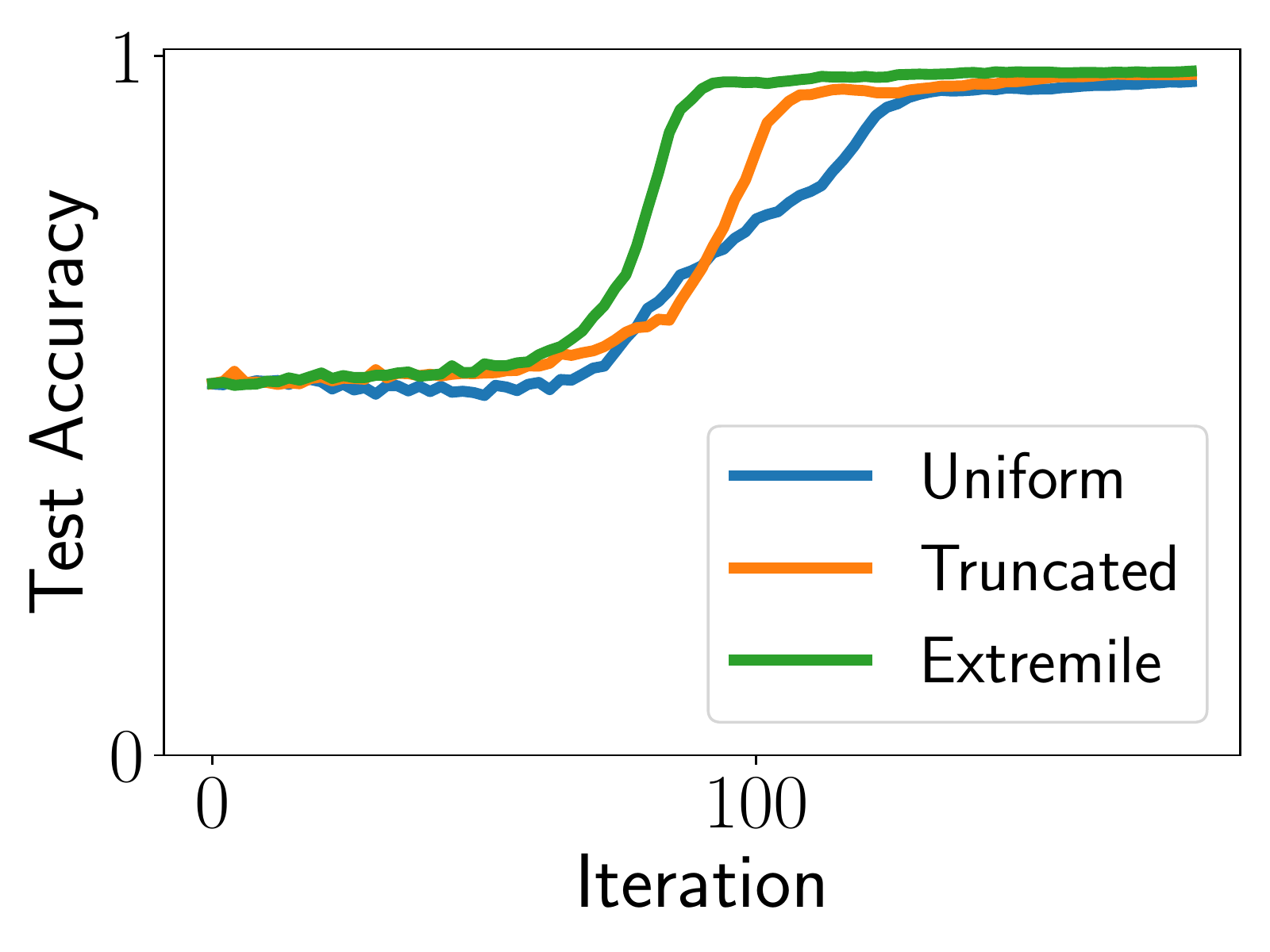}
    \caption{Robust clustering with $L$-statistics in the presence of outliers. \textbf{Top}: synthetic data, \textbf{Bottom}: MNIST digits.} 
    \label{fig:kmeans_exp}
\end{figure}

\subsection{Clustering}
We also explore an unsupervised clustering approach from~\citet{Maurer2020RobustUnsupervised} on synthetic data and real data. We seek to cluster $n$ points $x_1, \ldots, x_n$ into $k$ clusters with centers $C=(c_1, \ldots, c_k)$ by minimizing a weighted average of the distances of each point to its closest center, i.e., problems of the form
\[
\min_{C \in \reals^{d\times k}} \sum_{i=1}^n \sigma_i \ell_{(i)}(C) \ \mbox{for}\ \ell_i(C) = \hspace*{-6pt} \min_{\substack{z_i \in \{0, 1\}^k\\ z_i^\top \ones = \ones}} \hspace{-1pt}\sum_{j=1}^k z_{ij}\|x_i-c_j\|_2^2.
\]
Taking $\sigma_i =1/n$, we retrieve the usual objective minimized by $k$-means.
\citet{Maurer2020RobustUnsupervised} propose to take $\sigma_i$ non-uniform to mitigate the effect of outliers in the data. Specifically, we consider $\sigma_i=\int_{(i-1)/n}^{i/n} s(t) \D t$ for a truncated spectrum $s_q(t)= \mathbf{1}_{[0,q]}(t)/q$ or a risk-seeking version of the extremile, $s_r(t) =r(1-t)^r$.
In addition, \citet{Maurer2020RobustUnsupervised} optimize the clustering objective by alternating $k$-means iterations and sorting the resulting losses. Instead, we apply minibatch SGD from \Cref{alg:sgd} with a constant stepsize found by grid search and a batch size of $64$.   

\textbf{Synthetic data.}
We generate a dataset of three Gaussian clouds of $100$ points each and an additional set of 100 outliers (top left in \Cref{fig:kmeans_exp}). 
We compare the accuracy of clustering $300$ new inlier points with different spectra.
We observe in \Cref{fig:kmeans_exp} (top right) that 
the minibatch estimates of the subgradient of the truncated or extremile spectra are sufficient to reach a perfect $100\%$ accuracy, while vanilla $k$-means with its uniform spectrum leads to poor performance due to outliers.

\textbf{MNIST data.}
We consider distinguishing between the digits 1 and 3 from the MNIST dataset~\citep{lecun1998mnist} by clustering the images of a training set composed of 1000 samples of 1 and 3 each and additional 125 outliers for each other digit. We test the clustering procedure on the images of 1 and 3 digits from the MNIST test set.
We see from \Cref{fig:kmeans_exp} (bottom right) that minibatch SGD with a batch size of $256$ achieves $97.3\%$ for the truncated spectrum and $97.8\%$ for the extremile spectrum versus $96.3\%$ for the uniform spectrum. 
Even in terms of convergence speed for different spectra, we observe that extremile $\succ$ truncated $\succ$ uniform.
Finally, \Cref{fig:kmeans_exp} (bottom left) shows us that the centers computed with the extremile spectrum are clear representatives of these digits while taking a uniform spectrum leads to more blurry representatives, as shown in \Cref{sec:a:experiments}.

\section{Discussion}\label{sec:discussion}
In this paper, we propose stochastic optimization algorithms for minimizing spectral risk measures, allowing for practitioners to interpolate between optimizing the average-case and worst-case performance on a learning task. We establish consistency of the sample spectral risk. We present the \osvrg algorithm and analyze its convergence properties alongside biased minibatch SGD. The \osvrg algorithm demonstrates rapid empirical convergence on benchmark datasets. The experiments show that the algorithm converges linearly to the minimizer of the spectral risk, outperforming baselines. Future work includes establishing the regular subdifferential in the non-convex setting and studying the robustness properties of spectral risk minimizers.

\subsubsection*{Acknowledgements}
This work was supported by NSF DMS-2023166, NSF CCF-2019844, NSF DMS-2052239, NSF DMS-2134012, NSF DMS-2133244, NIH, CIFAR-LMB, and faculty research awards. Part of this work was done while Zaid Harchaoui was visiting the Simons Institute for the Theory of Computing, and while Krishna Pillutla was at the University of Washington. 
\clearpage
\bibliographystyle{abbrvnat}
\bibliography{bib}

\begin{thebibliography}{60}
\providecommand{\natexlab}[1]{#1}
\providecommand{\url}[1]{\texttt{#1}}
\expandafter\ifx\csname urlstyle\endcsname\relax
  \providecommand{\doi}[1]{doi: #1}\else
  \providecommand{\doi}{doi: \begingroup \urlstyle{rm}\Url}\fi

\bibitem[Acerbi and Tasche(2002)]{acerbi2002coherence}
C.~Acerbi and D.~Tasche.
\newblock On the coherence of expected shortfall.
\newblock \emph{Journal of Banking \& Finance}, 26\penalty0 (7):\penalty0
  1487--1503, 2002.

\bibitem[Artzner et~al.(1999)Artzner, Delbaen, Jean-Marc, and
  Heath]{Artzner1999Coherent}
P.~Artzner, F.~Delbaen, E.~Jean-Marc, and D.~Heath.
\newblock {Coherent Measures of Risk}.
\newblock \emph{Mathematical Finance}, 9:\penalty0 203 -- 228, 07 1999.

\bibitem[Bach(2023)]{Bach2023Learning}
F.~Bach.
\newblock \emph{Learning Theory from First Principles}.
\newblock The MIT Press, 2023.

\bibitem[Baressi~Segota et~al.(2020)Baressi~Segota, Andelic, Kudlacek, and
  Cep]{Segota2020Artificial}
S.~Baressi~Segota, N.~Andelic, J.~Kudlacek, and R.~Cep.
\newblock Artificial neural network for predicting values of residuary
  resistance per unit weight of displacement.
\newblock \emph{Journal of Maritime \& Transportation Science}, 57, 2020.

\bibitem[Beck and Teboulle(2012)]{beck2012smoothing}
A.~Beck and M.~Teboulle.
\newblock Smoothing and first order methods: A unified framework.
\newblock \emph{SIAM Journal on Optimization}, 22\penalty0 (2):\penalty0
  557--580, 2012.

\bibitem[Beery et~al.(2020)Beery, Cole, and Gjoka]{beery2020iwildcam}
S.~Beery, E.~Cole, and A.~Gjoka.
\newblock The iwildcam 2020 competition dataset.
\newblock \emph{arXiv preprint arXiv:2004.10340}, 2020.

\bibitem[Ben-Tal and Teboulle(2007)]{Ben2007AnOld}
A.~Ben-Tal and M.~Teboulle.
\newblock An old-new concept of convex risk measures: The optimized certainty
  equivalent.
\newblock \emph{Mathematical Finance}, 17:\penalty0 449--476, 2007.

\bibitem[Best et~al.(2000)Best, Chakravarti, and Ubhaya]{best2000minimizing}
M.~J. Best, N.~Chakravarti, and V.~A. Ubhaya.
\newblock {Minimizing Separable Convex Functions Subject to Simple Chain
  Constraints}.
\newblock \emph{SIAM Journal on Optimization}, 10\penalty0 (3):\penalty0
  658--672, 2000.

\bibitem[Blondel et~al.(2020)Blondel, Teboul, Berthet, and
  Djolonga]{blondel2020fast}
M.~Blondel, O.~Teboul, Q.~Berthet, and J.~Djolonga.
\newblock Fast differentiable sorting and ranking.
\newblock In \emph{International Conference on Machine Learning}, pages
  950--959, 2020.

\bibitem[Bobkov and Ledoux(2019)]{Bobkov2019OnedimensionalEM}
S.~G. Bobkov and M.~Ledoux.
\newblock {One-Dimensional Empirical Measures, Order Statistics, and
  Kantorovich Transport Distances}.
\newblock \emph{Memoirs of the American Mathematical Society}, 2019.

\bibitem[Chen and Paschalidis(2020)]{chen2020distributionally}
R.~Chen and I.~C. Paschalidis.
\newblock {Distributionally Robust Learning}.
\newblock \emph{Foundations and Trends{\textregistered} in Optimization},
  4\penalty0 (1-2):\penalty0 1--243, 2020.

\bibitem[Cotter and Dowd(2006)]{Cotter2006Extreme}
J.~Cotter and K.~Dowd.
\newblock {Extreme Spectral Risk Measures: An Application to Futures
  Clearinghouse Margin Requirements}.
\newblock \emph{Journal of Banking \& Finance}, 30\penalty0 (12):\penalty0
  3469--3485, 2006.

\bibitem[Curi et~al.(2020)Curi, Levy, Jegelka, and Krause]{curi2020adaptive}
S.~Curi, K.~Y. Levy, S.~Jegelka, and A.~Krause.
\newblock {Adaptive Sampling for Stochastic Risk-Averse Learning}.
\newblock In \emph{Neural Information Processing Systems}, volume~33, 2020.

\bibitem[Daouia et~al.(2019)Daouia, Gijbels, and
  Stupfler]{Daouia2019Extremiles}
A.~Daouia, I.~Gijbels, and G.~Stupfler.
\newblock {Extremiles: A New Perspective on Asymmetric Least Squares}.
\newblock \emph{Journal of the American Statistical Association}, 114\penalty0
  (527):\penalty0 1366--1381, 2019.

\bibitem[Defazio et~al.(2014)Defazio, Bach, and
  Lacoste-Julien]{Defazio2014SAGA}
A.~Defazio, F.~Bach, and S.~Lacoste-Julien.
\newblock {SAGA: A Fast Incremental Gradient Method With Support for
  Non-Strongly Convex Composite Objectives}.
\newblock In \emph{Neural Information Processing Systems}, volume~27, 2014.

\bibitem[Deng et~al.(2009)Deng, Dong, Socher, Li, Li, and
  Fei-Fei]{deng2009imagenet}
J.~Deng, W.~Dong, R.~Socher, L.-J. Li, K.~Li, and L.~Fei-Fei.
\newblock Imagenet: A large-scale hierarchical image database.
\newblock In \emph{2009 IEEE conference on computer vision and pattern
  recognition}, pages 248--255. Ieee, 2009.

\bibitem[Duchi and Namkoong(2019)]{duchi2019variance}
J.~C. Duchi and H.~Namkoong.
\newblock {Variance-based Regularization with Convex Objectives}.
\newblock \emph{Journal of Machine Learning Research}, 20\penalty0
  (68):\penalty0 1--55, 2019.

\bibitem[Fan et~al.(2017)Fan, Lyu, Ying, and Hu]{Fan2017Learningwith}
Y.~Fan, S.~Lyu, Y.~Ying, and B.~Hu.
\newblock {Learning with Average Top-$k$ Loss}.
\newblock In \emph{Neural Information Processing Systems}, volume~30, 2017.

\bibitem[F{\"{o}}llmer and Schied(2002)]{Follmer2002Convexmeasures}
H.~F{\"{o}}llmer and A.~Schied.
\newblock Convex measures of risk and trading constraints.
\newblock \emph{Finance Stochastics}, 6\penalty0 (4):\penalty0 429--447, 2002.

\bibitem[Guigues and Sagastiz{\'{a}}bal(2013)]{Guigues2013Risk-averse}
V.~Guigues and C.~A. Sagastiz{\'{a}}bal.
\newblock Risk-averse feasible policies for large-scale multistage stochastic
  linear programs.
\newblock \emph{Mathematical Programming}, 138\penalty0 (1-2):\penalty0
  167--198, 2013.

\bibitem[He et~al.(2016)He, Zhang, Ren, and Sun]{He2016DeepResidual}
K.~He, X.~Zhang, S.~Ren, and J.~Sun.
\newblock Deep residual learning for image recognition.
\newblock In \emph{2016 IEEE Conference on Computer Vision and Pattern
  Recognition (CVPR)}, pages 770--778, 2016.
\newblock \doi{10.1109/CVPR.2016.90}.

\bibitem[He et~al.(2022)He, Kou, and Peng]{He2022RiskMeasures}
X.~D. He, S.~Kou, and X.~Peng.
\newblock {Risk Measures: Robustness, Elicitability, and Backtesting}.
\newblock \emph{Annual Review of Statistics and Its Application}, 9\penalty0
  (1), 2022.

\bibitem[Henzi et~al.(2022)Henzi, M{\"o}sching, and
  D{\"u}mbgen]{henzi2022accelerating}
A.~Henzi, A.~M{\"o}sching, and L.~D{\"u}mbgen.
\newblock Accelerating the pool-adjacent-violators algorithm for isotonic
  distributional regression.
\newblock \emph{Methodology and computing in applied probability}, pages 1--13,
  2022.

\bibitem[Hiriart-Urruty and Lemar{\'e}chal(1993)]{HiriartUrruty1993Convex}
J.-B. Hiriart-Urruty and C.~Lemar{\'e}chal.
\newblock \emph{Convex Analysis and Minimization Algorithms}.
\newblock Springer, 1993.

\bibitem[Hofmann et~al.(2015)Hofmann, Lucchi, Lacoste-Julien, and
  McWilliams]{hofmann2015variance}
T.~Hofmann, A.~Lucchi, S.~Lacoste-Julien, and B.~McWilliams.
\newblock {Variance Reduced Stochastic Gradient Descent with Neighbors}.
\newblock \emph{Neural Information Processing Systems}, 28, 2015.

\bibitem[Holland and Mehdi~Haress(2022)]{Holland2021Spectralrisk}
M.~J. Holland and E.~Mehdi~Haress.
\newblock Spectral risk-based learning using unbounded losses.
\newblock In \emph{International Conference on Artificial Intelligence and
  Statistics}, volume 151, pages 1871--1886, 2022.

\bibitem[Hu et~al.(2018)Hu, Niu, Sato, and Sugiyama]{hu2018does}
W.~Hu, G.~Niu, I.~Sato, and M.~Sugiyama.
\newblock Does distributionally robust supervised learning give robust
  classifiers?
\newblock In \emph{International Conference on Machine Learning}, pages
  2029--2037, 2018.

\bibitem[Johnson and Zhang(2013)]{Johnson2013Accelerating}
R.~Johnson and T.~Zhang.
\newblock Accelerating stochastic gradient descent using predictive variance
  reduction.
\newblock In \emph{Neural Information Processing Systems}, volume~26, 2013.

\bibitem[Kawaguchi and Lu(2020)]{kawaguchi2020ordered}
K.~Kawaguchi and H.~Lu.
\newblock Ordered {SGD}: A new stochastic optimization framework for empirical
  risk minimization.
\newblock In \emph{International Conference on Artificial Intelligence and
  Statistics}, volume 108, pages 669--679, 2020.

\bibitem[Khim et~al.(2020)Khim, Leqi, Prasad, and
  Ravikumar]{Khim2020uniformConvergence}
J.~Khim, L.~Leqi, A.~Prasad, and P.~Ravikumar.
\newblock {Uniform Convergence of Rank-weighted Learning}.
\newblock In \emph{International Conference on Machine Learning}, volume 119,
  pages 5254--5263, 2020.

\bibitem[Kuhn et~al.(2019)Kuhn, Esfahani, Nguyen, and
  Shafieezadeh-Abadeh]{kuhn2019wasserstein}
D.~Kuhn, P.~M. Esfahani, V.~A. Nguyen, and S.~Shafieezadeh-Abadeh.
\newblock {Wasserstein Distributionally Robust Optimization: Theory and
  Applications in Machine Learning}.
\newblock In \emph{Operations Research \& Management Science in the Age of
  Analytics}, pages 130--166. INFORMS, 2019.

\bibitem[Laguel et~al.(2020)Laguel, Malick, and
  Harchaoui]{Laguel2020First-Order}
Y.~Laguel, J.~Malick, and Z.~Harchaoui.
\newblock {First-Order Optimization for Superquantile-Based Supervised
  Learning}.
\newblock In \emph{IEEE International Workshop on Machine Learning for Signal
  Processing}, pages 1--6, 09 2020.

\bibitem[Laguel et~al.(2021)Laguel, Pillutla, Malick, and
  Harchaoui]{Laguel2022Superquantiles}
Y.~Laguel, K.~Pillutla, J.~Malick, and Z.~Harchaoui.
\newblock {Superquantiles at Work: Machine Learning Applications and Efficient
  Subgradient Computation}.
\newblock \emph{Set-Valued and Variational Analysis}, 2021.

\bibitem[Le~Roux et~al.(2012)Le~Roux, Schmidt, and Bach]{Roux2012AStochastic}
N.~Le~Roux, M.~Schmidt, and F.~Bach.
\newblock {A Stochastic Gradient Method with an Exponential Convergence Rate
  for Finite Training Sets}.
\newblock In \emph{Neural Information Processing Systems}, volume~25, 2012.

\bibitem[LeCun et~al.(1998)LeCun, Cortes, and Christopher]{lecun1998mnist}
Y.~LeCun, C.~Cortes, and B.~Christopher.
\newblock {MNIST} handwritten digit database.
\newblock http://yann.lecun.com/exdb/mnist/, 1998.

\bibitem[Lee and Raginsky(2018)]{lee2018minimax}
J.~Lee and M.~Raginsky.
\newblock {Minimax Statistical Learning with Wasserstein distances}.
\newblock In \emph{Neural Information Processing Systems}, volume~31, pages
  2687--2696, 2018.

\bibitem[Lee et~al.(2020)Lee, Park, and Shin]{Lee2020LearningBounds}
J.~Lee, S.~Park, and J.~Shin.
\newblock {Learning Bounds for Risk-sensitive Learning}.
\newblock In \emph{Neural Information Processing Systems}, volume~33, pages
  13867--13879, 2020.

\bibitem[Leqi et~al.(2019)Leqi, Prasad, and Ravikumar]{Liu2019OnHuman}
L.~Leqi, A.~Prasad, and P.~K. Ravikumar.
\newblock {On Human-Aligned Risk Minimization}.
\newblock In \emph{Neural Information Processing Systems}, volume~32, 2019.

\bibitem[Levy et~al.(2020)Levy, Carmon, Duchi, and
  Sidford]{Levy2020Large-Scale}
D.~Levy, Y.~Carmon, J.~Duchi, and A.~Sidford.
\newblock {Large-Scale Methods for Distributionally Robust Optimization}.
\newblock In \emph{Neural Information Processing Systems}, volume~33, 2020.

\bibitem[Li et~al.(2021)Li, Beirami, Sanjabi, and Smith]{li2020tilted}
T.~Li, A.~Beirami, M.~Sanjabi, and V.~Smith.
\newblock {Tilted Empirical Risk Minimization}.
\newblock In \emph{International Conference on Learning Representations}, 2021.

\bibitem[Lim and Wright(2016)]{lim2016efficient}
C.~H. Lim and S.~J. Wright.
\newblock {Efficient Bregman Projections onto the Permutahedron and Related
  Polytopes}.
\newblock In \emph{International Conference on Artificial Intelligence and
  Statistics}, pages 1205--1213, 2016.

\bibitem[Mairal(2014)]{Mairal2014Incremental}
J.~Mairal.
\newblock {Incremental Majorization-Minimization Optimization with Application
  to Large-Scale Machine Learning}.
\newblock \emph{SIAM Journal on Optimization}, 25, 02 2014.

\bibitem[Maurer et~al.(2021)Maurer, Parletta, Paudice, and
  Pontil]{Maurer2020RobustUnsupervised}
A.~Maurer, D.~A. Parletta, A.~Paudice, and M.~Pontil.
\newblock {Robust Unsupervised Learning via {L}-statistic Minimization}.
\newblock In \emph{International Conference on Machine Learning}, pages
  7524--7533, 2021.

\bibitem[Nesterov(2005)]{nesterov2005smooth}
Y.~Nesterov.
\newblock Smooth minimization of non-smooth functions.
\newblock \emph{Mathematical programming}, 103\penalty0 (1):\penalty0 127--152,
  2005.

\bibitem[Pflug and Ruszczy{\'n}ski(2005)]{pflug2005measuring}
G.~C. Pflug and A.~Ruszczy{\'n}ski.
\newblock {Measuring Risk for Income Streams}.
\newblock \emph{Computational Optimization and Applications}, 32\penalty0
  (1):\penalty0 161--178, 2005.

\bibitem[Reddi et~al.(2016)Reddi, Sra, Póczos, and
  Smola]{Reddi2016FastIncremental}
S.~J. Reddi, S.~Sra, B.~Póczos, and A.~Smola.
\newblock Fast incremental method for smooth nonconvex optimization.
\newblock In \emph{IEEE 55th Conference on Decision and Control (CDC)}, pages
  1971--1977, 2016.

\bibitem[Rockafellar and Royset(2014)]{Rockafeller2014RandomVariables}
R.~T. Rockafellar and J.~O. Royset.
\newblock Random variables, monotone relations, and convex analysis.
\newblock \emph{Mathematical Programming}, 148\penalty0 (1–2):\penalty0
  297–331, 2014.

\bibitem[Rockafellar and Uryasev(2013)]{Rockafeller2013Thefundamental}
R.~T. Rockafellar and S.~Uryasev.
\newblock The fundamental risk quadrangle in risk management, optimization and
  statistical estimation.
\newblock \emph{Surveys in Operations Research and Management Science},
  18:\penalty0 33--53, 2013.

\bibitem[Sarykalin et~al.(2008)Sarykalin, Serraino, and
  Uryasev]{sarykalin2008value}
S.~Sarykalin, G.~Serraino, and S.~Uryasev.
\newblock {Value-at-Risk vs. Conditional Value-at-Risk in Risk Management and
  Optimization}.
\newblock In \emph{State-of-the-art decision-making tools in the
  information-intensive age}, pages 270--294. INFORMS, 2008.

\bibitem[Shalev-Shwartz and Ben-David(2014)]{Shalev-Shwartz2014Understanding}
S.~Shalev-Shwartz and S.~Ben-David.
\newblock \emph{Understanding Machine Learning: From Theory to Algorithms}.
\newblock Cambridge University Press, 2014.

\bibitem[Shalev-Shwartz and Zhang(2013)]{Shalev-Shwartz2013Stochastic}
S.~Shalev-Shwartz and T.~Zhang.
\newblock {Stochastic Dual Coordinate Ascent Methods for Regularized Loss}.
\newblock \emph{Journal of Machine Learning Research}, 14\penalty0
  (1):\penalty0 567–599, 2013.

\bibitem[Shao(1989)]{SHAO1989Functional}
J.~Shao.
\newblock Functional calculus and asymptotic theory for statistical analysis.
\newblock \emph{Statistics \& Probability Letters}, 8\penalty0 (5):\penalty0
  397--405, 1989.

\bibitem[Shapiro et~al.(2014)Shapiro, Dentcheva, and
  Ruszczynski]{Shapiro2014Lectures}
A.~Shapiro, D.~Dentcheva, and A.~Ruszczynski.
\newblock \emph{Lectures on Stochastic Programming - Modeling and Theory,
  Second Edition}, volume~16.
\newblock {SIAM}, 2014.

\bibitem[Shorack(2017)]{Shorack2017Probability}
G.~Shorack.
\newblock \emph{Probability for Statisticians}.
\newblock Springer Texts in Statistics, 2017.

\bibitem[Tsanas and Xifara(2012)]{Tsanas2012AccurateQE}
A.~Tsanas and A.~Xifara.
\newblock Accurate quantitative estimation of energy performance of residential
  buildings using statistical machine learning tools.
\newblock \emph{Energy and Buildings}, 49:\penalty0 560--567, 2012.

\bibitem[Williamson and Menon(2019)]{williamson2019fairness}
R.~Williamson and A.~Menon.
\newblock {Fairness Risk Measures}.
\newblock In \emph{International Conference on Machine Learning}, pages
  6786--6797, 2019.

\bibitem[Xiang(1995)]{XIANG1995Anoteon}
X.~Xiang.
\newblock {A note on the bias of $L$-estimators and a bias reduction
  procedure}.
\newblock \emph{Statistics \& Probability Letters}, 23\penalty0 (2):\penalty0
  123--127, 1995.

\bibitem[Xiao et~al.(2017)Xiao, Rasul, and Vollgraf]{xiao2017fashion}
H.~Xiao, K.~Rasul, and R.~Vollgraf.
\newblock {Fashion-MNIST: a Novel Image Dataset for Benchmarking Machine
  Learning Algorithms}.
\newblock \emph{arXiv Preprint}, 2017.

\bibitem[Xiao(2009)]{Xiao2009Dual}
L.~Xiao.
\newblock {Dual Averaging Method for Regularized Stochastic Learning and Online
  Optimization}.
\newblock In \emph{Neural Information Processing Systems}, volume~22, 2009.

\bibitem[Yeh(2006)]{Yeh2006Analysis}
I.~Yeh.
\newblock {Analysis of Strength of Concrete Using Design of Experiments and
  Neural Networks}.
\newblock \emph{Journal of Materials in Civil Engineering}, 18, 2006.

\end{thebibliography}

\clearpage
\appendix
\begingroup
\let\clearpage\relax 
\onecolumn 
\endgroup

\addcontentsline{toc}{section}{Appendix} 
\part{Appendix} 
In the appendices, we give the proofs of consistency (\Cref{prop:consistency}) in \Cref{sec:a:consistency} and the variational properties of the objective (\Cref{prop:cvxity}) in \Cref{sec:a:cvxity}. \Cref{sec:sgd:proof} contains the analysis of bias SGD (\Cref{prop:sgd}). \Cref{sec:osvrg:proof} contains the analysis of \osvrg (\Cref{thm:main_osvrg}), with necessary background in \Cref{sec:a:smoothing}. We then give describe the experimental setup in detail (\Cref{sec:a:experiments}) and give some additional numerical results (\Cref{sec:a:additional}). 
 \parttoc
\clearpage

\section{Consistency of the Empirical Spectral Risk}
\label{sec:a:consistency}
We first recall the setting of \cref{prop:consistency}. 
\new{
Let $\p{\Omega, \msc{F}, \prob}$ denote a common probability space, upon which we consider an i.i.d.~sample  $\{\rv_1, \dots, \rv_n\}$ with each $\rv_i: \Omega \rightarrow \R$ being $(\msc{F}, \msc{B}(\R))$-measurable, where $\msc{B}(\R)$ denotes the Borel sets on the real line. Each shares a common cumulative distribution function (CDF) $F$ and quantile function $F^{-1}$ given by
\begin{align*}
    F(z) := \P{}{\rv_1^{-1}\p{(-\infty, z]}} \text{ and } F^{-1}(t) := \inf\br{z: F(z) \geq t}.
\end{align*}
Similarly, define the empirical CDF and quantile functions by
\begin{align*}
    F_n(z; \omega) := \frac{1}{n} \sum_{i=1}^n \I{(-\infty, z]}{\rv_i(\omega)} \text{ and } F_n^{-1}(t; \omega) := \inf\br{z: F_n(z; \omega) \geq t}.
\end{align*}
Construct the random variables $F_n(z): \omega \mapsto F_n(z; \omega)$ and $F^{-1}_n(t): \omega \mapsto F_n^{-1}(t; \omega)$. Here, $z \in \R$ and $t \in (0, 1)$, and the infimum is always attained \citep[Page 83]{Bobkov2019OnedimensionalEM}. We can ensure measurability of $F_n^{-1}(t)$ by taking the infimum only over $z \in \mathbb{Q}$. All expected values will be taken with respect to $\p{\Omega, \msc{F}, \prob}$ and will be denoted by $\mbb{E}$. For $s$ a probability density function (PDF) on $(0, 1)$, the $L$-functional $\L_s$ with spectrum $s$ is defined as 
\begin{align}
    \L_s[F] := \int_0^1 s(t) \cdot F^{-1}(t) \d t.
    \label{eqn:srm}
\end{align}
We first establish that \eqref{eqn:srm} is well-defined, using a well-known result of quantile functions.
\begin{prop}{\citep[Proposition A.1]{Bobkov2019OnedimensionalEM}}
    Let $\rv$ be a random variable and let $F$ be its cumulative distribution function. If $U$ is a random variable distributed uniformly in $(0, 1)$, then the random variable $F^{-1}(U)$ has $F$ as its distribution function. In particular,
    \begin{align*}
        \mbb{E}\abs{\rv}^p = \int_0^1 \abs{F^{-1}(t)}^p \d t
    \end{align*}
    when the left hand side is finite.
    \label{prop:bobkohvA.1}
\end{prop}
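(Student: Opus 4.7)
The plan is to prove this classical inverse-transform identity in two steps: first show that $F^{-1}(U)$ has CDF $F$, and then deduce the integral formula for $\mbb{E}|\rv|^p$ as a change of variable using equality in distribution. The core technical ingredient is the Galois-type equivalence
\[
F^{-1}(u) \leq z \quad \Longleftrightarrow \quad u \leq F(z) \qquad \text{for all } u \in (0,1), \, z \in \R.
\]
The direction ``$\Leftarrow$'' is immediate from the definition $F^{-1}(u) = \inf\{z': F(z') \geq u\}$. The direction ``$\Rightarrow$'' uses the right-continuity of $F$: if $F^{-1}(u) \leq z$, I would take a decreasing sequence $z_k \downarrow F^{-1}(u)$ with $F(z_k) \geq u$ (which exists by the definition of the infimum) and pass to the limit to obtain $F(F^{-1}(u)) \geq u$; monotonicity of $F$ then yields $F(z) \geq u$.

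With the equivalence in hand, the distributional identity follows by a direct computation:
\[
\P{}{F^{-1}(U) \leq z} = \P{}{U \leq F(z)} = F(z),
\]
where the second equality uses $U \sim \unif(0,1)$ and $F(z) \in [0,1]$. Hence $F^{-1}(U)$ has CDF $F$. For the integral formula, since $F^{-1}(U)$ and $\rv$ are equal in distribution, so are the non-negative random variables $|F^{-1}(U)|^p$ and $|\rv|^p$, and their expectations therefore coincide. Because the law of $U$ is Lebesgue measure on $(0,1)$ and $F^{-1}$ is monotone (hence Borel measurable), I would invoke the push-forward change-of-variable formula to conclude
\[
\mbb{E}|\rv|^p = \mbb{E}|F^{-1}(U)|^p = \int_0^1 |F^{-1}(t)|^p \, \d t,
\]
with finiteness of the left-hand side ensuring integrability of the integrand on the right.

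The main obstacle is handling the Galois equivalence when $F$ has jumps or flat plateaus, but this is fully resolved by pairing the infimum definition of $F^{-1}$ with the right-continuity of $F$. Everything else reduces to standard properties of expectation for equidistributed random variables, so the remainder is careful bookkeeping rather than a genuine analytical challenge.
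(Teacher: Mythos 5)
Your proof is correct: the Galois equivalence $F^{-1}(u)\leq z \iff u\leq F(z)$ is established properly (the right-continuity argument for the forward direction is exactly the needed ingredient), and the passage from equality in distribution to the integral identity via the push-forward of Lebesgue measure on $(0,1)$ is sound. Note that the paper does not prove this statement at all---it is imported verbatim as Proposition A.1 of \citet{Bobkov2019OnedimensionalEM}---so there is no in-paper argument to compare against; your write-up is the standard inverse-transform proof one would find in that reference.
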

\begin{lem}
    Let $s$ be bounded, and $\mbb{E}\abs{\rv_1} < \infty$. Then $\abs{\L_s[F]} < \infty$.
\end{lem}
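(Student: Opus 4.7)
The plan is to bound $|\L_s[F]|$ directly by pulling the absolute value inside the integral, factoring out the sup norm of $s$, and then invoking \Cref{prop:bobkohvA.1} to recognize the remaining integral as a first absolute moment.

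Concretely, I would first observe that since $s$ is Lebesgue-measurable on $(0,1)$ and $F^{-1}$ is non-decreasing (hence Borel-measurable), the product $s \cdot F^{-1}$ is measurable. Applying the triangle inequality for the Lebesgue integral and the hypothesis $\|s\|_\infty := \sup_{t \in (0,1)} |s(t)| < \infty$ gives
\[
\abs{\L_s[F]} = \abs{\int_0^1 s(t)\, F^{-1}(t)\, \d t} \leq \int_0^1 \abs{s(t)}\cdot\abs{F^{-1}(t)}\, \d t \leq \|s\|_\infty \int_0^1 \abs{F^{-1}(t)}\, \d t.
\]
Next, I would apply \Cref{prop:bobkohvA.1} with $p = 1$, which is valid precisely because $\mbb{E}\abs{\rv_1} < \infty$. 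This identifies $\int_0^1 |F^{-1}(t)|\, \d t = \mbb{E}\abs{\rv_1}$, yielding the bound $\abs{\L_s[F]} \leq \|s\|_\infty \cdot \mbb{E}\abs{\rv_1} < \infty$, which is the desired conclusion.

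There is no genuine obstacle here; the result is essentially a one-line application of Hölder's inequality combined with the quantile-transform identity. The only subtlety worth flagging is verifying the measurability needed for \Cref{prop:bobkohvA.1} to apply, namely that $F^{-1}$ is Borel-measurable on $(0,1)$ (immediate from monotonicity) and that $s$ is bounded and measurable by assumption, so the integrand is integrable and the absolute-value manipulation is justified.
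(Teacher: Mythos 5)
Your proof is correct and follows essentially the same route as the paper: pull the absolute value inside, factor out $\norm{s}_\infty$, and invoke \Cref{prop:bobkohvA.1} with $p=1$ to identify $\int_0^1 \abs{F^{-1}(t)}\,\d t = \mbb{E}\abs{\rv_1}$. The only difference is that you explicitly flag the measurability of $s \cdot F^{-1}$, which the paper leaves implicit.
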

\begin{proof}
    Let $\norm{s}_\infty := \sup_{t \in (0, 1)} \abs{s(t)} < \infty$.
    Write
    \begin{align*}
        \abs{\L_s[F]} &= \abs{\int_0^1 s(t)  \cdot F^{-1}(t) \d t} \leq \norm{s}_\infty \cdot \int_0^1 \abs{F^{-1}(t)} \d t \overset{\text{\Cref{prop:bobkohvA.1}}}{\leq} \norm{s}_\infty \mbb{E}\abs{\rv_1} < \infty.
    \end{align*}
\end{proof}
We restate \cref{prop:consistency} below. 
\consistency*

The proof is summarized by the following steps. 
\begin{enumerate}
    \item By boundedness of the spectrum, we have that $\mbb{E}\abs{\L_s\sbr{F_n} - \L_s\sbr{F}}^2 \leq \norm{s}_\infty^2 \cdot \E{}{\p{\int_0^1 \abs{F_n^{-1}(t) - F^{-1}(t)} \d t}^2}$.
    \item Using the triangle inequality on $L^2(\prob)$ and relationships between quantile functions and CDFs, we relate $\sqrt{\E{}{\p{\int_0^1 \abs{F_n^{-1}(t) - F^{-1}(t)} \d t}^2}}$ to the quantity $\frac{1}{\sqrt{n}} \int_{-\infty}^{+\infty} \sqrt{F(z) (1 - F(z))} \d z$.
    \item We then use elementary concentration inequalities to bound $\int_{-\infty}^{+\infty} \sqrt{F(z) (1 - F(z))} \d z$ by $\frac{p}{p-2}\E{}{\abs{\rv}^p}^{1/p}$.
\end{enumerate}

The following theorem details how the $L^1$ distance between the quantile functions of two probability distributions is equal to the $L^1$ distance between the corresponding CDFs. 
\begin{theorem}[Theorem 2.10 of \citet{Bobkov2019OnedimensionalEM}]
    Let $\mu$, $\nu$ be two probability distributions on $\reals$ with associated CDF $F$ and $G$, respectively, with quantile functions $F^{-1}(t) := \inf\{z \in \R: F(z) \geq t\}$ and $G^{-1} := \inf\{z \in \R: G(z) \geq t\}$. Given that $\mu$ and $\nu$ have finite first moment, i.e., $\int \abs{z} \d \mu(z) < \infty$ and $\int \abs{z} \d \nu(z) < \infty$, we have that
    \begin{align*}
        W_1(\mu, \nu) = \int_{-\infty}^\infty \abs{F(z) - G(z)} \d z &= \int_0^1 \abs{F^{-1}(t) - G^{-1}(t)} \d t,
    \end{align*}
    where both the left and right hand sides are finite.
    \label{thm:bobkhov2.10}
\end{theorem}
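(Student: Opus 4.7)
The plan is to decompose the claim into three parts: the equality of the two integrals (in $z$ and in $t$), the upper bound $W_1(\mu,\nu) \le \int_0^1 |F^{-1}(t) - G^{-1}(t)|\d t$ via an explicit coupling, and the matching lower bound $W_1(\mu,\nu) \ge \int_{\R} |F(z)-G(z)|\d z$. Finiteness of both integrals under the first-moment assumption follows from \Cref{prop:bobkohvA.1}, since $\int_0^1 |F^{-1}(t)|\d t = \int |z| \d \mu(z) < \infty$ (and similarly for $\nu$), so by the triangle inequality both $\int_0^1 |F^{-1}-G^{-1}|\d t$ and the associated $W_1$ coupling integrals are finite; I would carry these finiteness checks alongside each step.

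First I would establish the CDF/quantile identity via Fubini. The key structural fact is the duality $\{t \in (0,1) : F^{-1}(t) \le z\} = (0, F(z)]$ (up to a Lebesgue-null set of boundary points), and similarly for $G$. Since both level sets are intervals anchored at $0$, their symmetric difference has Lebesgue measure exactly $|F(z)-G(z)|$, which I would state as the pointwise identity $|\mathbf{1}\{F^{-1}(t)\le z\} - \mathbf{1}\{G^{-1}(t)\le z\}|$ integrating to $|F(z)-G(z)|$ over $t$. Then combining the layer-cake formula $|F^{-1}(t)-G^{-1}(t)| = \int_{\R} |\mathbf{1}\{F^{-1}(t)\le z\} - \mathbf{1}\{G^{-1}(t)\le z\}|\d z$ with Fubini--Tonelli yields
\begin{align*}
\int_0^1 |F^{-1}(t)-G^{-1}(t)|\d t
= \int_{\R} \int_0^1 |\mathbf{1}\{F^{-1}(t)\le z\} - \mathbf{1}\{G^{-1}(t)\le z\}|\d t\d z
= \int_{\R} |F(z)-G(z)|\d z.
\end{align*}

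Next, for the upper bound on $W_1$, I would use the quantile (comonotone) coupling. Let $U \sim \mathrm{Unif}(0,1)$ and set $(X,Y) := (F^{-1}(U), G^{-1}(U))$. By \Cref{prop:bobkohvA.1}, $X \sim \mu$ and $Y \sim \nu$, so the law of $(X,Y)$ lies in $\Pi(\mu,\nu)$, and taking expectations gives $W_1(\mu,\nu) \le \E{}{|X-Y|} = \int_0^1 |F^{-1}(t)-G^{-1}(t)|\d t$. For the matching lower bound, I would argue that for \emph{any} coupling $(X,Y) \in \Pi(\mu,\nu)$ and any $z \in \R$, the elementary identity $F(z)-G(z) = \Pr(X\le z, Y>z) - \Pr(X>z, Y\le z)$ gives $|F(z)-G(z)| \le \Pr(X\le z<Y) + \Pr(Y\le z<X)$. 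Applying the layer-cake representation $|X-Y| = \int_{\R} \mathbf{1}\{z \text{ strictly between } X \text{ and } Y\}\d z$ and Fubini then yields $\E{}{|X-Y|} \ge \int_{\R} |F(z)-G(z)|\d z$; taking infimum over couplings gives the lower bound.

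Stringing the three estimates together produces the sandwich $\int_{\R} |F-G|\d z \le W_1(\mu,\nu) \le \int_0^1 |F^{-1}-G^{-1}|\d t = \int_{\R} |F-G|\d z$, and all three quantities coincide. The main technical subtlety I anticipate is the careful handling of quantile level sets at atoms of $\mu$ or $\nu$: the equality $\{F^{-1}\le z\} = (0, F(z)]$ may fail at a Lebesgue-null set of $z$-values (the atoms), but since this set is countable it contributes nothing to the $z$-integrals, so the Fubini computation is unaffected. I would state the level-set identity modulo null sets with a brief justification based on left-continuity of $F^{-1}$ and right-continuity of $F$, and otherwise the argument is clean.
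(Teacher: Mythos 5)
Your proof is correct. Note, however, that the paper does not prove this statement at all: it is quoted verbatim as Theorem 2.10 of the cited reference (Bobkov--Ledoux) and used as a black box, so there is no internal proof to compare against. What you have written is the standard self-contained argument — the Fubini/layer-cake identity linking $\int_0^1\abs{F^{-1}-G^{-1}}\,\D t$ to $\int_{\R}\abs{F-G}\,\D z$, the comonotone coupling $(F^{-1}(U),G^{-1}(U))$ for the upper bound on $W_1$, and the coupling-independent inequality $\abs{F(z)-G(z)}\le \Pr(X\le z<Y)+\Pr(Y\le z<X)$ for the lower bound, closed by the sandwich — and all three steps are sound, with finiteness correctly handled via \Cref{prop:bobkohvA.1}. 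One small simplification: the level-set duality $F^{-1}(t)\le z \iff t\le F(z)$ holds \emph{exactly} for all $t\in(0,1)$ and $z\in\R$ by right-continuity of $F$ (the infimum defining $F^{-1}(t)$ is attained), so your hedging about a Lebesgue-null exceptional set at atoms is unnecessary, though harmless.
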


Next, we ensure that the $L^1$ distance between $F_n^{-1}$ and $F_n$ is a square-integrable random variable.
\begin{lem}
    Assume that $\mbb{E}\abs{\rv_1}^2 < \infty$. Then, the random variable $V_n(\omega) := \int_0^1 \abs{F_n^{-1}(t; \omega) - F^{-1}(t)} \d t$ is well-defined, and $\mbb{E}[V_n^2] < \infty$.
    \label{lem:Dnl2}
\end{lem}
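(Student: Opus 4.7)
The plan is to dispatch the two assertions in sequence using only elementary tools, since this is a bookkeeping lemma rather than a substantive inequality. First, for well-definedness, I would argue measurability of $V_n$ in $\omega$ via Tonelli applied to the integrand $(t,\omega) \mapsto \abs{F_n^{-1}(t;\omega) - F^{-1}(t)}$; measurability in $\omega$ for each fixed $t$ is already guaranteed by the preamble's convention that the infimum defining $F_n^{-1}(t;\omega)$ is taken over $z \in \mathbb{Q}$, realizing it as a countable infimum of measurable functions (and joint measurability then follows because $F_n^{-1}(\cdot;\omega)$ is monotone in $t$). Pointwise finiteness of $V_n(\omega)$ follows from the triangle inequality
\begin{align*}
V_n(\omega) \leq \int_0^1 \abs{F_n^{-1}(t;\omega)} \d t + \int_0^1 \abs{F^{-1}(t)} \d t,
\end{align*}
together with two invocations of \Cref{prop:bobkohvA.1}: applied to the empirical distribution it gives $\int_0^1 \abs{F_n^{-1}(t;\omega)} \d t = \frac{1}{n}\sum_{i=1}^n \abs{\rv_i(\omega)} < \infty$ (real-valuedness of the $\rv_i$), and applied to $F$ it gives $\int_0^1 \abs{F^{-1}(t)} \d t = \mbb{E}\abs{\rv_1}$, which is finite since $\mbb{E}\abs{\rv_1}^2 < \infty$ implies $\mbb{E}\abs{\rv_1} < \infty$ by Jensen's inequality.

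For the second claim, I would square the pointwise bound $V_n(\omega) \leq \frac{1}{n}\sum_{i=1}^n \abs{\rv_i(\omega)} + \mbb{E}\abs{\rv_1}$ already available from the previous step, and apply $(a+b)^2 \leq 2a^2 + 2b^2$ together with Jensen's inequality $\bigl(\frac{1}{n}\sum_{i=1}^n \abs{\rv_i}\bigr)^2 \leq \frac{1}{n}\sum_{i=1}^n \rv_i^2$ to obtain
\begin{align*}
V_n^2(\omega) \leq \frac{2}{n}\sum_{i=1}^n \rv_i^2(\omega) + 2\bigl(\mbb{E}\abs{\rv_1}\bigr)^2.
\end{align*}
Taking expectations and using that the $\rv_i$ are identically distributed yields $\mbb{E}[V_n^2] \leq 2\mbb{E}\rv_1^2 + 2(\mbb{E}\abs{\rv_1})^2 < \infty$ by the second-moment hypothesis.

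I do not anticipate any real obstacle: the statement is a technical prerequisite whose sole purpose is to certify $V_n \in L^2(\prob)$ so that the Minkowski/$L^2$-triangle inequality step invoked in the proof sketch of \Cref{prop:consistency} is legitimate. The only subtle point is the joint measurability of $F_n^{-1}$, which the paper's rational-infimum convention already handles; everything else is an application of \Cref{prop:bobkohvA.1} and Jensen's inequality.
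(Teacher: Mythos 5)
Your proof is correct and uses the same basic ingredients as the paper — Jensen's inequality, the elementary bound $(a \pm b)^2 \le 2a^2 + 2b^2$, and \Cref{prop:bobkohvA.1} — but you arrange them in a different order. The paper applies Jensen directly to the squared integral (using that Lebesgue measure on $[0,1]$ has unit mass, so $\abs{\int_0^1 f}^2 \le \int_0^1 \abs{f}^2$), then splits $\abs{F_n^{-1} - F^{-1}}^2 \le 2\abs{F_n^{-1}}^2 + 2\abs{F^{-1}}^2$ and invokes \Cref{prop:bobkohvA.1} with $p = 2$, landing on $V_n^2(\omega) \le \frac{2}{n}\sum_i \abs{\rv_i(\omega)}^2 + 2\mbb{E}\abs{\rv_1}^2$ and hence $\mbb{E}[V_n^2] \le 4\mbb{E}\abs{\rv_1}^2$. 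You instead split first by the triangle inequality, apply \Cref{prop:bobkohvA.1} with $p = 1$ to obtain the pointwise bound $V_n \le \frac{1}{n}\sum_i \abs{\rv_i} + \mbb{E}\abs{\rv_1}$, then square and apply Jensen to the discrete average; this yields the marginally tighter $\mbb{E}[V_n^2] \le 2\mbb{E}\rv_1^2 + 2\left(\mbb{E}\abs{\rv_1}\right)^2$, though the constant is immaterial since all that matters is finiteness. A small advantage of your write-up is that you explicitly address joint measurability of $(t, \omega) \mapsto F_n^{-1}(t; \omega)$ (via the rational-infimum convention and monotonicity in $t$, enabling Tonelli), a point the paper leaves implicit. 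Both routes are legitimate and equally elementary; neither reveals a gap in the other.
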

\begin{proof}
    For any particular realization $\omega \in \Omega$, write
    \begin{align*}
        \abs{V_n(\omega)}^2 &= \abs{\int_0^1 \abs{F_n^{-1}(t; \omega) - F^{-1}(t)} \d t}^2\\
        &\leq \int_0^1 \abs{F_n^{-1}(t; \omega) - F^{-1}(t)}^2 \d t &\text{Jensen's inequality}\\
        &\leq 2\int_0^1 \abs{F_n^{-1}(t; \omega)}^2 \d t + 2\int_0^1 \abs{F^{-1}(t)}^2 \d t\\
        &= \frac{2}{n} \sum_{i=1}^n \abs{\rv_i(\omega)}^2 + 2\mbb{E}\abs{\rv_1}^2 & \text{\Cref{prop:bobkohvA.1}}.
    \end{align*}
    Then, $\mbb{E}[V_n^2] \leq 4\mbb{E}\abs{\rv_1}^2$, completing the proof.
\end{proof}
The next lemma applies the above theorem to bound the expected distance between empirical and population quantile functions in terms of the population CDF, expanding upon remarks made on page 20 of \citet{Bobkov2019OnedimensionalEM}.
\begin{lem}
    Assume that $\mbb{E}\abs{\rv_1}^2 < \infty$. Then,
    \begin{align*}
        \sqrt{\E{}{\p{\int_0^1 \abs{F_n^{-1}(t) - F^{-1}(t)} \d t}^2}} \leq \frac{1}{\sqrt{n}} \int_{-\infty}^{+\infty} \sqrt{F(z) (1 - F(z))} \d z,
    \end{align*}
    where the right hand side is permitted to be infinite.
    \label{lem:J1}
\end{lem}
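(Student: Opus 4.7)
The plan is to reduce the quantile-function integral to a CDF integral via \Cref{thm:bobkhov2.10}, then push the expectation inside the spatial integral using Minkowski's integral inequality, and finally evaluate the resulting pointwise second moment using the binomial distribution of $n F_n(z)$.

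First I would observe that for almost every $\omega \in \Omega$ the empirical measure $\frac{1}{n}\sum_{i=1}^n \delta_{\rv_i(\omega)}$ has finite first moment (since $\frac{1}{n}\sum_i |\rv_i(\omega)| < \infty$), and the population measure has finite first moment by the assumption $\mbb{E}|\rv_1|^2 < \infty$. Applying \Cref{thm:bobkhov2.10} pathwise to $F_n(\cdot;\omega)$ and $F$ yields
\begin{align*}
\int_0^1 \abs{F_n^{-1}(t;\omega) - F^{-1}(t)}\, \d t \;=\; \int_{-\infty}^{+\infty} \abs{F_n(z;\omega) - F(z)}\, \d z
\end{align*}
for a.e.\ $\omega$. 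Squaring both sides and taking expectations, and using \Cref{lem:Dnl2} to ensure all quantities are finite, reduces the target inequality to bounding $\sqrt{\mbb{E}\, (\int |F_n(z) - F(z)|\, \d z)^2}$.

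Next I would apply Minkowski's integral inequality (i.e., the triangle inequality in $L^2(\prob)$ applied to the integral over $z$), which gives
\begin{align*}
\sqrt{\mbb{E}\Big(\int_{-\infty}^{+\infty} \abs{F_n(z) - F(z)}\, \d z\Big)^{\!2}} \;\leq\; \int_{-\infty}^{+\infty} \sqrt{\mbb{E}\,\abs{F_n(z) - F(z)}^2}\, \d z.
\end{align*}
Here I should briefly justify Fubini/Minkowski: the joint measurability of $(z,\omega)\mapsto |F_n(z;\omega)-F(z)|$ follows from right-continuity of $F_n$ and $F$ in $z$, and the finiteness of the outer expectation is ensured by \Cref{lem:Dnl2}.

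Then I would compute the pointwise second moment: for each fixed $z \in \reals$, the random variable $n F_n(z) = \sum_{i=1}^n \I{(-\infty, z]}{\rv_i}$ is $\Binom(n, F(z))$-distributed, so
\begin{align*}
\mbb{E}\,\abs{F_n(z) - F(z)}^2 \;=\; \var\bigl(F_n(z)\bigr) \;=\; \frac{F(z)(1 - F(z))}{n}.
\end{align*}
Substituting back yields exactly the claimed bound $\frac{1}{\sqrt{n}}\int_{-\infty}^{+\infty} \sqrt{F(z)(1-F(z))}\, \d z$.

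I do not expect serious obstacles here; the only subtlety is the bookkeeping around measurability and the pathwise application of \Cref{thm:bobkhov2.10}, which uses that the empirical measure has finite first moment for almost every $\omega$. The right-hand side may be infinite, but the inequality is still meaningful; control of this integral by the moment condition $\mbb{E}|\rv|^p < \infty$ is deferred to the next step of the proof of \Cref{prop:consistency}.
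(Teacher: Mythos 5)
Your proposal is correct and follows essentially the same route as the paper's proof: pathwise conversion from quantile to CDF integrals via \Cref{thm:bobkhov2.10}, Minkowski's integral inequality (triangle inequality in $L^2(\prob)$) to move the expectation inside the $z$-integral, and the pointwise variance computation using $nF_n(z) \sim \Binom(n, F(z))$, with \Cref{lem:Dnl2} providing the finiteness needed to justify the manipulations. The extra attention to the a.e.\ finiteness of the empirical first moment and to joint measurability is a fine addition but does not change the argument.
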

\begin{proof}
    By \Cref{lem:Dnl2} we have that 
    \begin{align}
        \E{}{\p{\int_0^1 \abs{F_n^{-1}(t) - F^{-1}(t)} \d t}^2} = \E{}{V_n^2} < \infty,
        \label{eqn:l2}
    \end{align}
    so that the left hand side is well-defined and finite. By \Cref{thm:bobkhov2.10}, we also have that $\int_0^1 \abs{F_n^{-1}(t; \omega) - F^{-1}(t)} \d t = \int_{-\infty}^\infty \abs{F_n(z; \omega) - F(z)} \d z$, indicating with \eqref{eqn:l2} that the random variable
    \begin{align*}
        \omega \mapsto \int_{-\infty}^\infty \abs{F_n(z; \omega) - F(z)} \d z \in L^2(\prob).
    \end{align*}
    By the triangle inequality on $L^2(\prob)$, we have that
    \begin{align*}
        \sqrt{\E{}{\p{\int_0^1 \abs{F_n^{-1}(t) - F^{-1}(t)} \d t}^2}} &= \sqrt{\E{}{\p{\int_{-\infty}^\infty \abs{F_n(z) - F(z)} \d z}^2}}\\
        &= \norm{\int_{-\infty}^\infty \abs{F_n(z) - F(z)} \d z}_{L^2(\prob)}\\
        &\leq \int_{-\infty}^\infty \norm{\abs{F_n(z) - F(z)}}_{L^2(\prob)} \d z\\
        &= \int_{-\infty}^\infty \sqrt{\E{}{\abs{F_n(z) - F(z)}^2}} \d z.
    \end{align*}
    Next, notice that for fixed $z \in \R$, $nF_n(z) \sim \Binom(n, F(z))$, so that
    \begin{align*}
        \E{}{\abs{F_n(z) - F(z)}^2} = \Var{F_n(z)} = \frac{F(z) \p{1 - F(z)}}{n},
    \end{align*}
    completing the proof.
\end{proof}
The final lemma bounds the right hand side of \Cref{lem:J1}.
\begin{lem}
Consider a random variable $\rv$ with c.d.f. $F$. If $\rv$ satisfies $\E{}{\abs{\rv}^p} < \infty$ for $p > 2$, then
\[
\int_{-\infty}^{+\infty} \sqrt{F(z)(1-F(z))}\d z \leq \p{\frac{2p}{p-2}} \E{}{\abs{\rv}^p}^{\frac{1}{p}}.
\]
\label{lem:J1bound}
\end{lem}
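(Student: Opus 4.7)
The plan is to split the integral at $\pm A$ for some cutoff $A > 0$ (to be optimized at the end), and to bound the integrand $\sqrt{F(z)(1-F(z))}$ in three ways on the pieces $(-\infty, -A]$, $(-A, A)$, and $[A, \infty)$. On the central piece I would use the trivial bound $\sqrt{F(z)(1-F(z))} \leq 1/2$ (since $F(z)(1-F(z)) \leq 1/4$), whose integral over an interval of length $2A$ contributes at most $A$.

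For the right tail $z \geq A$, bound $\sqrt{F(z)(1-F(z))} \leq \sqrt{1-F(z)}$ and apply Markov's inequality to $|\rv|^p$:
\[
1 - F(z) = \P{}{\rv > z} \leq \P{}{|\rv| > z} \leq \E{}{|\rv|^p}/z^p.
\]
Integrating $\sqrt{1-F(z)} \leq \E{}{|\rv|^p}^{1/2} z^{-p/2}$ over $[A, \infty)$, which is finite precisely because $p > 2$, yields a tail contribution of $\frac{2 \E{}{|\rv|^p}^{1/2}}{p-2} A^{1-p/2}$. A symmetric argument on $(-\infty, -A]$, using $\sqrt{F(z)(1-F(z))} \leq \sqrt{F(z)}$ together with $F(z) \leq \E{}{|\rv|^p}/|z|^p$, contributes the same amount.

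Summing the three pieces produces the upper bound $f(A) := A + \frac{4 \E{}{|\rv|^p}^{1/2}}{p-2} A^{1-p/2}$, which I would then minimize over $A > 0$. Solving $f'(A) = 0$ gives $A^* = 2^{2/p} \E{}{|\rv|^p}^{1/p}$, and substituting back simplifies (using $(A^*)^{1-p/2} = A^*/(2\E{}{|\rv|^p}^{1/2})$) to $f(A^*) = \frac{p}{p-2} \cdot 2^{2/p} \E{}{|\rv|^p}^{1/p}$. Finally, since $p > 2$ forces $2^{2/p} \leq 2$, we arrive at the claimed bound $\frac{2p}{p-2} \E{}{|\rv|^p}^{1/p}$. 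There is no serious obstacle; the only mildly delicate aspect is choosing the central-interval bound ($1/2$ rather than $1$) so that the constants align to give exactly $\frac{2p}{p-2}$.
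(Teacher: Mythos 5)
Your proof is correct and follows essentially the same route as the paper: split the integral at a threshold, bound the central piece trivially, and apply Markov's inequality to $\abs{\rv}^p$ on the tails. The paper folds the two tails into $2\int_0^\infty \sqrt{\P{}{\abs{\rv}\geq z}}\,\d z$ and takes the natural cutoff $c = \E{}{\abs{\rv}^p}^{1/p}$ where the Markov bound crosses $1$, which yields $\tfrac{2pc}{p-2}$ directly; you keep the two tails separate, sharpen the central bound to $1/2$, and optimize the cutoff, reaching the slightly tighter intermediate constant $2^{2/p}p/(p-2)$ before relaxing $2^{2/p}\le 2$.
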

\begin{proof}
    By definition, $\int_{-\infty}^{\infty} \sqrt{F(z)(1-F(z))}\d z = \lim_{a \rightarrow +\infty} \int_{-a}^a \sqrt{F(z)(1-F(z))}\d z$. Denote $c = \E{}{\abs{\rv}^p}^{1/p}$. For any constant $a \geq c > 0$, we have 
    \begin{align*}
        \int_{-a}^a \sqrt{F(z)(1-F(z))}\d z & = 
         \int_{-a}^0 \sqrt{F(z)(1-F(z))}\d z + \int_0^a \sqrt{F(z)(1-F(z))}\d z \\
        & \leq  \int_{-a}^0 \sqrt{F(z)}\d z + \int_0^a \sqrt{(1-F(z))}\d z \\
        & = \int_{-a}^0 \sqrt{\mathbb{P}(\rv\leq z)} \d z + \int_0^a \sqrt{\mathbb{P}(\rv > z)}\d z \\
        & = \int_{0}^a \sqrt{\mathbb{P}(\rv\leq -z)} \d z + \int_0^a \sqrt{\mathbb{P}(\rv > z)}\d z \\
        & \leq 2 \int_0^a \sqrt{\mathbb{P}(|\rv|\geq z)}\d z\\
        & \leq 2 \int_0^a \sqrt{\min\br{1, \frac{c^p}{z^p}}}\d z & \text{Markov's inequality}\\
        &= 2\p{c + c^{p/2}\int_c^a z^{-p/2} \d z}.
    \end{align*}
    Computing the integral yields
    \begin{align*}
        \int_c^a z^{-p/2} \d z = \frac{a^{1-p/2} - c^{1-p/2} }{1 - p/2}.
    \end{align*}
    Because $1 - p/2 < 0$, we have that $\lim_{a \rightarrow \infty} \int_c^a z^{-p/2} \d z = \frac{c^{1 - p/2}}{p/2 - 1}$. 
    Combining the steps above, we obtain
    \begin{align*}
        \int_{-\infty}^{\infty} \sqrt{F(z)(1-F(z))}\d z &= \lim_{a \rightarrow \infty} \int_{-a}^a \sqrt{F(z)(1-F(z))}\d z\\
        &\leq \lim_{a \rightarrow \infty} 2\p{c + c^{p/2}\int_c^a z^{-p/2} \d z}\\
        &= 2c\p{1 + \frac{1}{p/2 - 1}}\\
        &= \frac{2pc}{p-2}.
    \end{align*}
    Resubstituting $c = \E{}{\abs{\rv}^p}^{1/p}$ completes the proof.
\end{proof}

We now have the tools to prove \Cref{prop:consistency}.
\begin{proof}[Proof of \cref{prop:consistency}]
    For a particular realization $\rv_1(\omega), ..., \rv_n(\omega)$, we have that 
    \begin{align*}
        \abs{\L_s\sbr{F_n(\cdot; \omega)} - \L_s\sbr{F}} &= \abs{\int_0^1 s(t) \cdot F_n^{-1}(t; \omega) \d t - \int_0^1 s(t) \cdot F^{-1}(t) \d t}\\
        &= \abs{\int_0^1 s(t) \cdot \p{F_n^{-1}(t; \omega) - F^{-1}(t)} \d t} \\
        &\leq \sup_{t \in (0, 1)}\abs{s(t)} \cdot \int_0^1 \abs{F_n^{-1}(t; \omega) - F^{-1}(t)} \d t \\
        &= \norm{s}_\infty \cdot \int_0^1 \abs{F_n^{-1}(t; \omega) - F^{-1}(t)} \d t.
    \end{align*}
    We then take the square and expectation.
    \begin{align*}
        \mbb{E}\abs{\L_s\sbr{F_n} - \L_s\sbr{F}}^2 &\leq \norm{s}_\infty^2 \cdot \E{}{\p{\int_0^1 \abs{F_n^{-1}(t) - F^{-1}(t)} \d t}^2}\\
        &\leq \frac{\norm{s}_\infty^2}{n} \cdot \p{\int_{-\infty}^{+\infty} \sqrt{F(z) (1 - F(z))} \d z}^2 &\text{\Cref{lem:J1}}\\
        &\leq \frac{\norm{s}_\infty^2}{n}  \p{\frac{2p}{p-2}}^2 \E{}{\abs{\rv}^p}^{\frac{2}{p}}. &\text{\Cref{lem:J1bound}}
    \end{align*}
\end{proof}
}

\section{Proof of Convexity and Subdifferential Properties}
\label{sec:a:cvxity}
Recall the expression of the empirical L-statistics
\begin{align}
    \msc{R}_\sigma(w) := \sum_{i=1}^n \sigma_i \ell_{(i)}(w).
\end{align}
where $0 \leq \sigma_1 \leq \dots \leq \sigma_n$, $\sum_{i=1}^n \sigma_i = 1$, each $\ell_i: \R^d \rightarrow \R$ is a function representing performance of model weights $w$ on training instance $i$, and for a vector $\lossval \in \reals^n$, we denote $\lossval_{(1)}\leq \ldots \leq \lossval_{(n)}$ its ordered coefficients. We recall \cref{prop:cvxity} and present its proof. 
 
\cvxity*

\begin{proof}
Since the coefficients $\sigma = (\sigma_1, \ldots, \sigma_n)$ are non-decreasing, the function $\msc{R}_\sigma$ can be written as the maximum over all possible permutations of the losses, i.e.,
\begin{align*}
    \msc{R}_\sigma(w) = \max_{\pi \in \Pi_n} \sum_{i=1}^n \sigma_i \ell_{\pi(i)}(w) = \max_{\pi \in \Pi_n} \sum_{i=1}^n \sigma_{\pi^{-1}(i)} \ell_{i}(w),
\end{align*}
where $\Pi_n$ is the set of permutations of $\{1, \ldots, n\}$. For any $\pi \in \Pi_n$, $w \mapsto \sum_{i=1}^n \sigma_{\pi^{-1}(i)} \ell_{i}(w)$ is a convex combination of convex functions, hence it is convex. Since the pointwise maximum of convex functions is convex, $\msc{R}_\sigma$ is convex. 

The pointwise maximum  $f = \max_{j=1, ..., N} f_j$ of $N$ convex functions $\{f_j\}_{j=1}^N$ has a subdifferential defined by
$\partial f(x) = \conv{\bigcup_{j \in \argmax\br{f_j(x)}} \partial f_j(x)}$ where $\conv(A)$ denotes the convex hull of a set $A$~\citep[Lemma 4.4.1]{HiriartUrruty1993Convex}. Letting $N = n!$, consider the finite set of convex functions $\{f_\pi: \pi \in \Pi_n\}$ with each $f_{\pi}: w\mapsto \sum_{i=1}^n \sigma_i \ell_{\pi(i)}(w)$. The subdifferential of $f_{\pi}$ is $\partial f_\pi(w) = \sum_{i=1}^n \sigma_i \partial \ell_{\pi(i)}(w)$, where the sum is to be understood as a Minkowski sum of sets~\cite[Lemma 4.4.1]{HiriartUrruty1993Convex}.
Hence, the subdifferential of $\msc{R}_\sigma(w)$ is defined by 
\begin{align*}
    \partial \msc{R}_\sigma(w) & = \conv\p{\bigcup_{\pi \in \argmax f_\pi(w)} \partial f_{\pi}(w)}  = \conv \p{\bigcup_{\pi \in \argsort\p{\ell(w)}} \sum_{i=1}^n \sigma_i \partial \ell_{\pi(i)}(w)},
\end{align*}
where we used that $\argsort\p{\ell(w)} = \argmax_\pi \sum_{i=1}^n \sigma_i\ell_{\pi(i)}(w)$ when $\sigma_1 \leq \dots \le \sigma_n$.

Finally if all $\ell_i$ are $G$-Lipschitz continuous, i.e., have $G$-bounded subgradients, then for any permutation $\pi$ of $\{1, \ldots, n\}$, any $g \in \sum_{i=1}^n \sigma_i \partial \ell_{\pi(i)}(w)$ is bounded by $G$ as a convex combination of $G$-bounded vectors. Hence any $g \in \partial \msc{R}_\sigma(w)$ is bounded by $G$ as a convex combination of $G$-bounded vectors. The function $\msc{R}_\sigma$ is then convex with subgradients bounded by $G$, hence it is $G$-Lipschitz continuous. 

\end{proof}

\section{Biased SGD Convergence Analysis}
\label{sec:sgd:proof}
Recall the regularized $L$-risk considered
\begin{align}
    \min_{w \in \R^d} \msc{R}_\sigma(w) + \frac{\reg}{2} \|w\|_2^2 \quad \mbox{for} \ \msc{R}_\sigma(w) = \sum_{i=1}^n \sigma_i \ell_{(i)}(w),
\end{align}
where $\reg>0$,  $0 \leq \sigma_1 \leq \dots \leq \sigma_n$, $\sum_{i=1}^n \sigma_i = 1$,  each $\ell_i: \R^d \rightarrow \R$ is a function representing performance of model weights $w$ on training instance $i$ and $\ell_{(1)}(w) \leq ... \leq \ell_{(n)}(w)$. In the following we consider $G$-Lipschtiz continuous convex losses.
The aim of this section is to prove the following proposition. 
\sgd*
\new{
The proof will proceed in three parts, which comprise the next three subsections. The final subsection proves the main result.
\begin{enumerate}
    \item The convexity and $G$-Lipschitz continuity of the losses is used to analyze the convergence of~\cref{alg:sgd} on a surrogate objective for which there is no bias.
    \item We then establish a uniform bias bound between the surrogate function and the original function over a set $\W \sse \R^d$.
    \item We then relate the suboptimality gap of the surrogate objective to the suboptimality of the original objective by using the bias bound and establishing that the iterates and minimizers are contained in $\W$.
\end{enumerate}

\subsection{SGD Analysis for Convex, Lipschitz Loss and Strongly Convex Regularizer}

We present first a generic convergence result for stochastic subgradient algorithms such as \cref{alg:sgd} applied to regularized non-smooth functions in \cref{lem:nonsmooth_sgd}, which is a minor adaptation of \citet[Theorem 5.5]{Bach2023Learning}.
\begin{lemma}\label{lem:nonsmooth_sgd}
	Consider a $G$-Lipschitz continuous, convex function $f:\reals^d \rightarrow \reals$ and a regularization $\reg\|\cdot\|_2^2$ for $\reg>0$ defining an objective of the form $f_\reg(w) = f(w) + \reg \|w\|_2^2/2$. Given a an initial point $w^{(0)} = 0 \in \reals^d$ consider iterates of the form, for $t \geq 0$,  
	$$
	w\pow{t+1} = w\pow{t} - \eta\pow{t} (v\pow{t} + \reg  w \pow t),
	$$
	for $v\pow{t}$ a random vector whose distribution depends only on $w\pow{t}$ satisfying $\E{}{v\pow{t}\mid w\pow{t}} \in \partial f(w\pow{t})$ and $\norm{v\pow{t}}_2 \leq G$, and $\eta\pow{t} > 0$.
	Consider outputting after $T$ iterations the estimate  $\bar w \pow T =\frac{1}{T} \sum_{t=0}^{T-1}   w\pow t$. Provided that  $\eta\pow{t} = \frac{1}{\reg (t+1)}$, this estimate satisfies
	\begin{align*}
		\mathbb{E}[f_\reg(\bar w\pow T)] - f_\reg(w^*) & \leq \frac{2G^2(1 + \log T)}{\mu T},
	\end{align*}
	for $w^*=\argmin_{w \in \R^d}f_\reg(w)$, where the expectation is taken over the sequence $w\pow{1}, \ldots, w\pow{T-1}$. 
\end{lemma}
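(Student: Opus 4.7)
The plan is to view the update $w\pow{t+1} = w\pow{t} - \eta\pow{t}(v\pow{t} + \mu w\pow{t})$ as plain stochastic subgradient descent on the $\mu$-strongly convex objective $f_\mu$ with step size $\eta\pow{t} = 1/(\mu(t+1))$, and to combine the textbook strongly-convex descent argument with a deterministic bound on the iterate norm obtained for free from this specific step-size choice.

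First, I would check that $g\pow{t} := v\pow{t} + \mu w\pow{t}$ is an unbiased stochastic subgradient of $f_\mu$ at $w\pow{t}$, since $\E{}{v\pow{t} \mid w\pow{t}} \in \partial f(w\pow{t})$ and $\partial(\mu\|\cdot\|_2^2/2)(w) = \{\mu w\}$. Next, I would bound $\|g\pow{t}\|_2$ almost surely. Substituting $\eta\pow{t} = 1/(\mu(t+1))$ into the recursion gives $w\pow{t+1} = \tfrac{t}{t+1}\, w\pow{t} - \tfrac{1}{\mu(t+1)}\, v\pow{t}$, which unrolls from $w\pow{0} = 0$ to the closed form $w\pow{t} = -\tfrac{1}{\mu t}\sum_{s=0}^{t-1} v\pow{s}$. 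The hypothesis $\|v\pow{s}\|_2 \leq G$ then gives $\|w\pow{t}\|_2 \leq G/\mu$, and hence $\|g\pow{t}\|_2 \leq 2G$ almost surely.

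With this boundedness in hand, I would apply the standard one-step inequality: expand $\|w\pow{t+1} - w^*\|_2^2$, take conditional expectation, and use $\mu$-strong convexity of $f_\mu$ in the form $\langle g_\mu, w - w^*\rangle \geq f_\mu(w) - f_\mu(w^*) + \tfrac{\mu}{2}\|w - w^*\|_2^2$ for any $g_\mu \in \partial f_\mu(w)$, applied to $g_\mu = \E{}{g\pow{t} \mid w\pow{t}}$, to obtain
\begin{align*}
\E{}{\|w\pow{t+1} - w^*\|_2^2} \leq \tfrac{t}{t+1}\,\E{}{\|w\pow{t} - w^*\|_2^2} - \tfrac{2}{\mu(t+1)}\,\E{}{f_\mu(w\pow{t}) - f_\mu(w^*)} + \tfrac{4G^2}{\mu^2(t+1)^2}.
\end{align*}
Multiplying by $\mu(t+1)/2$ turns the left-hand side into $a_{t+1} := \tfrac{\mu(t+1)}{2}\E{}{\|w\pow{t+1} - w^*\|_2^2}$ and the first right-hand-side term into $a_t$, so summation over $t = 0, \ldots, T-1$ telescopes and, using $a_0 = 0$ and $a_T \geq 0$, yields $\sum_{t=0}^{T-1} \E{}{f_\mu(w\pow{t}) - f_\mu(w^*)} \leq \tfrac{2G^2}{\mu}\sum_{t=0}^{T-1} \tfrac{1}{t+1} \leq \tfrac{2G^2}{\mu}(1 + \log T)$. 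A final application of Jensen's inequality to $\bar w\pow{T} = \tfrac{1}{T}\sum_{t=0}^{T-1} w\pow{t}$ then delivers the claimed $\tfrac{2G^2(1 + \log T)}{\mu T}$ bound.

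The only mildly delicate point is boundedness of the iterates without any projection step; the step size $\eta\pow{t} = 1/(\mu(t+1))$ is exactly the choice that makes the contraction factor $1 - \mu\eta\pow{t}$ equal $t/(t+1)$, so that $w\pow{t}$ is a plain average of the bounded past stochastic subgradients $v\pow{s}$ and the uniform bound $\|w\pow{t}\|_2 \leq G/\mu$ follows transparently. Everything else is a routine application of the classical strongly-convex SGD template, and the constant $2$ in the rate comes from pairing $\|g\pow{t}\|_2^2 \leq 4G^2$ with the factor $\eta\pow{t}{}^2/2$ in the telescoped sum.
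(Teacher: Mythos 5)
Your proof is correct and follows essentially the same route as the paper's: one-step expansion of $\|w^{(t+1)} - w^*\|_2^2$, a uniform bound $\|w^{(t)}\|_2 \le G/\mu$ to control $\|v^{(t)} + \mu w^{(t)}\|_2 \le 2G$, the $\mu$-strong-convexity inequality after conditional expectation, a telescoping sum exploiting $1 - \mu\eta^{(t)} = t/(t+1)$, and Jensen on the averaged iterate. The only cosmetic difference is that you establish the iterate bound by unrolling the recursion to the closed form $w^{(t)} = -\frac{1}{\mu t}\sum_{s=0}^{t-1} v^{(s)}$, whereas the paper phrases the same fact as an induction via the convex-combination representation $w^{(t+1)} = (1-\eta^{(t)}\mu)w^{(t)} + \eta^{(t)}\mu(-v^{(t)}/\mu)$; these are the same observation in different clothing.
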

\begin{proof}
    Write the expansion
    \begin{align}
        \norm{w\pow{t+1} - w^*}_2^2 =\norm{w\pow{t} - w^*}_2^2 - 2\stepsize\pow{t} \ip{v\pow{t} + \reg w\pow{t}, w\pow{t} - w^*} + (\stepsize\pow{t})^2 \norm{v\pow{t} + \reg w\pow{t}}_2^2.
        \label{eqn:expansion}
    \end{align}
    Because $\norm{w\pow{0}}_2 = \norm{0}_2 \leq G / \mu$, $\|v\pow t\|\leq G$ and $w\pow {t+1}$ can be expressed as a convex combination of $w\pow t$ and $-v\pow t/\mu$, that is,
    \begin{align*}
        w\pow{t+1} = (1 - \eta\pow{t} \reg) w\pow{t} + \stepsize\pow{t} \reg \p{-\frac{1}{\reg} v\pow{t}},
    \end{align*}
    we can conclude by induction that $\norm{w\pow{t}}_2 \leq G / \mu$ for all $t = 0, \ldots, T-1$ when $\eta\pow{t} \reg \leq 1$, which is satisfied for our choice of $\eta\pow{t}$. Thus, $(\stepsize\pow{t})^2 \norm{v\pow{t} + \reg w\pow{t}}_2^2 \leq (\stepsize\pow{t})^2 \cdot 4G^2$. Taking the conditional expectation of $v\pow{t}$ given $w\pow{t}$ of~\eqref{eqn:expansion} yields
    \begin{align}
        \E{}{\norm{w\pow{t+1} - w^*}_2^2 \mid w\pow{t}} \leq \norm{w\pow{t} - w^*}_2^2 - 2\stepsize\pow{t} \ip{\E{}{v\pow{t} \mid w\pow{t}} + \reg w\pow{t}, w\pow{t} - w^*} + (\stepsize\pow{t})^2 4G^2.
        \label{eqn:expansion2}
    \end{align}
    Because $\E{}{v\pow{t} \mid w\pow{t}} \in \partial f(w\pow{t})$, we have that $\E{}{v\pow{t} \mid w\pow{t}} + \reg w\pow{t} \in \partial f_\reg(w\pow{t})$, and by the $\reg$-strong convexity of $f_\mu$, we have
    \begin{align*}
        - \ip{\E{}{v\pow{t} \mid w\pow{t}} + \reg w\pow{t}, w\pow{t} - w^*} \leq  -\p{f_\reg(w\pow{t}) - f_\reg(w^*)} - \frac{\reg}{2}\norm{w\pow{t} - w^*}_2^2,
    \end{align*}
    which, substituted into~\eqref{eqn:expansion2} gives
    \begin{align*}
        \E{}{\norm{w\pow{t+1} - w^*}_2^2 \mid w\pow{t}} &\leq \p{1 - \eta\pow{t} \mu}\norm{w\pow{t} - w^*}_2^2 - 2\stepsize\pow{t} \p{f_\reg(w\pow{t}) - f_\reg(w^*)} + (\stepsize\pow{t})^2 4G^2\\
        \implies f_\reg(w\pow{t}) - f_\reg(w^*) &\leq \frac{1}{2}\p{\p{\frac{1}{\eta\pow{t}} - \reg}\norm{w\pow{t} - w^*}_2^2 - \frac{1}{\stepsize\pow{t}}  \E{}{\norm{w\pow{t+1} - w^*}_2^2 \mid w\pow{t}}} + 2\eta\pow{t} G^2\\
        &= \frac{1}{2}\p{\reg t\norm{w\pow{t} - w^*}_2^2 - \reg (t+1)  \E{}{\norm{w\pow{t+1} - w^*}_2^2 \mid w\pow{t}}} + \frac{2G^2}{\mu (t+1)}.
    \end{align*}
    Take the expectation over the entire sequence $w\pow{0}, \ldots, w\pow{t}$, sum over $t = 0, \ldots, T - 1$, and divide by $T$ to get
    \begin{align*}
        \E{}{\frac{1}{T} \sum_{t=0}^{T-1} f_\reg(w\pow{t})} - f_\reg(w^*) \leq \frac{1}{T}\sum_{t=0}^{T-1}\frac{2G^2}{\mu (t+1)} \leq \frac{2G^2(1 + \log T)}{\mu T},
    \end{align*}
    which combined with $f_\reg(\bar{w}\pow{T}) \leq \frac{1}{T} \sum_{t=0}^{T-1} f_\reg(w\pow{t})$ completes the proof.
\end{proof}

\subsection{Bias Control}

In this section, we control the bias term appearing in the convergence analysis. The following lemmas consider a set of real numbers, representing losses at a single $w \in \R^d$. Let $x_1, \ldots, x_n \in \R$ be call the \emph{full batch}, and let $X_1, \ldots X_m$ be a random sample selected uniformly \emph{without} replacement from $\{x_1, \ldots, x_n\}$, called the \emph{minibatch}. Let
\begin{align*}
    F_n(x) := \frac{1}{n} \sum_{i=1}^n \I{}{x_i \leq x} \text{ and } F_{n, m}(x) := \frac{1}{m} \sum_{j=1}^m \I{}{X_j \leq x} 
\end{align*}
be the empirical CDFs, and let 
\begin{align*}
    F^{-1}_n(t) := \inf\br{x: F_n(x) \geq t} \text{ and } F_{n, m}(t) :=  \inf\br{x: F_{n, m}(x) \geq t}.
\end{align*}
be the empirical quantile functions of the full batch and minibatch respectively. Similarly, let
\begin{align*}
    \mu_n :=\sum_{i=1}^n \delta_{x_i} \text{ and } \mu_{n, m} = \sum_{j=1}^m \delta_{X_j}
\end{align*}
be the empirical measures of the full batch and minibatch, respectively, with $\delta_x$ indicating a Dirac point mass at $x$. Let $u(t) := \I{(0, 1)}{t}$ be the uniform spectrum. 
\begin{lem}
We have that for any $n\in \mathbb{N}$, $m\leq n$, 
    \begin{align*}
        \E{}{\L_{u}[F_{n, m}]} = \L_u[F_n],
    \end{align*}
    where the expectation is taken over the sampling of $X_1, \ldots, X_m$ without replacement.
    \label{lem:unbiased_spectrum_main}
\end{lem}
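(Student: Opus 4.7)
The plan is to reduce both sides of the identity to ordinary sample means and then apply linearity of expectation, exploiting the fact that each marginal of a uniform-without-replacement sample is itself uniform over the population.

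First I would unpack the $L$-functional with the uniform spectrum $u(t) = \I{(0,1)}{t}$. Since the empirical quantile function $F_{n,m}^{-1}$ is piecewise constant, taking the value $X_{(j)}$ on the interval $((j-1)/m, j/m]$, we have
\begin{align*}
    \L_u[F_{n,m}] = \int_0^1 F_{n,m}^{-1}(t)\,\d t = \frac{1}{m} \sum_{j=1}^m X_{(j)} = \frac{1}{m} \sum_{j=1}^m X_j,
\end{align*}
where the last equality holds because summing the order statistics is the same as summing the sample in its original order. An identical computation gives $\L_u[F_n] = \frac{1}{n} \sum_{i=1}^n x_i$.

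Next I would invoke linearity of expectation and compute $\E{}{X_j}$ for a fixed index $j$. Under uniform sampling without replacement of $m$ items from $\{x_1, \ldots, x_n\}$, the marginal distribution of each $X_j$ is uniform over $\{x_1, \ldots, x_n\}$ (by symmetry of the joint distribution under the coordinate permutation that interchanges positions). Hence $\E{}{X_j} = \frac{1}{n}\sum_{i=1}^n x_i$ for every $j \in \{1, \ldots, m\}$, and therefore
\begin{align*}
    \E{}{\L_u[F_{n,m}]} = \frac{1}{m}\sum_{j=1}^m \E{}{X_j} = \frac{1}{n}\sum_{i=1}^n x_i = \L_u[F_n],
\end{align*}
which is the desired equality.

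There is no real obstacle here; the only subtlety worth being careful about is justifying the marginal uniformity of each $X_j$ under sampling without replacement, which follows from a standard symmetry argument. The content of the lemma is essentially that the uniform-spectrum $L$-functional collapses to the sample mean, for which minibatching without replacement is trivially unbiased; the bias that will arise for non-uniform spectra in later lemmas comes entirely from the interaction of the spectrum with the \emph{ordering} of the minibatch.
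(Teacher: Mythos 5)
Your proof is correct and follows essentially the same route as the paper: both arguments first observe that the uniform-spectrum $L$-functional collapses to the sample mean, i.e., $\L_u[F_{n,m}] = \frac{1}{m}\sum_j X_{(j)} = \frac{1}{m}\sum_j X_j$, and then show that its expectation equals $\frac{1}{n}\sum_i x_i$. The only cosmetic difference is that the paper computes the expectation by explicitly averaging over all $\binom{n}{m}$ size-$m$ subsets, whereas you invoke linearity of expectation plus marginal uniformity of each $X_j$; these are two presentations of the same symmetry argument.
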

\begin{proof}
    \begin{align*}
        \E{}{\L_{u}[F_{n, m}]} = \E{}{\frac{1}{m}\sum_{j=1}^m X_{(j)}} = \E{}{\frac{1}{m}\sum_{j=1}^m X_{j}} = \frac{1}{\binom{n}{m}} \sum_{i_1 < \ldots < i_m} \frac{1}{m} \sum_{j=1}^m x_{i_j} = \frac{1}{n} \sum_{i=1}^n x_i = \L_{u}[F_{n}].
    \end{align*}
\end{proof}

Recall the definition of the $q$-Wasserstein distance (raised to the $q$-th power) between real-valued probability measures $\mu$ and $\nu$ with finite $q$-th moment, given by
\begin{align*}
    W_q^q(\mu, \nu) := \inf_{\gamma \in \Pi(\mu, \nu)} \int_\R \abs{x - y}^q \d \gamma(x, y),
\end{align*}
where $\Pi(\mu, \nu)$ is the set of couplings (or joint distributions) with marginals being $\mu$ and $\nu$.
\begin{lem}
    For any $q \geq 1$, $n\in \mathbb{N}, m\leq n$ the $q$-Wasserstein distance between the empirical measures satisy
    \begin{align*}
        W^q_q(\mu_{n, m}, \mu_n) \leq \frac{n-m}{n} \p{x_{(n)} - x_{(1)}}^q,
    \end{align*}
    where $x_{(n)} = \max_i x_i$ and $x_{(1)} = \min_i x_i$.
    \label{lem:wasserstein_bound}
\end{lem}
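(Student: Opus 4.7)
The plan is to upper bound the $q$-Wasserstein distance by exhibiting an explicit coupling $\gamma \in \Pi(\mu_{n,m}, \mu_n)$ and controlling its transport cost. Let $I \subseteq \{1, \ldots, n\}$ with $|I|=m$ denote the indices that are drawn into the minibatch, so that $\mu_{n,m} = \frac{1}{m}\sum_{i \in I} \delta_{x_i}$. The core intuition is: pair each minibatch atom with itself in the full batch at zero cost, which exhausts $m/n$ of the mass; the remaining $(n-m)/n$ mass has to be moved a distance of at most $x_{(n)} - x_{(1)}$, yielding the stated bound.

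Concretely, I would define $\gamma$ on $\R \times \R$ as the sum of a diagonal part and an off-diagonal part. For the diagonal part, place mass $1/n$ at $(x_i, x_i)$ for each $i \in I$; this contributes zero to the transport cost and accounts for total mass $m/n$. After subtracting this, the residual first marginal assigns $1/m - 1/n = (n-m)/(nm)$ to each $x_i$, $i \in I$, while the residual second marginal assigns $1/n$ to each $x_j$, $j \notin I$; both residual marginals have total mass $(n-m)/n$. Complete $\gamma$ by taking any coupling of these two residual marginals --- for instance, the product of the normalized residuals rescaled by $(n-m)/n$.

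A quick check confirms $\gamma \in \Pi(\mu_{n,m}, \mu_n)$: each $x_i$ with $i\in I$ receives first-marginal mass $1/n + (n-m)/(nm) = 1/m$, and in the second marginal each $x_j$ receives mass $1/n$ (from the diagonal part when $j \in I$, and from the off-diagonal part when $j \notin I$). Since every atom of $\gamma$ lies in $[x_{(1)}, x_{(n)}]^2$, the absolute difference on its support is bounded by $x_{(n)} - x_{(1)}$, so
\[
\int \abs{x-y}^q \d\gamma(x,y) \;\leq\; 0 \cdot (m/n) + (x_{(n)} - x_{(1)})^q \cdot (n-m)/n.
\]
Taking the infimum over couplings gives the claim.

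There is no substantive obstacle here; the only care required is the marginal bookkeeping for the residual part. An alternative route would be to invoke the one-dimensional Monge formula $W_q^q(\mu,\nu) = \int_0^1 \abs{F_\mu^{-1}(t) - F_\nu^{-1}(t)}^q \d t$ and argue that the two quantile functions agree on a set of Lebesgue measure at least $m/n$ after a suitable alignment, but the direct coupling construction above is more transparent and avoids the case analysis that arises from repeated $x_i$'s.
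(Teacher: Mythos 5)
Your proof is correct, and it takes a genuinely (if modestly) different route from the paper's. The paper constructs its coupling by assuming, explicitly ``for simplicity,'' that $m$ divides $n$, partitioning the full batch into $m$ blocks $S_1,\ldots,S_m$ of size $n/m$ with $x_j\in S_j$, and placing mass $1/n$ at each $(x_i,X_j)$ for $i\in S_j$; the cost then comes from the $n/m-1$ non-diagonal atoms in each block. Your construction decomposes the coupling into a diagonal part that fixes each minibatch index in place (at mass $1/n$, total $m/n$, zero cost) plus an arbitrary coupling of the two residual marginals, each of total mass $(n-m)/n$. The bookkeeping is equally simple, but your version does not require the divisibility assumption, so it actually covers the full range $m\le n$ stated in the lemma whereas the paper's displayed argument does not. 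The essential idea --- keep $m/n$ of the mass in place, move the remaining $(n-m)/n$ a distance at most $x_{(n)}-x_{(1)}$ --- is the same in both. One small caveat worth being explicit about: you should treat the diagonal atoms as indexed by $i\in I$ rather than by values $x_i$ (so that repeated loss values do not complicate the marginal check), which you already gesture at in your closing remark.
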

\begin{proof}
    We assume for simplicity that $m$ divides $n$. By definition, we have that
    \begin{align*}
        W^q_q(\mu_{n, m}, \mu_n) = \inf_{\pi \in \Pi(\mu_n, \mu_{n, m})} \sum_{i=1}^n \sum_{j=1}^m \abs{x_i - X_j}^{q} \pi(x_i, X_j),
    \end{align*}
    where $\Pi(\mu_n, \mu_{n, m})$ is the set of couplings, or joint distributions with marginals $\mu_n$ and $\mu_{n, m}$. We construct one such coupling that achieves the bound in the claim. Without loss of generality, assume that $X_1 = x_1, \ldots, X_m = x_m$. Then, partition $x_1, \ldots, x_n$ into $m$ non-overlapping subsets $S_1, \ldots, S_m$ each of size $\frac{n}{m}$, such that $x_j \in S_j$ for $j = 1, \ldots, m$. Define $\pi^*$ by
    \begin{align*}
        \pi^*(y, y') = \begin{cases}
        \frac{1}{n} &\text{ if } y = x_i, \ y' = X_j, \ i \in S_j\\
        0 &\text{ otherwise.}
        \end{cases}
    \end{align*}
    This is indeed a coupling, and
    \begin{align*}
        W^q_q(F_{n, m}, F_n) &\leq \sum_{i=1}^n \sum_{j=1}^m \abs{x_i - X_j}^{q} \pi^*(x_i, X_j) = \sum_{j=1}^m \sum_{i \in S_j, i \neq j}^n \abs{x_i - X_j}^{q} \pi^*(x_i, X_j)\\
        &\leq \sum_{j=1}^m \p{\frac{n}{m} - 1} \abs{x_{(n)} - x_{(1)}}^q \frac{1}{n} = \frac{n-m}{n} \p{x_{(n)} - x_{(1)}}^q.
    \end{align*}
\end{proof}
Next, we can give the bias bound for an $L$-functional applied to $\mu_{n, m}$ and $\mu_n$.
\begin{lem}
    \label{lem:mini_bias}
    Let $\L_s$ be an $L$-functional with spectrum $s$, and let $u$ be the uniform spectrum. Then,
    \begin{align*}
        \abs{\E{}{\L_{s}[F_{n, m}]} - \L_{s}[F_{n}]} \leq \frac{n-m}{n} \norm{s - u}_\infty \p{x_{(n)} - x_{(1)}},
    \end{align*}
    where the expectation is taken over the sampling of $X_1, \ldots, X_m$ without replacement, and $\norm{s - u}_\infty = \sup_{t \in (0, 1)} \abs{s(t) - u(t)}$.
\end{lem}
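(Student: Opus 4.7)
The plan is to reduce the bound to the $1$-Wasserstein distance between $\mu_{n,m}$ and $\mu_n$ by exploiting the unbiasedness of the uniform $L$-functional (Lemma \ref{lem:unbiased_spectrum_main}) to introduce a ``centering'' trick that replaces $s$ by $s-u$ inside the integral.

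First, I would use the quantile-function representation
\[
  \L_s[F_{n,m}] - \L_s[F_n] = \int_0^1 s(t)\bigl(F_{n,m}^{-1}(t) - F_n^{-1}(t)\bigr)\,\d t,
\]
which makes the linearity in $s$ manifest. Taking expectations and invoking Lemma \ref{lem:unbiased_spectrum_main}, the same identity applied to $u$ yields
\[
  \E{}{\int_0^1 u(t)\bigl(F_{n,m}^{-1}(t) - F_n^{-1}(t)\bigr)\,\d t} = 0,
\]
so I can subtract this zero term to obtain
\[
  \E{}{\L_s[F_{n,m}]} - \L_s[F_n] = \E{}{\int_0^1 \bigl(s(t) - u(t)\bigr)\bigl(F_{n,m}^{-1}(t) - F_n^{-1}(t)\bigr)\,\d t}.
\]

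Next I would apply Jensen's inequality and H\"older's inequality (with the $L^\infty$--$L^1$ pairing) to pull $\norm{s-u}_\infty$ outside:
\[
  \abs{\E{}{\L_s[F_{n,m}]} - \L_s[F_n]} \leq \norm{s-u}_\infty \cdot \E{}{\int_0^1 \abs{F_{n,m}^{-1}(t) - F_n^{-1}(t)}\,\d t}.
\]
By the identification of the $L^1$ distance of quantile functions with the $1$-Wasserstein distance (Theorem \ref{thm:bobkhov2.10}, already invoked elsewhere in the paper), the inner integral equals $W_1(\mu_{n,m}, \mu_n)$.

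Finally, applying Lemma \ref{lem:wasserstein_bound} with $q=1$ gives the deterministic upper bound $W_1(\mu_{n,m}, \mu_n) \leq \tfrac{n-m}{n}(x_{(n)} - x_{(1)})$, which survives the expectation since the right-hand side depends only on the fixed full-batch values. Combining yields the claimed inequality. I do not foresee any real obstacle: the only subtle point is noticing that the unbiasedness of $\L_u$ allows the centering step, without which the plain H\"older bound would use $\norm{s}_\infty$ instead of the tighter $\norm{s-u}_\infty$ that correctly captures the vanishing bias in the ERM case $s=u$.
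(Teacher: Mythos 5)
Your proof matches the paper's argument step for step: the centering via Lemma~\ref{lem:unbiased_spectrum_main} to replace $s$ by $s-u$, Jensen's inequality, H\"older's inequality to extract $\norm{s-u}_\infty$, the identification of the $L^1$ quantile-distance with $W_1(\mu_{n,m},\mu_n)$ via Theorem~\ref{thm:bobkhov2.10}, and the deterministic Wasserstein bound from Lemma~\ref{lem:wasserstein_bound}. The reasoning is correct and essentially identical to the paper's proof.
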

\begin{proof}
    Write
    \begin{align*}
        \abs{\E{}{\L_{s}[F_{n, m}]} - \L_{s}[F_{n}]} &= \abs{\E{}{\L_{s}[F_{n, m}]} - \L_{s}[F_{n}] - (\E{}{\L_{u}[F_{n, m}]} - \L_{u}[F_{n}])} &\text{\Cref{lem:unbiased_spectrum_main}}\\
        &= \abs{\E{}{\int_0^1 (s(t) - u(t)) \cdot \p{F^{-1}_{n, m}(t) - F^{-1}_n(t) \d t}}}\\
        &\leq \E{}{\abs{\int_0^1 (s(t) - u(t)) \cdot \p{F^{-1}_{n, m}(t) - F^{-1}_n(t) \d t}}} &\text{Jensen's inequality}\\
        &\leq \norm{s - u}_\infty \E{}{\norm{F^{-1}_{n, m} - F^{-1}_n}_1} &\text{H\"older's inequality}\\
        &= \norm{s - u}_\infty \E{}{W_1\p{\mu_{n, m}, \mu_n}}. &\text{\Cref{thm:bobkhov2.10}}
    \end{align*}
    Next, by \Cref{lem:wasserstein_bound}, we have that $W_1\p{\mu_{n, m}, \mu_n} \leq (n-m)/n \p{x_{(n)} - x_{(1)}}$, which combined with the above yields
    \begin{align*}
        \abs{\E{}{\L_{s}[F_{n, m}]} - \L_{s}[F_{n}]} &\leq \norm{s - u}_\infty \p{\frac{n-m}{n}} \p{x_{(n)} - x_{(1)}},
    \end{align*}
    which completes the proof.
\end{proof}

Next, consider the situation in which $x_1 = \ell_1(w), \ldots, x_n = \ell_n(w)$, i.e., the loss functions for each data point evaluated at $w$, with $X_j = \ell_{i_j}(w)$ for the randomly samplied minibatch $(i_1, \ldots, i_m)$. By defining 
\begin{align*}
    F_n(x; w) := \frac{1}{n} \sum_{i=1}^n \I{}{\ell_i(w) \leq x} \text{ and } F_{n, m}(x; w) := \frac{1}{m} \sum_{j=1}^m \I{}{\ell_{i_j}(w) \leq x},
\end{align*}
we have that
\begin{align*}
    \primobj(w) = \L_s[F_n(\cdot; w)] \text{ and } \mbobj(w) := \L_s[F_{n, m}(\cdot; w)].
\end{align*}
This gives the following corollary.
\begin{corollary}
    \label{coro:bias}
    Given a minibatch $(i_1, \ldots, i_m)$ sampled uniformly randomly without replacement from $[n]$, define $\mbobj(w) := \sum_{j=1}^m \hat{\sigma}_{j} \ell_{i_{(j)}}(w)$, where $\ell_{i_{(1)}}(w) \leq \ldots \leq \ell_{i_{(m)}}(w)$ are the minibatch order statistics.
    It holds that
    \begin{align*}
        \sup_{w: \norm{w}_2 \leq G/\mu} \abs{\E{}{\mbobj(w) \mid w} - \primobj(w)} \leq \norm{s - u}_\infty \p{\frac{n-m}{n}} B,
    \end{align*}
    where $B = \sup_{w : \norm{w} \leq G/\mu} \p{\ell_{(n)}(w) - \ell_{(1)}(w)} < \infty$.
\end{corollary}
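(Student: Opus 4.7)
The plan is to apply Lemma~\ref{lem:mini_bias} pointwise at each $w$ in the ball $\Wcal := \{w \in \R^d : \norm{w}_2 \leq G/\reg\}$, instantiating the real numbers of that lemma as $x_i = \ell_i(w)$ for $i = 1, \ldots, n$ and the minibatch sample as $X_j = \ell_{i_j}(w)$ for $j = 1, \ldots, m$. The first task is to rewrite $\primobj(w)$ and the conditional expectation of $\mbobj(w)$ as $L$-functionals of the corresponding empirical measures, so that the lemma applies directly.

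Using that $\sigma_i = \int_{(i-1)/n}^{i/n} s(t)\,\D t$ and $\hat{\sigma}_j = \int_{(j-1)/m}^{j/m} s(t)\,\D t$, and that the empirical quantile functions $F_n^{-1}$ of $\{\ell_i(w)\}_{i=1}^n$ and $F_{n,m}^{-1}$ of $\{\ell_{i_j}(w)\}_{j=1}^m$ are piecewise constant on the partitions $\{((i-1)/n, i/n)\}$ and $\{((j-1)/m, j/m)\}$ respectively, at each fixed $w$ we obtain
\begin{align*}
\primobj(w) &= \sum_{i=1}^n \sigma_i \ell_{(i)}(w) = \int_0^1 s(t) F_n^{-1}(t)\,\D t = \L_s[F_n], \\
\mbobj(w) &= \sum_{j=1}^m \hat{\sigma}_j \ell_{i_{(j)}}(w) = \int_0^1 s(t) F_{n,m}^{-1}(t)\,\D t = \L_s[F_{n,m}].
\end{align*}
Lemma~\ref{lem:mini_bias} then yields, at every $w$,
\begin{align*}
\abs{\E{}{\mbobj(w) \mid w} - \primobj(w)} \leq \frac{n-m}{n}\,\norm{s-u}_\infty\, \p{\ell_{(n)}(w) - \ell_{(1)}(w)},
\end{align*}
and taking the supremum over $w \in \Wcal$ produces exactly the stated inequality with $B = \sup_{w \in \Wcal}\p{\ell_{(n)}(w) - \ell_{(1)}(w)}$.

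It remains to verify $B < \infty$. Since each $\ell_i$ is $G$-Lipschitz, for $w \in \Wcal$ we have $\abs{\ell_i(w) - \ell_i(0)} \leq G\norm{w}_2 \leq G^2/\reg$, so each $\ell_i$ is uniformly bounded on $\Wcal$, and hence so is $\ell_{(n)}(w) - \ell_{(1)}(w)$. The only genuine content is the bookkeeping that rewrites the discrete weighted sums as $L$-functionals of empirical measures so that Lemma~\ref{lem:mini_bias} can be invoked; after that the inequality and the finiteness of $B$ are immediate, so I do not anticipate any substantial obstacle.
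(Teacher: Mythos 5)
Your proposal is correct and follows essentially the same route as the paper: identify $\primobj(w)=\L_s[F_n(\cdot;w)]$ and $\mbobj(w)=\L_s[F_{n,m}(\cdot;w)]$, apply Lemma~\ref{lem:mini_bias} pointwise at each $w$, and take the supremum over the ball. The only cosmetic difference is that you justify $B<\infty$ via the $G$-Lipschitz bound on each $\ell_i$ over the ball, while the paper invokes continuity of $\ell_{(n)}-\ell_{(1)}$ on a compact set; both are valid.
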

\begin{proof}
    For any $w \in \R^d$, we have that
    \begin{align*}
        \abs{\E{}{\mbobj(w)\mid w} - \primobj(w)} &= \abs{\E{}{\L_{s}[F_{n, m}(\cdot; w)]} - \L_{s}[F_{n}(\cdot; w)]} \leq \norm{s - u}_\infty \p{\frac{n-m}{n}} \p{\ell_{(n)}(w) - \ell_{(1)}(w)} & \text{\Cref{lem:mini_bias}}.
    \end{align*}
    Take the supremum for $\{w: \norm{w}_2 \leq G/\mu\}$ on both sides. Because each $\ell_i$ is continuous, so is $\ell_{(n)}(w) = \ell_{(1)}(w)$. Because the supremum is taken over a compact set (the ball of radius $G/\mu$), it is finite.
\end{proof}
}

\subsection{Proof of Main Result}
\new{
\sgd*
\begin{proof}
    Define the surrogate function
    \begin{align*}
        \surrobj(w) := \E{}{\mbobj(w) \mid w} = \E{}{\sum_{j=1}^m \hat{\sigma}_{j} \ell_{i_{(j)}}(w) \mid w},
    \end{align*}
    where the expectation is taken over sampling the minibatch $(i_1, \ldots, i_m)$. The expectation is over a discrete distribution on a finite set, so it is a well-defined function of $w$. We now establish the properties of $\surrobj$ required to apply the generic analysis of the stochastic subgradient method given in \Cref{lem:nonsmooth_sgd} with $f = \surrobj$ and $v\pow{t} = \sum_{j=1}^m \hat{\sigma}_j \grad \ell_{i_{(j)}}(w)$. This choice of $\surrobj$ is clearly convex, and because $v\pow{t}$ is a subgradient of $\mbobj$, $\E{}{v\pow{t} \mid w\pow{t}}$ is a subgradient of $\surrobj$. Given the $G$-Lipschitzness of each $\ell_i$, we have that $\mbobj$ is also $G$-Lipschitz by \Cref{prop:cvxity}, and thus so is $\surrobj$. Then, letting $\surrobjreg(w) := \surrobj(w) + \frac{\reg}{2}\norm{w}_2^2$, applying \Cref{lem:nonsmooth_sgd} gives
    \begin{align}
        \E{}{\surrobjreg\p{\bar{w}\pow{T}}} - \surrobjreg\p{\bar{w}^*} \leq \frac{2G^2(1 + \log T)}{\reg T},
        \label{eqn:1}
    \end{align}
    where $\bar{w}^* = \argmin_{w \in \R^d}\surrobjreg\p{w}$. We must now pass this result regarding $\surrobjreg$ to a similar one regarding $\primobjreg$. Define $\W := \{w \in \R^d : \norm{w}_2 \leq G/\mu\}$. We first establish that $w^*, \bar{w}^* \in \W$, so that
    \begin{align}
        \min_{w \in \R^d} \surrobjreg\p{w} = \min_{w \in \W}\surrobjreg\p{w}
        \text{ and }
        \min_{w \in \R^d} \primobjreg\p{w} = \min_{w \in \W}\primobjreg\p{w}.
        \label{eqn:min_ball1}
    \end{align}
    The subdifferentials of $\surrobjreg$ and $\primobjreg$ are given by
    \begin{align*}
        \partial \surrobjreg(w) &= \partial \surrobj(w) + \reg w = \{g + \reg w: g \in \partial \surrobj(w)\},\\
        \partial \primobjreg(w) &= \partial \primobj(w) + \reg w = \{g + \reg w: g \in \partial \primobj(w)\}.
    \end{align*}
    Then, by optimality,
    \begin{align*}
        g^* + \reg w^* = 0 \text{ and } \bar{g} + \reg \bar{w}^* = 0
    \end{align*}
    for some $g^* \in \partial \primobj(w^*)$ and $\bar{g} \in \partial \surrobj(\bar{w}^*)$, yielding that $w^*, \bar{w}^* \in \W$ because $\norm{g^*}_2, \norm{\bar{g}}_2 \leq G$.
    Next, note that $\norm{w\pow{t}}_2 \leq G/\mu$ for any $t = 0, \ldots, T$. To see this, observe that $\norm{w\pow{0}}_2 = \norm{0}_2 \leq G/\mu$, and if $\norm{w\pow{t}}_2 \leq G/\mu$, then
    \begin{align*}
        \norm{w\pow{t+1}}_2 &= \norm{(1 - \eta\pow{t} \reg) w\pow{t} + \eta\pow{t} \reg \p{-\frac{1}{\reg} v\pow{t}}}_2 \leq (1 - \eta\pow{t} \reg) \norm{w\pow{t}}_2 + \eta\pow{t} \reg \norm{\frac{1}{\reg} v\pow{t}}_2 \leq \frac{G}{\mu}
    \end{align*}  
    if $\eta\pow{t} \reg \leq 1$, which is satisfied for $\eta\pow{t} = 1/(\reg t)$. By convexity of $\W$, this means that $\bar{w}\pow{T} \in \W$. Given that $\bar{w}^*, w^*,$ and $\bar{w}\pow{T}$ are contained in $\W$, if we can show that $\primobj$ and $\surrobj$ are close on this set, then optimizing $\surrobj$ should also result in a near-optimal value of $\primobj$. Assume there existed $\delta > 0$ such that
    $
        \sup_{w \in \W} \abs{\primobj(w) - \surrobj(w)} = \delta < \infty.
    $
    Then, $\primobj\p{\bar{w}\pow{T}} \leq \surrobj\p{\bar{w}\pow{T}} + \delta$, and, for any $w \in \W$,
    \begin{align}
        \primobj\p{w} \geq \surrobj\p{w} - \delta \implies 
        -\min_{w \in \W} \p{\primobj\p{w} + \frac{\reg}{2}\norm{w}_2^2} \leq -\min_{w \in \W} \p{\surrobj\p{w} + \frac{\reg}{2}\norm{w}_2^2} + \delta,
        \label{eqn:bias}
    \end{align}
    giving
    \begin{align*}
        \E{}{\primobjreg\p{\bar{w}\pow{T}}} - \min_{w \in \R^d} \primobjreg\p{w}
        &= \E{}{\primobjreg\p{\bar{w}\pow{T}}} - \min_{w \in \W}\primobjreg\p{w} & \eqref{eqn:min_ball1}\\
        &\leq 2\delta + \E{}{\surrobjreg\p{\bar{w}\pow{T}}} - \min_{w \in \W}\surrobjreg\p{w} &\eqref{eqn:bias}\\
        &= 2\delta + \E{}{\surrobjreg\p{\bar{w}\pow{T}}} - \min_{w \in \R^d} \surrobjreg\p{w} &\eqref{eqn:min_ball1}\\
        &\leq 2\delta + \frac{2G^2(1 + \log T)}{\reg T}. & \text{\eqref{eqn:1}}
    \end{align*}
    Establishing the existence of such a $\delta$ and showing $2\delta \leq 2 C_s B \p{\frac{n-m}{n}}$ completes the proof. This is accomplished by \Cref{coro:bias}, which gives
    \begin{align*}
        2\delta =  \sup_{w \in \W} \abs{\surrobj(w) - \primobj(w)} =
        \sup_{w: \norm{w}_2 \leq G/\mu} \abs{\E{}{\mbobj(w) \mid w} - \primobj(w)} 
        \leq \norm{s - u}_\infty \p{\frac{n-m}{n}} B,
    \end{align*}
    the desired result.
\end{proof}
}

\section{Smoothing the Empirical Spectral Risk Measure}
\label{sec:a:smoothing}
Recall that we consider objectives of the form (ignoring the regularization part)
\begin{equation}\label{eq:lstat_obj}
 \primobj(w) = \sum_{i=1}^{n} \sigma_i \ell_{(i)}(w) ,
\end{equation}
where $\sigma_i$ are the weights associated with the discretization of a spectral risk, i.e., $\sigma_i =	\int_{\frac{i-1}{n}}^{\frac{i}{n}} s(t) \D t$ for $s: [0, 1] \rightarrow [0, +\infty)$ non-decreasing and such that $\int_0^1s(t) \D t=1$.

We can rewrite problems~\eqref{eq:lstat_obj} as minimizing a composition
\begin{align}
	\primobj(w) = \lstat(\ell(w))\
	\mbox{with} \  \lstat(\lossval) =  \sum_{i=1}^{n}\sigma_{i} \lossval_{(i)} \ \mbox{and}\ \ \ell(w) = (\ell_1(w), \ldots, \ell_n(w)). \nonumber
\end{align}
Since the coefficients $\sigma_i$ are not decreasing, the outer function $\lstat$ can be expressed as 
\[
\lstat(\lossval) = \max_{\dualvar \in \mathcal{P}(\sigma)} \dualvar^\top \lossval
\]
where  $\mathcal{P}(\sigma) = \{ \dualvar = \Pi \sigma: \Pi\ones = \ones, \Pi^\top \ones = \ones, \Pi \in [0,1]^{n\times n}  \}$ is the permutahedron associated with the weights $\sigma$. 

The function $\lstat$ is non-differentiable at points $\lossval$ with ties. However, smooth approximations of $\lstat$ can be defined by means of a strongly convex regularizer as presented by~\citet{nesterov2005smooth}.
For $\Omega$ strongly convex w.r.t. some norm $\|\cdot\|$ and $\nu \geq 0$, we consider a smooth approximation of  $\lstat$ defined by
\[
\lstat_{\nu \Omega} (\lossval) := \max_{\dualvar \in \mathcal{P}(\sigma)} \left\{\lossval^\top \dualvar - \nu \Omega(\dualvar)\right\}, \quad \nabla \lstat_{\nu \Omega}(\lossval) = \argmax_{\dualvar \in \mathcal{P}(\sigma)} \left\{\lossval^\top \dualvar - \nu \Omega(\dualvar)\right\}.
\]
By standard convex duality arguments, the smooth approximation of $\lstat$ can also be written as the inf-convolution of $\lstat$ with the convex conjugate of $\nu \Omega$, i.e, 
\begin{equation}\label{eq:smoothing_primal}
    \lstat_{\nu \Omega} (\lossval) = \min_{\lossvalaux \in \mathbb{R}^n} \left\{\lstat(\lossvalaux) +  \nu \Omega^\star((\lossval-\lossvalaux)/\nu) \right\}, \ 	\nabla \lstat_{\nu \Omega} (\lossval)  = \nabla \Omega^\star((\lossval-\lossvalaux^*)/\nu) 	\ \mbox{for}\  \lossvalaux^*= \argmin_{\lossvalaux \in \mathbb{R}^n} \left\{\lstat(\lossvalaux) +  \nu \Omega^\star((\lossval-\lossvalaux)/\nu) \right\}.	
\end{equation}
We consider the surrogate objective  defined by
\begin{equation}\label{eq:smooth_lstat_obj}
 \primobjsmooth{\nu \Omega}(w) := \lstat_{\nu \Omega}(\ell(w)), \mbox{with}\ \lstat_{\nu \Omega} (\lossval) := \max_{\dualvar \in \mathcal{P}(\sigma)} \left\{\lossval^\top \dualvar - \nu \Omega(\dualvar)\right\} \ \mbox{and} \ \ell(w) = (\ell_1(w), \ldots, \ell_n(w)).
\end{equation}
Note that for any $\nu \geq 0$, if the losses $\ell_i$ are convex, then the surrogate objective $ \primobjsmooth{\nu \Omega}$ is also convex.

In the following, we recall the smoothness properties of the smooth approximation in \Cref{lem:smooth_prop}, we present the approximation incurred by the smoothing in terms of the spectrum in \Cref{lem:smooth_approx}. We give the implementation of the gradient evaluation of the smooth approximation for appropriate choices of regularizers in Sec.~\ref{ssec:pav}.

\subsection{Smoothing Properties and Approximation Bounds}
We recall below the smoothness properties of $\lstat_{\nu \Omega} (\lossval)$, see, e.g.,~\citep{nesterov2005smooth, beck2012smoothing} for detailed proofs. 

\begin{lemma}[Smoothing properties]\label{lem:smooth_prop}
	For any $\Omega$ and $\nu >0$, the smoothed $\lstat_{\nu \Omega}$ is $\|\sigma\|_p$-Lipschitz continuous w.r.t. $\|\cdot\|_p$ for any $p \in \{1, \ldots\} \cup \{+\infty\}$.
	For any $\Omega$ that is $1$-strongly convex w.r.t. $\|\cdot\|$,  the smoothed $\lstat_{\nu \Omega}$  is $1/\nu$ smooth w.r.t. to the dual norm $\|\cdot\|_*$, i.e., for any $\lossval, \lossval' \in \reals^n$, 
	$
	\|\nabla \lstat_{\nu \Omega}(\lossval) - \nabla \lstat_{\nu \Omega}(\lossval')\| \leq \|\lossval-\lossval'\|_*/\nu.
	$
\end{lemma}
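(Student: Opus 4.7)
The approach is to exploit the dual max-form representation of $\lstat_{\nu\Omega}$ together with Danskin's theorem and the standard Nesterov-smoothing toolbox. Define $\dualvar^*(\lossval) \in \argmax_{\dualvar \in \Pcal(\sigma)} \{\lossval^\top \dualvar - \nu \Omega(\dualvar)\}$. Since $\Pcal(\sigma)$ is compact and the integrand is concave in $\dualvar$, Danskin's theorem identifies any such $\dualvar^*(\lossval)$ with an element of $\partial \lstat_{\nu\Omega}(\lossval)$; under strong convexity of $\Omega$ the argmax is unique, upgrading this to $\nabla \lstat_{\nu\Omega}(\lossval) = \dualvar^*(\lossval)$.

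For the Lipschitz claim, the first step is to observe that $\Pcal(\sigma) = \conv\{\Pi \sigma : \Pi \text{ a permutation}\}$. Because $\|\cdot\|_p$ is permutation-invariant, $\|\Pi \sigma\|_p = \|\sigma\|_p$ for every $\Pi$, and the triangle inequality then yields $\|\dualvar\|_p \leq \|\sigma\|_p$ for all $\dualvar \in \Pcal(\sigma)$. The Lipschitz bound itself then comes from testing a suboptimal dual variable: $\lstat_{\nu\Omega}(\lossval_1) - \lstat_{\nu\Omega}(\lossval_2) \leq (\lossval_1 - \lossval_2)^\top \dualvar^*(\lossval_1)$ by using $\dualvar^*(\lossval_1)$ inside the maximum defining $\lstat_{\nu\Omega}(\lossval_2)$. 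Applying Hölder's inequality and the norm bound on elements of $\Pcal(\sigma)$ delivers the desired Lipschitz estimate.

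For the smoothness claim, let $\dualvar_i = \dualvar^*(\lossval_i)$ for $i = 1, 2$. The first-order optimality conditions for the constrained concave maximization give subgradients $g_i \in \partial \Omega(\dualvar_i)$ such that
\[
\langle \lossval_i - \nu g_i,\, \dualvar - \dualvar_i \rangle \leq 0 \quad \text{for all } \dualvar \in \Pcal(\sigma).
\]
Specializing $\dualvar = \dualvar_2$ in the $i=1$ inequality and $\dualvar = \dualvar_1$ in the $i=2$ inequality, then summing, produces $\langle \lossval_1 - \lossval_2, \dualvar_2 - \dualvar_1 \rangle + \nu \langle g_1 - g_2, \dualvar_1 - \dualvar_2 \rangle \leq 0$. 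Strong convexity of $\Omega$ with constant $1$ w.r.t.\ $\|\cdot\|$ implies the monotonicity $\langle g_1 - g_2, \dualvar_1 - \dualvar_2 \rangle \geq \|\dualvar_1 - \dualvar_2\|^2$, so applying Hölder on the remaining inner product yields
\[
\nu \|\dualvar_1 - \dualvar_2\|^2 \leq \langle \lossval_1 - \lossval_2, \dualvar_1 - \dualvar_2 \rangle \leq \|\lossval_1 - \lossval_2\|_* \, \|\dualvar_1 - \dualvar_2\|.
\]
Dividing out one factor of $\|\dualvar_1 - \dualvar_2\|$ gives $\|\dualvar_1 - \dualvar_2\| \leq \|\lossval_1 - \lossval_2\|_*/\nu$, which is exactly the claimed $1/\nu$-smoothness once we recall that $\nabla \lstat_{\nu\Omega}(\lossval_i) = \dualvar_i$.

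The main obstacle, such as there is one, is a clean application of Danskin's theorem in its constrained form and a careful formulation of the first-order optimality conditions with the constraint $\dualvar \in \Pcal(\sigma)$ (including verifying that the subgradient of $\Omega$ admitted by the optimality condition is the one producing the strong-monotonicity inequality). Both steps are standard; beyond them the proof is essentially textbook Nesterov smoothing transported to the permutohedron, with permutation-invariance of the $\ell_p$-norm providing the one geometric input particular to this setting.
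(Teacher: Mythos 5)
Your argument is correct and it is exactly the standard Nesterov-smoothing proof. The paper does not prove this lemma itself; it only remarks ``see, e.g., Nesterov (2005) and Beck \& Teboulle (2012) for detailed proofs,'' so you are supplying the omitted content: Danskin's theorem to identify $\nabla\lstat_{\nu\Omega}(\lossval)$ with the (unique, by strong convexity) maximizer $\dualvar^*(\lossval)$; Birkhoff--von Neumann plus permutation-invariance of $\|\cdot\|_p$ to bound $\|\dualvar\|_p\le\|\sigma\|_p$ for all $\dualvar\in\Pcal(\sigma)$; a one-sided test with the suboptimal dual variable for the Lipschitz estimate; and summation of the two variational inequalities together with strong monotonicity of $\partial\Omega$ for the $1/\nu$-smoothness. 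All of these steps check out.

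One point of care, though it originates in the lemma statement rather than in your proof: H\"older gives
$|\lstat_{\nu\Omega}(\lossval_1)-\lstat_{\nu\Omega}(\lossval_2)|\le\|\dualvar^*(\lossval_1)\|_p\,\|\lossval_1-\lossval_2\|_q$ with $1/p+1/q=1$, so the bound $\|\dualvar\|_p\le\|\sigma\|_p$ delivers $\|\sigma\|_p$-Lipschitz continuity with respect to the \emph{conjugate} norm $\|\cdot\|_q$, not $\|\cdot\|_p$; the two coincide only at $p=2$. Your phrase ``delivers the desired Lipschitz estimate'' quietly inherits this mismatch, so it is worth making the norm pairing explicit if you write out the details. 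The paper only applies this lemma with the Euclidean choice $\Omega=\Omega_2$ and $p=2$, for which there is no discrepancy, but a literal reading for general $p$ would require $\|\sigma\|_q\le\|\sigma\|_p$, which fails in general.
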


Usual examples are  $\Omega=\|\cdot\|_2^2$ or $\Omega = H: \dualvar \mapsto  \sum_{i=1}^n \dualvar_i \ln \dualvar_i$, for which we have access to $\lstat_{\nu \Omega}$ by isotonic regression, e.g.,~\citep{lim2016efficient, blondel2020fast}.
In the following, we consider such functions centered around their minimizers in $\mathcal{P}(\sigma)$ to get tighter approximation bounds.  Namely, we define $\avg = \ones/n \in \mathcal{P}(\sigma)$  and we consider 
\begin{gather}
    \label{eq:chosen_smoothing1}
	\Omega_1(\dualvar) = D_H(\dualvar;\avg) := H(\dualvar) - H(\avg) - \nabla H(\avg) ^\top (\dualvar-\avg) = \sum_{i=1}^n \dualvar_i \log(n\dualvar_i), \quad\mbox{and}  \\
	\Omega_2(\dualvar) = \frac{1}{2} \|\dualvar-\avg\|_2^2. \label{eq:chosen_smoothing}
\end{gather}
We have that $\Omega_1$  is $1$-strongly convex w.r.t. $\|\cdot\|_1$ and $\Omega_2$ is $1$-strongly convex w.r.t. $\|\cdot\|_2$. 

We can then consider optimizing the surrogate objective for $\Omega \in \{\Omega_1, \Omega_2\}$,  defined by
$
 \primobjsmooth{\nu \Omega}(w) := \lstat_{\nu \Omega}(\ell(w)).
$
Note that for any $\nu \geq 0$, if the losses $\ell_i$ are convex, then the surrogate objective $ \primobjsmooth{\nu \Omega}$ is also convex.
\Cref{lem:smooth_approx} details the approximation done by considering the smoothed version of the objective.

\begin{lemma}[Approximation bounds]\label{lem:smooth_approx}
	For any strongly convex function $\Omega$ invariant by permutation and such that $\inf_{\dualvar \in \mathcal{P}(\sigma) } \Omega(\dualvar) \geq 0$, we have that for any $\nu \geq 0$, $w \in \reals^d$,
	\begin{align*}
	 0\leq 	\lstat(w) - \lstat_{\nu \Omega} (w)  \leq \nu \Omega(\sigma)
	\end{align*}
	If, in addition, $\Omega$ is decomposable as $\Omega(\dualvar) = \sum_{i=1}^n \omega(\dualvar_i)$ with $\omega$ convex and $\sigma$ is the discretization of a function $s$ such that $\sigma_i =  \int_{\frac{i-1}{n}}^{\frac{i}{n}} s(t) \D t$, then 
	\[
	\Omega(\sigma) \leq n\int_{0}^{1} \omega\left( \frac{s(t)}{n}\right) \D t.
	\]
\end{lemma}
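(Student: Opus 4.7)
The plan is to split the lemma into its two parts: first the sandwich $0 \leq \lstat(\lossval) - \lstat_{\nu\Omega}(\lossval) \leq \nu\Omega(\sigma)$, then the Jensen-type bound on $\Omega(\sigma)$ when $\Omega$ is decomposable.

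For the lower bound, I would observe that the max definition gives
$\lstat_{\nu\Omega}(\lossval) = \max_{\dualvar \in \Pcal(\sigma)} \{\lossval^\top\dualvar - \nu\Omega(\dualvar)\} \leq \max_{\dualvar \in \Pcal(\sigma)} \lossval^\top\dualvar = \lstat(\lossval)$
by the assumption $\inf_{\dualvar \in \Pcal(\sigma)} \Omega(\dualvar) \geq 0$ (so $-\nu\Omega(\dualvar) \leq 0$ on $\Pcal(\sigma)$). For the upper bound, I would pick any $\dualvar^\star \in \argmax_{\dualvar \in \Pcal(\sigma)} \lossval^\top\dualvar$. A linear objective over a polytope attains its maximum at a vertex, and the vertices of the permutahedron $\Pcal(\sigma)$ are exactly the permutations of $\sigma$. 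Hence $\dualvar^\star$ is a permutation of $\sigma$, so by permutation invariance of $\Omega$ we have $\Omega(\dualvar^\star) = \Omega(\sigma)$. Then
\begin{align*}
\lstat(\lossval) = \lossval^\top\dualvar^\star = \bigl(\lossval^\top\dualvar^\star - \nu\Omega(\dualvar^\star)\bigr) + \nu\Omega(\dualvar^\star) \leq \lstat_{\nu\Omega}(\lossval) + \nu\Omega(\sigma),
\end{align*}
which rearranges to the desired upper bound.

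For the second part, the idea is a straightforward Jensen's inequality on each bin. Writing $\sigma_i = \int_{(i-1)/n}^{i/n} s(t)\,\D t = \frac{1}{n}\cdot n\int_{(i-1)/n}^{i/n} s(t)\,\D t$, I would recognize $n\,\D t$ restricted to $[(i-1)/n, i/n]$ as a probability density, so $\sigma_i$ is the expectation of $s(T)/n$ under $T \sim \Unif([(i-1)/n, i/n])$. Convexity of $\omega$ then yields
\begin{align*}
\omega(\sigma_i) = \omega\!\left(n\int_{(i-1)/n}^{i/n} \frac{s(t)}{n}\,\D t\right) \leq n\int_{(i-1)/n}^{i/n} \omega\!\left(\frac{s(t)}{n}\right)\D t,
\end{align*}
and summing over $i$ telescopes the intervals into $[0,1]$, giving $\Omega(\sigma) = \sum_i \omega(\sigma_i) \leq n\int_0^1 \omega(s(t)/n)\,\D t$.

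The only nontrivial step is justifying $\Omega(\dualvar^\star) = \Omega(\sigma)$ in the upper-bound argument; everything else is essentially bookkeeping. The cleanest way is to invoke the standard fact that the vertices of the permutahedron generated by $\sigma$ are the orbit of $\sigma$ under the symmetric group, combined with the permutation invariance hypothesis on $\Omega$. No other subtlety is expected, since the decomposable case is a direct one-line Jensen bound per bin.
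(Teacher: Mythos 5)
Your proof is correct and essentially matches the paper's. The only cosmetic difference is in the upper bound: the paper lower-bounds $\lstat_{\nu\Omega}$ by splitting the max into $\sup_\dualvar \dualvar^\top\lossval - \nu\sup_\dualvar \Omega(\dualvar)$ and then argues $\sup_{\dualvar \in \Pcal(\sigma)}\Omega(\dualvar) = \Omega(\sigma)$ using that a convex function over a polytope is maximized at a vertex, whereas you evaluate the max directly at a vertex $\dualvar^\star$ that maximizes the linear part $\lossval^\top\dualvar$ and conclude $\Omega(\dualvar^\star) = \Omega(\sigma)$. Both routes hinge on the same two facts (vertices of $\Pcal(\sigma)$ are permutations of $\sigma$; $\Omega$ is permutation invariant). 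One small wording issue: you write ``pick any $\dualvar^\star \in \argmax_{\dualvar} \lossval^\top\dualvar$ \dots\ hence $\dualvar^\star$ is a permutation of $\sigma$,'' but an arbitrary maximizer need not be a vertex if the argmax is a face; you should instead choose $\dualvar^\star$ to be a \emph{vertex} maximizer, which always exists by the fundamental theorem of linear programming. The Jensen step for the decomposable case is identical to the paper's.
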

\begin{proof}
    One one hand, we have that 
    \[
    \lstat_{\nu \Omega}(\lossval) = \sup_{\dualvar \in \mathcal{P}(\sigma)} \{\dualvar^\top \lossval - \nu \Omega(\dualvar) \} \leq \sup_{\dualvar \in \mathcal{P}(\sigma)}\dualvar^\top \lossval = \lstat(\lossval),
    \]
    since $\inf_{\dualvar \in \mathcal{P}(\sigma) } \Omega(\dualvar) \geq 0$.
    On the other hand,  we have that 
    \[
    \lstat_{\nu \Omega}(\lossval) = \sup_{\dualvar \in \mathcal{P}(\sigma)} \{\dualvar^\top \lossval - \nu \Omega(\dualvar) \} \geq \sup_{\dualvar \in \mathcal{P}(\sigma)} \{\dualvar^\top \lossval\} - \nu \sup_{\dualvar\in \mathcal{P}(\sigma)} \Omega(\lambda) = \lstat(\lossval) - \nu \sup_{\dualvar\in \mathcal{P}(\sigma)} \Omega(\lambda).
    \]

	Hence, we have  $ 0 \leq \lstat(\lossval) - \lstat_{\nu \Omega}(\lossval)\leq \nu\max_{\dualvar\in \mathcal{P}(\sigma)} \Omega(\dualvar)$.
	The maximum of a convex function $\Omega$ over a polytope $\Pcal(\sigma)$ is attained at a corner. The corners of the permutahedron $\Pcal(\sigma)$ are permutations of $\sigma$. 
	Since $\Omega$ is permutation invariant, we have that $\Omega(\sigma) = \Omega(\pi(\sigma))$ for any permutation $\pi$. Thus, $\max_{\dualvar \in \Pcal(\sigma)} \Omega(\dualvar) = \Omega(\sigma)$, completing the proof of the first part. 
	For the second claim, we use Jensen's inequality to get
	\begin{align*}
		\Omega(\sigma) = \sum_{i=1}^{n} \omega\left(  \int_{\frac{i-1}{n}}^{\frac{i}{n}} s(t) \D t \right) 
		& = \sum_{i=1}^{n} \omega\left(  \int_{\frac{i-1}{n}}^{\frac{i}{n}} \left(\frac{s(t)}{n}\right) n\, \D t \right)  \\
		& \leq \sum_{i=1}^{n}  \int_{\frac{i-1}{n}}^{\frac{i}{n}} \omega\left( \frac{s(t)}{n}\right) n \, \D t = n\int_{0}^{1} \omega\left( \frac{s(t)}{n}\right) \D t.
	\end{align*}
\end{proof}

\begin{corollary}\label{cor:smooth_approx}
	For $\sigma_i =  \int_{\frac{i-1}{n}}^{\frac{i}{n}} s(t) \D t$ with $s$ a spectrum such that $\int_0^1s(t)\D t=1$, and for any $\nu \geq 0$, $w \in \reals^d$, we have
	\begin{align*}
		0 \leq \lstat(w) -\lstat_{\nu \Omega_1} (w) &  \leq \nu D_H(\sigma; \avg) \leq \nu \int_{0}^{1}s(t) \ln s(t) \D t := \nu \operatorname{KL}(s\Vert u),  \\
		0 \leq \lstat(w)- \lstat_{\nu \Omega_2} (w)   & \leq \frac{\nu}{2}\|\sigma - \avg\|_2^2 \leq\frac{\nu}{2n} \int_{0}^{1} (s(t) -1)^2\D t := \frac{\nu}{2n} \chi^2(s\Vert u), 
	\end{align*}
where $\operatorname{KL}(s\Vert u)$ and $\chi^2(s\Vert u)$ denote respectively the Kullback-Leibler divergence and the Chi-square divergence between the spectrum $s$ and the uniform distribution $u$.  
\end{corollary}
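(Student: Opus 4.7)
The plan is to simply apply \Cref{lem:smooth_approx} to the two explicit regularizers $\Omega_1$ and $\Omega_2$ defined in \eqref{eq:chosen_smoothing1} and \eqref{eq:chosen_smoothing}, after verifying that they satisfy the hypotheses of that lemma. Both regularizers are written as separable sums $\Omega_k(\dualvar)=\sum_{i=1}^n \omega_k(\dualvar_i)$ with $\omega_1(x)=x\log(nx)$ and $\omega_2(x)=\tfrac12 (x-1/n)^2$, so they are permutation-invariant and decomposable; convexity and strong convexity of $\omega_1, \omega_2$ are standard. To check the non-negativity condition $\inf_{\dualvar\in\Pcal(\sigma)}\Omega_k(\dualvar)\geq 0$, note that $\Omega_1$ is the Bregman divergence $D_H(\cdot;\avg)$ and hence is non-negative everywhere (with value $0$ at $\avg=\ones/n$), while $\Omega_2$ is a non-negative quadratic. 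Hence the first part of \Cref{lem:smooth_approx} applies directly and yields the left inequalities
\[
0\leq \lstat(w)-\lstat_{\nu\Omega_1}(w)\leq \nu\, \Omega_1(\sigma) = \nu\, D_H(\sigma;\avg), \qquad 0\leq \lstat(w)-\lstat_{\nu\Omega_2}(w)\leq \nu\, \Omega_2(\sigma) = \tfrac{\nu}{2}\|\sigma-\avg\|_2^2.
\]

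Next, I would invoke the second (Jensen-type) part of \Cref{lem:smooth_approx}, which gives $\Omega_k(\sigma)\leq n\int_0^1 \omega_k(s(t)/n)\D t$ whenever $\sigma_i=\int_{(i-1)/n}^{i/n}s(t)\D t$. For $\Omega_1$, substituting $\omega_1(x)=x\log(nx)$ produces
\[
\Omega_1(\sigma) \leq n\int_0^1 \frac{s(t)}{n}\log\!\left(n\cdot \frac{s(t)}{n}\right)\D t = \int_0^1 s(t)\log s(t)\, \D t = \operatorname{KL}(s\Vert u),
\]
using $u(t)=1$ on $(0,1)$. For $\Omega_2$, substituting $\omega_2(x)=\tfrac12(x-1/n)^2$ gives
\[
\Omega_2(\sigma) \leq n\int_0^1 \tfrac{1}{2}\left(\frac{s(t)}{n}-\frac{1}{n}\right)^2 \D t = \frac{1}{2n}\int_0^1 (s(t)-1)^2\, \D t = \frac{1}{2n}\chi^2(s\Vert u).
\]
Multiplying these by $\nu$ (resp.\ by $\nu$ for the first and absorbing the $1/2$ for the second) and chaining with the first bounds yields exactly the two displayed inequalities.

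There is essentially no obstacle: the proof is a direct substitution once $\omega_1, \omega_2$ have been identified and the preconditions of \Cref{lem:smooth_approx} verified. The only minor care point is the algebraic simplification $n\cdot \tfrac{s(t)}{n}\log(n\cdot \tfrac{s(t)}{n})=s(t)\log s(t)$ and the factor of $1/n$ appearing in the chi-square bound from the $n$ outside the integral versus $n^2$ inside the square, which must be tracked carefully.
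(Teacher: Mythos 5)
Your proposal is correct and is exactly the route the paper intends: the corollary is stated as a direct consequence of \Cref{lem:smooth_approx}, obtained by verifying its hypotheses for $\Omega_1$ and $\Omega_2$ and substituting $\omega_1(x)=x\log(nx)$ and $\omega_2(x)=\tfrac12(x-1/n)^2$ into the Jensen bound, with the same algebra for the $\operatorname{KL}$ and $\chi^2$ expressions. The bookkeeping of the $1/n$ factor in the chi-square case is handled correctly.
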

For example, we can derive the bounds for some specific choices of spectra. 
\begin{enumerate}[nosep]
	\item (Superquantile) For $s(t) = \frac{1}{1-q} \mathbf{1}_{[q, 1]}(t)$, with $q \in [0, 1]$, we have
$\chi^2(s\Vert u)=   \frac{q}{1-q}$ and $
	 \operatorname{KL}(s\Vert u) = - \ln(1-q).$
	\item (Extremile)  For $s(t) =  rt^{r-1}$, with $r\geq 1$, we have 
$\chi^2(s\Vert u)= \frac{(r-1)^2}{(2r-1)}$ and $  \operatorname{KL}(s\Vert u)  = \ln r + \frac{1}{r} -1.$
\end{enumerate}
The approximations bounds computed for $\lstat_{\nu \Omega}$ and  $\lstat$ naturally apply for $\primobjsmooth{\nu\Omega}$ and $\primobj$, that is, for any $w\in \reals^d$, we have $0 \leq \primobj(w) -\primobjsmooth{\nu\Omega}(w) \leq \nu \Omega(\sigma)$. This gives the following lemma mentioned in the main text. 
\begin{lemma}\label{lem:approx_min}
    Consider the regularized objective $\primobjsmooth{\mu}(w) = \primobj(w) + \mu \|w\|_2^2/2$ for $\primobj$ defined as in~\eqref{eq:lstat_obj} by non-decreasing non-negative coefficients $\sigma_i$ summing up to 1 and $n$ functions $(\ell_i)_{i=1}^n$, and consider the smoothed approximation $\primobjsmooth{\mu, \nu \Omega}(w) = \primobjsmooth{\nu \Omega}(w) + \mu \|w\|_2^2/2$  for $\primobjsmooth{\nu \Omega}$ defined as in~\eqref{eq:smooth_lstat_obj} by $\nu>0$ and a strongly convex function $\Omega$ invariant by permutation and such that $\inf_{\dualvar \in \mathcal{P}(\sigma) } \Omega(\dualvar) \geq 0$.
    
    If $\hat w \in \reals^d$ is a $\varepsilon$-accurate minimum of the smoothed regularized objective, i.e.,  $\primobjsmooth{\mu, \nu\Omega}(\hat w) - \min_{w\in \reals^d} \primobjsmooth{\mu, \nu\Omega}(w) \leq \varepsilon$ then it is an $\varepsilon + \nu \Omega(\sigma)$ accurate minimum of the original  regularized objective $\primobjsmooth{\mu}$, where upper-bounds of $\Omega(\sigma)$ are provided in 
    \cref{lem:smooth_approx} and \cref{cor:smooth_approx}.
\end{lemma}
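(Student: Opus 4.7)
The statement is a standard sandwich argument using the pointwise approximation bound from \Cref{lem:smooth_approx}. The key observation is that adding the same regularizer $\mu \norm{\cdot}^2/2$ to both $\primobj$ and $\primobjsmooth{\nu\Omega}$ preserves their pointwise ordering and gap: for every $w \in \reals^d$,
\begin{equation*}
    0 \;\le\; \primobjsmooth{\mu}(w) - \primobjsmooth{\mu,\nu\Omega}(w) \;=\; \primobj(w) - \primobjsmooth{\nu\Omega}(w) \;\le\; \nu\,\Omega(\sigma),
\end{equation*}
where the upper bound is exactly the content of \Cref{lem:smooth_approx} (and the lower bound comes from $\inf_{\lambda \in \Pcal(\sigma)} \Omega(\lambda) \ge 0$). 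This is the only analytic input; the rest is bookkeeping.

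The plan is to denote $w^\star \in \argmin_{w \in \reals^d} \primobjsmooth{\mu}(w)$ and $\tilde w^\star \in \argmin_{w \in \reals^d} \primobjsmooth{\mu,\nu\Omega}(w)$ (both exist by coercivity from the $\mu$-strongly convex quadratic), and then chain the two inequalities above with the $\varepsilon$-optimality hypothesis on $\hat w$. Concretely, I would first upper-bound $\primobjsmooth{\mu}(\hat w)$ by $\primobjsmooth{\mu,\nu\Omega}(\hat w) + \nu\Omega(\sigma)$ via the sandwich, then use the $\varepsilon$-suboptimality to write $\primobjsmooth{\mu,\nu\Omega}(\hat w) \le \primobjsmooth{\mu,\nu\Omega}(\tilde w^\star) + \varepsilon$. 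Next, since $\tilde w^\star$ is the minimizer of the smoothed objective, $\primobjsmooth{\mu,\nu\Omega}(\tilde w^\star) \le \primobjsmooth{\mu,\nu\Omega}(w^\star)$, and the lower bound of the sandwich gives $\primobjsmooth{\mu,\nu\Omega}(w^\star) \le \primobjsmooth{\mu}(w^\star)$. Stringing these together yields
\begin{equation*}
    \primobjsmooth{\mu}(\hat w) \;\le\; \primobjsmooth{\mu}(w^\star) + \varepsilon + \nu\,\Omega(\sigma),
\end{equation*}
which is exactly the claim.

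There is no real obstacle here: the only thing that could go wrong is the existence of the minimizers, which is guaranteed because $\mu > 0$ makes both $\primobjsmooth{\mu}$ and $\primobjsmooth{\mu,\nu\Omega}$ $\mu$-strongly convex and thus with a unique minimum (one can alternatively avoid invoking minimizers and simply work with $\inf$'s, taking sequences, since only the infimum values appear in the bound). The upper bounds on $\Omega(\sigma)$ in terms of $\operatorname{KL}(s \Vert u)$ or $\chi^2(s\Vert u)/n$ for the specific choices in~\eqref{eq:chosen_smoothing1} and~\eqref{eq:chosen_smoothing} are already established in \Cref{cor:smooth_approx}, so the statement as given follows immediately.
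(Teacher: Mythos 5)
Your proof is correct and follows essentially the same route as the paper's: both rest on the pointwise sandwich $0 \le \primobjsmooth{\mu}(w) - \primobjsmooth{\mu,\nu\Omega}(w) \le \nu\Omega(\sigma)$ from \Cref{lem:smooth_approx} and then chain this with the $\varepsilon$-suboptimality hypothesis. The paper compresses your four-step chain into a single displayed inequality (using $\primobjsmooth{\mu,\nu\Omega}(\hat w) - \primobjsmooth{\mu,\nu\Omega}(w^*) \le \varepsilon$ directly, without naming $\tilde w^\star$), but the underlying argument is identical.
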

\begin{proof}
    Denote $w^* = \argmin_{w\in \reals^d} \primobjsmooth{\mu}(w)$. If $\hat w \in \reals^d$ satisfies $\primobjsmooth{\mu, \nu\Omega}(\hat w) - \min_{w\in \reals^d} \primobjsmooth{\mu, \nu\Omega}(w) \leq \varepsilon$ then
    \begin{align*}
        \primobjsmooth{\mu}(\hat w) - \primobjsmooth{\mu}(w^*) & \leq \primobjsmooth{\mu, \nu \Omega}(\hat w) - \primobjsmooth{\mu, \nu \omega}(w^*) + \nu \Omega(\sigma) \leq \varepsilon + \nu \Omega(\sigma)
    \end{align*}
    where we used that $0 \leq \primobj(w) - \primobjsmooth{\nu\Omega}(w)  \leq \nu \Omega(\sigma)$ since \cref{lem:smooth_approx} holds. 
\end{proof}

\begin{algorithm}[t]\caption{Pool Adjacent	 Violators (PAV) Algorithm  \label{algo:pav1} for $\omega_1$}
 	\begin{algorithmic}[1]
 		\State {\bf Inputs:} Number of coefficients $n$, coefficients $(\lossval_i)_{i=1}^n$ and $(s_i)_{i=1}^n$ with $s_i = \ln \sigma_i$
 		\State {\bf} Initialize  $P_1 = \{1\}$, $\mathcal{P} = (P_1)$, $v_1 = \lossval_1 -s_1 -\ln n$, $L_1 = \lossval_1$, $M_1=s_1$, $d=1$.
 		\For{$i=2, \ldots n$}
 		\State Set  $P_{d+1} = \{i\}$, $\mathcal{P} \leftarrow (P_1, \ldots, P_{d+1})$,  $v_{d+1} =  \lossval_i -s_i -\ln n$,  $L_{d+1} = \lossval_i$, $M_{d+1} = s_i$ and $d=d+1$,
 		\While{$d \geq 2$ \mbox{and} $ v_{d-1} \geq v_{d} $  }
 		\State Set $v_{d-1} \leftarrow \lse(L_{d-1}, L_d) - \lse(M_{d-1}, M_d) -\ln n $
 		\State Set $L_{d-1} \leftarrow \lse(L_{d-1}, L_d) $, $M_{d-1} \leftarrow \lse(M_{d-1}, M_d) $
 		\State Set $\mathcal{P} \leftarrow (P_1, \ldots, P_{d-1} \cup P_{d})$
 		\State Set $d\leftarrow d-1$
 		\EndWhile
 		\EndFor
 		\State {\bf Output:} $\lossvalaux \in \reals^n$ such that $\lossvalaux_i = v_s$ for $i \in P_s$, $s \in \{1, \ldots, d\}$.
 	\end{algorithmic}
 \end{algorithm}
 
\subsection{Implementation}\label{ssec:pav}
The implementation of the smoothing is based on considering the primal formulation of the smoothing given in~\eqref{eq:smoothing_primal} as an isotonic regression problem and by calling a Pool Adjacent Violators (PAV) algorithm to solve it. It has been described in detail by, e.g., ~\citet{best2000minimizing, lim2016efficient, blondel2020fast, henzi2022accelerating}. For completeness, we detail here the rationale behind the implementation. We then specify here the overall implementation of the gradient oracles of the smooth approximations for the chosen regularizers $\Omega_1$ and $\Omega_2$ defined in \eqref{eq:chosen_smoothing1} and \eqref{eq:chosen_smoothing} respectively.

\paragraph{Formulation as an isotonic regression problem.}
Consider the primal problem~\eqref{eq:smoothing_primal} defining the smoothing approximation with a decomposable function $\Omega$ such that $\Omega(\dualvar) = \sum_{i=1}^n \omega(\dualvar_i)$, that is
\[
\lstat_{\nu \Omega} (\lossval) = \min_{\lossvalaux \in \mathbb{R}^n} \left\{\lstat(\lossvalaux) +  \nu \Omega^\star((\lossval-\lossvalaux)/\nu) \right\} = \min_{\lossvalaux \in \mathbb{R}^n} \left\{\sum_{i=1}^n \left[\sigma_i \lossvalaux_{(i)} +  \nu \omega^\star((\lossval_i-\lossvalaux_i)/\nu) \right]\right\}
\]
As shown by~\citet[Lemma 4]{blondel2020fast}, for any scalars $\lossval_i, \lossval_j, \lossvalaux_i, \lossvalaux_j$ such that $\lossval_i\leq \lossval_j$ and $\lossvalaux_i \geq \lossvalaux_j$, we have, using the convexity of $\omega^*$ that
$
\omega^\star(\lossval_i -\lossvalaux_i) + \omega^\star(\lossval_j - \lossvalaux_j) \geq \omega^\star(\lossval_i - \lossvalaux_j) + \omega^\star(\lossval_j -\lossvalaux_i).
$
Hence for $\lossvalaux\in \R^n$ to minimize $\nu\Omega^\star((\lossval-\lossvalaux)/\nu)= \sum_{i=1}^n\nu \omega^\star((\lossval_i-\lossvalaux_i)/\nu)$, the coordinates of $\lossvalaux$ must be ordered in the same order as $\lossval$. Since $\lstat(\lossvalaux)=\sum_{i=1}^n \sigma_i \lossvalaux_{(i)}$ is independent of the ordering of the coordinates of $\lossvalaux$, we get that, given a permutation $\tau$ of $\{1, \ldots, n\}$ such that $\lossval_{\tau_1}\leq \ldots\leq \lossval_{\tau_n}$, problem~\eqref{eq:smoothing_primal} is equivalent to 
\[
\lstat_{\nu\omega}(\lossval)= \min_{\substack{\lossvalaux \in \reals^n \\ \lossvalaux_{\tau_1} \leq \ldots \leq \lossvalaux_{\tau_n}}}  \sum_{i=1}^n \left(\sigma_i \lossvalaux_{\tau_i}+\nu  \omega^\star((\lossval_i-\lossvalaux_i)/\nu)\right).
\]
An oracle on the gradient of the smooth approximation is then given by $\argmax_{\dualvar \in \mathcal{P}(\sigma)} \left\{\lossval^\top \dualvar - \nu \Omega(\dualvar)\right\}= \nabla \lstat_{\nu \Omega}(\lossval)$ with
\begin{align}\label{eq:smooth_oracle_overview}
     \nabla \lstat_{\nu \Omega}(\lossval) = \nabla \Omega^\star((\lossval-\lossvalaux^*)/\nu) \quad 
	\mbox{for} \ \lossvalaux^*  = \nu (\pav_\omega(\lossval_\tau/\nu))_{\tau^{-1}}, \quad\pav_\omega(\lossval) = \argmin_{\substack{y\in \reals^n \\ \lossvalaux_1\leq \ldots \leq \lossvalaux_n}} \sum_{i=1}^n \left(\lossvalaux_i \sigma_i  + \omega^\star(\lossval_i-\lossvalaux_i) \right),
\end{align}
where $\pav$ is the output of the Pool Adjacent Violators algorithm~\citep{henzi2022accelerating, lim2016efficient, best2000minimizing} applied to the given isotonic regression problem.  

\begin{algorithm}[t]\caption{Pool Adjacent	Violators (PAV) Algorithm  \label{algo:pav2} for $\omega_2$}
	\begin{algorithmic}[1]
		\State {\bf Inputs:} Number of coefficients $n$, coefficients $(\lossval_i)_{i=1}^n$ and $(\sigma_i)_{i=1}^n$ 
		\State {\bf} Initialize  $P_1 = \{1\}$, $\mathcal{P} = (P_1)$, $v_1 = \lossval_1+1/n -\sigma_1$, $C_1 = 1$, $d=1$.
		\For{$i=2, \ldots n$}
		\State Set  $P_{d+1} = \{i\}$, $\mathcal{P} \leftarrow (P_1, \ldots, P_{d+1})$,  $v_{d+1} =  \lossval_i+1/n -\sigma_i$,  $C_{d+1} = 1$ and $d=d+1$,
		\While{$d \geq 2$ \mbox{and} $ v_{d-1} \geq v_{d} $  }
		\State Set $v_{d-1} \leftarrow \frac{C_{d-1} v_{d-1} + C_d v_d}{C_{d-1} + C_d}$
		\State Set $C_{d-1} \leftarrow C_d + C_{d-1}$
		\State Set $\mathcal{P} \leftarrow (P_1, \ldots, P_{d-1} \cup P_{d})$
		\State Set $d\leftarrow d-1$
		\EndWhile
		\EndFor
		\State {\bf Output:} $\lossvalaux \in \reals^n$ such that $\lossvalaux_i = v_s$ for $i \in P_s$, $s \in \{1, \ldots, d\}$.
	\end{algorithmic}
\end{algorithm}

\paragraph{Pool Adjacent Violators algorithm.}
We briefly recall the rationale of the Pool Adjacent Violator algorithm whose implementation for the choices of $\omega_1$ and $\omega_2$ are given in \cref{algo:pav1} and \cref{algo:pav2} respectively, where we denote $\lse(s_{S}) = \ln \sum_{i\in S} \exp(s_i)$.

The Pool Adjacent Violators Algorithm is used to solve problems of the form
\begin{equation}\label{eq:isotonic}
\min_{\substack{z \in \reals^n\\ z_1 \leq \ldots\leq z_n}}\  \sum_{i=1}^n f_i(\lossvalaux_i)
\end{equation}
for some set of functions $\mathcal{F} = (f_i)_{i=1}^n$, which in our case~\eqref{eq:smooth_oracle_overview} are given by $f_i(\lossvalaux_i) =  \lossvalaux_i \sigma_i  + \omega^\star(\lossval_i-\lossvalaux_i)$. 

If at the solution $\lossvalaux^*$, the constraint $\lossvalaux_i^* \leq \lossvalaux_{i+1}^*$ is active, then by definition, $\lossvalaux_i^*= \lossvalaux_{i+1}^*$. More generally if the constraint $\lossvalaux_i^* \leq \lossvalaux_j^*$ is active for $i<j$, then all constraints of the form $\lossvalaux_k^* \leq \lossvalaux_{k+1}^*$  for $k \in \{i, \ldots, j-1\}$ are active, i.e., $\lossvalaux_k^*=\lossvalaux_i^*$ for all $k \in \{i, \ldots, j\}$. Overall the solution of~\eqref{eq:isotonic} is characterized by a set of $p\leq n$ coordinates $v^*_1, \ldots, v^*_p$ and a partition $\mathcal{P}^* = (P_1^*, \ldots, P_p^*)$ of $\{1, \ldots, n\}$ into contiguous blocks $P_s^*= \{b_{s-1}+1, \ldots, b_s\}$ for $0 = b_0 < b_1 <\ldots< b_p = n$ such that $\lossvalaux^*_i = v^*_{s}$ if $i \in \{b_{s-1}+1, \ldots, b_s\}$. For any feasible candidate solution $\lossvalaux$ we can define the corresponding partition $\mathcal{P}(\lossvalaux)$ of $\{1, \ldots, n\}$ into contiguous blocks of coordinates. Conversely, given a partition $\mathcal{P}=\{P_1, \ldots, P_p\}$ of $\{1, \ldots, n\}$ into contiguous blocks, we can define a vector $\lossvalaux(\mathcal{P})$ with constant blocks such that $\lossvalaux_i = \bar \lossvalaux_{P_s} = \bregmean(\mathcal{F}, P_s)$ for $i \in P_s$ where for a set of functions $\mathcal{F} = (f_i)_{i=1}^n$ and a subset $S \subset \{1, \ldots, n\}$,  we define the function $\bregmean$ that computes the average solution of the objective of the PAV algorithm on $S$, i.e.
\begin{equation}\label{eq:merging}
	\bregmean(\mathcal{F}, S) = \argmin_{\lossvalaux \in \R} \sum_{i\in S} f_i(\lossvalaux).
\end{equation}
The principle of the PAV algorithm is to compute the optimal contiguous partition of $\{1, \ldots, n\}$ corresponding to the solution of~\eqref{eq:isotonic} by adding one coordinate of the problem at a time and merging this coordinate with previously computed blocks if the constraints are not satisfied. We refer to, e.g., \citep{best2000minimizing, henzi2022accelerating} for a proof of the validity of this strategy.
Most importantly, the efficiency of the PAV algorithm relies on having access to a function, which, for $S, T \subseteq \{1, \ldots, n\}$, $S\cap T = \emptyset$, is able to compute $\bregmean(\mathcal{F}, S\cup T)$ given appropriate stored values ($L_s$, $M_s$ in \cref{algo:pav1} and $v_s, C_s$ in \cref{algo:pav2}). The algorithms presented in \cref{algo:pav1} and \cref{algo:pav2} are then based on the computation of $\bregmean(\mathcal{F}, S)$ for the functions $f_i$ considered. Namely, denoting $\mathcal{F}_{\omega, \lossval} = (f_{\omega, \lossval,  i})_{i=1}^n $ for $f_{\omega, \lossval, i}(\lossvalaux_i) = \lossvalaux_i \sigma_i  + \omega^\star(\lossval_i-\lossvalaux_i)$,  $s_i = \ln\sigma_i$ and $\lse(s_{S}) = \ln \sum_{i\in S} \exp(s_i)$, we have
\begin{align*}
\bregmean( \mathcal{F}_{\omega_1, \lossval}, S)  
=  \lse(\lossval_{S}) -\lse(s_{S})-\ln n,  \quad 
\bregmean(\mathcal{F}_{\omega_2, \lossval}, S) 
 = \frac{1}{|S|} \sum_{i\in S}(z_i + 1/n-\sigma_i),
\end{align*}
and merging two subsets of coordinates can be done in $O(1)$ time given appropriate stored values as we have
\begin{align*}
	\bregmean(\mathcal{F}_{\omega_1, \lossval}, S\cup T) 
	& = \lse(\lse (\lossval_S), \lse (\lossval_T)) -\lse(\lse(s_S),\lse( s_T)) -\ln n
	\\
	\bregmean(\mathcal{F}_{\omega_2, \lossval}, S\cup T)  
	& = \frac{|S| \bregmean(\mathcal{F}_{\omega_2, \lossval}, S) + |T|\bregmean(\mathcal{F}_{\omega_2, \lossval}, T)}{|S| +|T|}.
\end{align*}

\section{\osvrg Convergence Analysis}
\label{sec:osvrg:proof}
\subsection{Setup for the Convergence Analysis}
Consider the optimization problem 
\begin{align} \label{eq:smooth:main}
    \min_{w} 
    \left[ 
    \primobjsmooth{\reg,\nu}(w) := \lstat_\nu\big(\ell(w)) + \frac{\strongcvx}{2} \normsq{w}_2 \right] \,,
    \quad \text{where} \ 
    \lstat_\nu(z) = \max_{\dualvar \in \Pcal(\sigma)} \left\{ \dualvar\T z   - \frac{\nu}{2} \norm{\dualvar- u_n}^2_2  \right\}
\end{align}
is the $L_2$-smoothing as defined in \Cref{sec:a:smoothing}
where $\Omega(\dualvar) = \norm{\dualvar - u_n}^2/2$.
Here, 
$\sigma_1 \le \cdots \le \sigma_n$ are given nonnegative weights that sum to 1, $\Pcal(\sigma)$ is the permutahedron of $\sigma$, $\strongcvx$ is a regularization parameter on the $w$'s, $\nu$ is smoothing parameter and $u_n = \mathbf{1}_n / n$ denotes the uniform distribution over $n$ items. 

It is convenient to look at the saddle form
\begin{align} \label{eq:saddle:main}
    \saddleobjsmooth{\nu}(w, \dualvar) :=
    \dualvar\T \ell(w) + \frac{\strongcvx}{2} \norm{w}^2_2 - \frac{\nu}{2} \norm{\dualvar- u_n}^2_2 \,.
\end{align}

 Throughout, we make the following assumption: 

\begin{assumption} \label{asmp:saddle}
For each $i \in [n]$,  $w \mapsto \ell_i(w)$ is convex, $G$-Lipschitz, and $L$-smooth. 
\end{assumption}

We analyze \osvrg with smoothing, as given in \Cref{alg:osvrg-u:theory-1}. It only differs from \Cref{alg:osvrg-u:theory-main} presented in the main paper in line~\ref{line:osvrg-smooth:dual-update}. 

\begin{algorithm}[t]
\caption{
\osvrg with smoothing
}\label{alg:osvrg-u:theory-1}
\begin{algorithmic}[1]
\Require Number of iterations $T$, loss functions $(\ell_i)_{i=1}^n$ and their gradient oracles, initial point ${w}\pow{0}$, regularization parameter $\reg$, learning rate $\eta$, sorting update frequency $N$, probability of checkpointing $q^*$.
\For{iterate $t = 0, ..., T - 1$}
    \If {$t \mod N = 0$} \Comment{Update weights (generalization of updating the sorting)}
        \State Update $\dualvar\pow{t} = \argmax_{\dualvar \in \Pcal(\sigma)} \left\{ 
        \sum_{i=1}^n \dualvar_i \ell_i(w \pow t) - \frac{\nu}{2} \|\dualvar - u_n\|_2^2 \right\}$ computed using \cref{eq:smooth_oracle_overview} and \cref{algo:pav2}.
        \label{line:osvrg-smooth:dual-update}
    \Else 
        \State $\dualvar\pow{t} = \dualvar\pow{t-1}$
    \EndIf
    \State Sample $q_{t} \sim \text{Unif}([0, 1])$
    \If{$t \mod N = 0$ or  $q_{t} \le q^*$} \Comment{Update batch gradient}
        \State Set $\bar w\pow{t} = w\pow{t}$ and 
            $\bar g\pow{t} = \sum_{i=1}^n \dualvar_i\pow{t} \grad \ell_i(\bar w\pow{t}) $.
    \Else
        \State $\bar w\pow{t} = \bar w\pow{t-1}$ and 
        $\bar g\pow{t} = \bar g\pow{t-1}$.
    \EndIf
    \State Sample $i_{t} \sim \text{Unif}([n])$.
    \State $v^{(t)} = n\dualvar_{i_t}\pow{t} \nabla \ell_{i_{t}}(w^{(t)}) - n\dualvar_{i_t}\pow{t} \nabla \ell_{i_{t}}(\bar{w}\pow{t}) + \bar{g}\pow{t}$.
    \State $w^{(t+1)} = (1-\eta \mu) w^{(t)} - \eta v^{(t)}$.
\EndFor
\State \Return $w\pow{T}$
\end{algorithmic}
\end{algorithm}

\subsection{Convergence Analysis}
\Cref{alg:osvrg-u:theory-1} can be interpreted as 
an algorithm that alternates exactly maximizing over $\dualvar$ in $\saddleobjsmooth{\nu}(w, \cdot)$ with $w$ fixed and 
minimizing $\saddleobjsmooth{\nu}(\cdot, \dualvar)$ with $\dualvar$ fixed using a particular variant of SVRG known as q-SVRG~\cite{hofmann2015variance}; see \Cref{alg:qsvrg} for a review of q-SVRG. 

\begin{proposition}
    The iterates $(w_1\pow{t}, \dualvar_1\pow{t})$ 
    produced by \Cref{alg:osvrg-u:theory-1} 
    and $(w_2\pow{k}, \dualvar_2\pow{k})$ produced by \Cref{alg:osvrg-u:theory-2} with a given starting point $w\pow{0}$, learning rate $\eta$, weight update frequency (or inner loop length) $N$, and number of iterates $T= KN$ where $K$ is the number of epochs of \Cref{alg:osvrg-u:theory-2} satisfy 
    $w_2\pow{k} = w_1\pow{kN}$ and
     $\dualvar_2\pow{k} = \dualvar_1\pow{kN}$ for each epoch $k$.
\end{proposition}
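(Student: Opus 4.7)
My plan is to prove the proposition by induction on the epoch index $k$, after interpreting the claimed equality pathwise: we couple the random draws of the sampled index $i_t$ and the checkpoint Bernoulli $q_t$ used by \Cref{alg:osvrg-u:theory-2} in epoch $k$ with those used by \Cref{alg:osvrg-u:theory-1} during the block of iterations $t = kN, \ldots, (k+1)N - 1$. Under this coupling, the statement reduces to the observation that one epoch of \Cref{alg:osvrg-u:theory-2} is nothing more than a repackaging of $N$ consecutive iterations of \Cref{alg:osvrg-u:theory-1} around a checkpoint.

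The base case $k = 0$ is immediate: both algorithms start at $w\pow{0}$, and at iteration $t = 0$ the condition $t \bmod N = 0$ triggers line~\ref{line:osvrg-smooth:dual-update} of \Cref{alg:osvrg-u:theory-1}, producing $\dualvar_1\pow{0} = \argmax_{\dualvar \in \Pcal(\sigma)} \saddleobjsmooth{\nu}(w\pow{0}, \dualvar)$, which is exactly the $\dualvar$-maximization performed at the start of the first epoch of \Cref{alg:osvrg-u:theory-2}.

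For the inductive step, assume $w_1\pow{kN} = w_2\pow{k}$ and $\dualvar_1\pow{kN} = \dualvar_2\pow{k}$. Throughout the block $t = kN, kN+1, \ldots, (k+1)N - 1$ of \Cref{alg:osvrg-u:theory-1}, the weight vector $\dualvar\pow{t}$ stays frozen at $\dualvar_1\pow{kN}$, since line~\ref{line:osvrg-smooth:dual-update} fires only at $t = kN$. With $\dualvar\pow{t}$ fixed, the inner objective
\begin{align*}
    F_k(w) := \saddleobjsmooth{\nu}(w, \dualvar_1\pow{kN}) = \frac{1}{n} \sum_{i=1}^n \bigl( n \dualvar_i\pow{kN} \ell_i(w) \bigr) + \frac{\reg}{2} \norm{w}^2_2 + \text{const}
\end{align*}
is a finite sum, and the update
\begin{align*}
    v\pow{t} = n \dualvar_{i_t}\pow{kN} \grad \ell_{i_t}(w\pow{t}) - n \dualvar_{i_t}\pow{kN} \grad \ell_{i_t}(\bar w\pow{t}) + \bar g\pow{t}, \qquad w\pow{t+1} = (1 - \eta \reg) w\pow{t} - \eta v\pow{t}
\end{align*}
is precisely the q-SVRG variance-reduced stochastic gradient step for $F_k$, with the anchor $\bar w\pow{t}$ initialized at $w\pow{kN}$ (via the forced checkpoint at $t = kN$) and refreshed with probability $q^*$ at each subsequent iteration. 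These are exactly the ingredients that \Cref{alg:osvrg-u:theory-2} uses in epoch $k+1$: an exact $\dualvar$-maximization at $w_2\pow{k}$ followed by $N$ q-SVRG steps on $F_k$. Under the coupled randomness the two trajectories coincide iteration by iteration, so $w_2\pow{k+1} = w_1\pow{(k+1)N}$, and the $\dualvar$-update at $t = (k+1)N$ matches by the same argument as the base case.

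The only real subtlety is the bookkeeping: verifying that the forced anchor refresh at $t = kN$ in \Cref{alg:osvrg-u:theory-1} aligns with q-SVRG's convention of initializing its anchor at the starting point of each epoch, and confirming the coupling of sampling seeds. Beyond this, the equivalence is definitional.
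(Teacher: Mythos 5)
Your proposal is correct and takes essentially the same approach as the paper: the paper's proof is simply the one-line remark that ``the two algorithms are equivalent iteration for iteration and the proof follows from pattern matching,'' and your induction on the epoch index, together with the explicit coupling of the sampled indices $i_t$ and checkpoint draws $q_t$, is exactly the argument that one-liner gestures at (you have a harmless off-by-one in referring to ``epoch $k+1$'' where the algorithm's indexing calls it epoch $k$, but the inductive conclusion $w_2\pow{k+1} = w_1\pow{(k+1)N}$ is stated correctly).
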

\begin{proof}
    The two algorithms are equivalent iteration for iteration and the proof follows from pattern matching. 
\end{proof}

\paragraph{Convergence Analysis}
We have the following rate when the smoothing parameter $\nu > O(nG^2 / \strongcvx)$.

\begin{algorithm}[t]
\caption{
\osvrg with smoothing: Rewriting
}\label{alg:osvrg-u:theory-2}
\begin{algorithmic}[1]
\Require Number of epochs $K$, number of SVRG steps $N$, loss functions $(\ell_i)_{i=1}^n$ and their gradient oracles, initial point ${w}\pow{0}$, regularization parameter $\reg$, learning rate $\eta$, probability of checkpointing $q^*$, smoothing coefficient $\nu$.
\For{epoch $k = 0, ..., K - 1$}
    \State Compute $\dualvar\pow{k} = \argmax_{\dualvar \in \Pcal(\sigma)} \saddleobjsmooth{\nu} (w\pow{k}, \dualvar)$ using \cref{eq:smooth_oracle_overview} and \cref{algo:pav2}.
    \State Define $\tilde \ell_i\pow{k}(w) := n \dualvar_i\pow{k} \ell_i(w) + \reg\norm{w}^2_2/2$ for $i\in \{1, \ldots, n\}$.
    \State Compute $w\pow{k+1} = \text{q-SVRG}\Big(N, (\tilde\ell_i\pow{k})_{i=1}^n, w\pow{k}, \eta, q^* \Big)$ using \Cref{alg:qsvrg}.
\EndFor
\State \Return $w\pow{K}$
\end{algorithmic}
\end{algorithm}

\begin{theorem}
    Consider problem \eqref{eq:smooth:main} satisfying \Cref{asmp:saddle}. Suppose the smoothing parameter satisfies $\nu \ge 4 nG^2 / \strongcvx$.
    The sequence of iterates produced by \Cref{alg:osvrg-u:theory-2} with inputs 
    $N = (n(1+ 8\sigma_{\max} L/\mu) +8) \log (125/4)$, 
    $\eta = 2 / ( n (8\sigma_{\max}L+\strongcvx) +8 \strongcvx)$
    , $q^* =1/n$, satisfies 
    \[
        \expect \|w\pow{k}-w^*\|_2 \le \left(\frac{1}{2} \right)^k \|w\pow{0} - w^*\|_2 \,,
    \]
    where $w^* = \argmin_{w\in \reals^d} \primobjsmooth{\reg,\nu}(w)$. 
    
    Consequently, \Cref{alg:osvrg-u:theory-2} (and hence \Cref{alg:osvrg-u:theory-1}) can produce a point $\hat w$ satisfying $\big(\expect\norm{\hat w - w^*}_2)^2 \le \eps$ in
    \[
        T \le C(n(1+ 8\sigma_{\max} L/\mu) +8) \log \left( \|w\pow{0} - w^*\|^2/\eps \right) 
    \]
    gradient evaluations, where $C$ is an absolute constant. 
\end{theorem}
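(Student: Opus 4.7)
The plan is to formalize the sketch given after \cref{thm:main_osvrg}, viewing \cref{alg:osvrg-u:theory-2} as an alternating scheme on the saddle problem $\min_w \max_{\dualvar \in \Pcal(\sigma)} \saddleobjsmooth{\nu}(w,\dualvar)$: each epoch exactly maximizes over $\dualvar$ (closed-form by projection onto the permutahedron via the PAV machinery of \cref{sec:a:smoothing}), and then approximately minimizes over $w$ using $N$ steps of q-SVRG~\cite{hofmann2015variance}. I would introduce the auxiliary target $w_*\pow{k+1} := \argmin_w \saddleobjsmooth{\nu}(w, \dualvar\pow{k})$ and establish two bounds: an inner-loop bound $\expect_k \|w\pow{k+1} - w_*\pow{k+1}\| \le \tfrac{1}{5}\|w\pow{k} - w_*\pow{k+1}\|$ (where $\expect_k$ is conditional on $w\pow{k}$), and a stability bound $\|w_*\pow{k+1} - w^*\| \le \tfrac{1}{4}\|w\pow{k} - w^*\|$.

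For the stability bound, I would first observe that because $\Omega(\dualvar) = \tfrac{1}{2}\|\dualvar - u_n\|_2^2$, the optimal dual is a Euclidean projection, $\dualvar(w) = \proj_{\Pcal(\sigma)}(u_n + \ell(w)/\nu)$. Non-expansiveness of projections combined with the coordinatewise Lipschitz bound $|\ell_i(w) - \ell_i(w')| \le G\|w-w'\|$ then yields $\|\dualvar(w) - \dualvar(w')\|_2 \le (\sqrt{n}G/\nu)\|w-w'\|$. Next, the optimality conditions for $w_*\pow{k+1}$ and $w^*$ together with $\strongcvx$-strong convexity of $w \mapsto \saddleobjsmooth{\nu}(w,\dualvar)$ give $\strongcvx \|w_*\pow{k+1} - w^*\|^2 \le \langle \sum_i (\dualvar_i\pow{k} - \dualvar^*_i)\nabla \ell_i(w^*),\, w^* - w_*\pow{k+1}\rangle$, whose right-hand side is at most $\sqrt{n}\,G\,\|\dualvar\pow{k} - \dualvar^*\|_2\|w_*\pow{k+1} - w^*\|$ by Cauchy--Schwarz and the per-$i$ gradient bound. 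Chaining yields $\|w_*\pow{k+1} - w^*\| \le (nG^2/(\strongcvx\nu))\|w\pow{k} - w^*\|$, which the assumption $\nu \ge 4nG^2/\strongcvx$ converts into the desired factor $1/4$.

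For the inner-loop bound, the subproblem $w \mapsto \saddleobjsmooth{\nu}(w, \dualvar\pow{k})$ is a sum of $n$ convex components $w \mapsto n\dualvar_i\pow{k}\ell_i(w)$, each $n\sigma_{\max}L$-smooth since any $\dualvar \in \Pcal(\sigma)$ satisfies $\dualvar_i \le \sigma_{\max}$, plus the $\strongcvx$-strongly convex term $\tfrac{\strongcvx}{2}\|w\|^2$. This is the finite-sum setting for q-SVRG with random checkpointing $q^* = 1/n$ analyzed by \cite{hofmann2015variance}, whose analysis gives per-iteration contraction of the form $1 - \Omega(1/(n+\kappa))$ on the expected squared distance to the subproblem minimum with $\kappa = n\sigma_{\max}L/\strongcvx + 1$. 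Choosing $N = \Theta(n+\kappa)$ inner iterations gives a $1/25$-contraction on the squared error, hence a $1/5$-contraction on $\expect\|\cdot\|$ by Jensen. I expect this step to be the main technical obstacle: the q-SVRG rate must be invoked with the specific constants prescribed for $N$ and $\eta$, carefully tracking the $n\sigma_{\max}L$ smoothness and $\strongcvx$ strong convexity to obtain the stated condition number.

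Combining the two bounds by the triangle inequality $\|w\pow{k} - w_*\pow{k+1}\| \le \|w\pow{k} - w^*\| + \|w^* - w_*\pow{k+1}\| \le \tfrac{5}{4}\|w\pow{k} - w^*\|$ yields the per-epoch contraction $\expect_k\|w\pow{k+1} - w^*\| \le \big(\tfrac{1}{5}\cdot\tfrac{5}{4} + \tfrac{1}{4}\big)\|w\pow{k} - w^*\| = \tfrac{1}{2}\|w\pow{k} - w^*\|$. Iterating and applying the tower property gives the claimed $(1/2)^K$ bound; requiring $(1/2)^K\|w\pow{0}-w^*\| \le \sqrt{\eps}$ yields $K = O(\log(\|w\pow{0}-w^*\|_2^2/\eps))$ outer epochs, and since each epoch performs $N = O(n+\kappa)$ stochastic gradient evaluations plus an $O(n)$ full-batch checkpoint and PAV-based weight update, the total gradient-evaluation count is $O((n+\kappa)\log(\|w\pow{0}-w^*\|_2^2/\eps))$, as claimed.
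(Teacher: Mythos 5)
Your proposal is correct and takes essentially the same approach as the paper: it decomposes the per-epoch error via the intermediate target $w_*^{(k+1)}$, bounds the q-SVRG inner loop with the $1/25$ squared-distance contraction from \citet{hofmann2015variance} (passed through Jensen), and uses the coupled Lipschitz bounds $\|\dualvar^{(k)} - \dualvar^*\| \le (\sqrt{n}G/\nu)\|w^{(k)} - w^*\|$ and $\|w_*^{(k+1)} - w^*\| \le (\sqrt{n}G/\strongcvx)\|\dualvar^{(k)} - \dualvar^*\|$ together with $\nu \ge 4nG^2/\strongcvx$ and the triangle inequality to close the recursion at $1/2$. The only cosmetic difference is that you derive the dual Lipschitz bound via non-expansiveness of the Euclidean projection onto $\Pcal(\sigma)$ while the paper obtains both directions from a single abstract lemma on Lipschitzness of the argmin map (\Cref{lem:lips-minimizer}); the constants and structure are identical.
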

\begin{proof}
    For each epoch $k$, \Cref{alg:osvrg-u:theory-2} runs 
    q-SVRG on the function 
    \[
        \varphi\pow{k}(w) := \saddleobjsmooth{\nu}(w, \dualvar\pow{k}) = \frac{1}{n} \sum_{i=1}^n \tilde \ell_i\pow{k}(w) \quad \text{where} \quad
        \tilde \ell_i\pow{k}(w) = n \dualvar_i\pow{k} \ell_i(w) + \frac{\strongcvx}{2}\norm{w}^2 \,.
    \]
    The aim of this step is to approximate 
    $w\pow{k+1}_* = \argmin_{w\in \reals^d} \saddleobjsmooth{\nu}(w, \dualvar\pow{k})$ with $w\pow{k+1}$. We start by quantifying this error. 
    
    Since $\Pcal(\sigma)$ is the permutahedron on $\sigma$, we have that  
    \[
 \sigma_{\min} \leq  \min \left\{ \dualvar_i \, :\, \dualvar \in \Pcal(\sigma) \right\}  \leq \max \left\{ \dualvar_i \, :\, \dualvar \in \Pcal(\sigma) \right\} \le \sigma_{\max}.
    \]
    Hence, we have that each $\tilde \ell_i\pow{k} = n \dualvar_i\pow{k} \ell_i + \reg\norm{\cdot}^2_2/2$ is $n\sigma_{\max}L+\strongcvx$-smooth and $\strongcvx$-strongly convex, and its condition number is $\kappa = (n\sigma_{\max} L/\mu +1)$.
    Denote the sigma-algebra generated by $w\pow{k}$ as $\Fcal_k$, we have 
    from \Cref{thm:qsvrg} that
    \begin{align*}
        \expect\left[\norm{w\pow{k+1} - w\pow{k+1}_*}^2 \middle| \Fcal_k\right]
        \le \frac{5}{4} \exp\left( - \frac{N}{8 \kappa + n}\right) \norm{w\pow{k} - w\pow{k+1}_*}^2
        = \frac{1}{25} \norm{w\pow{k} - w\pow{k+1}_*}^2 \,.
    \end{align*}
    Therefore, Jensen's inequality gives us 
    \begin{align} \label{eq:osvrg-pf-1}
        \expect\left[\norm{w\pow{k+1} - w\pow{k+1}_*} \middle| \Fcal_k\right] \le \frac{1}{5} \norm{w\pow{k} - w\pow{k+1}_*} \,.
    \end{align}
    Denote $\dualvar^* = \argmax_{\dualvar \in \Pcal(\sigma)} \saddleobjsmooth{\nu}(w^*, \dualvar)$. Since $\saddleobjsmooth{\nu}(\cdot, \dualvar)$ is strongly convex and $\saddleobjsmooth{\nu}(w, \cdot)$ is strongly concave, 
    we have that strong duality holds, i.e., $\min_{w\in \reals^d} \max_{\dualvar \in \Pcal(\sigma)} \saddleobjsmooth{\nu}(w, \dualvar) = \max_{\dualvar \in \Pcal(\sigma)} \min_{w\in \reals^d} \saddleobjsmooth{\nu}(w, \dualvar)$~\citep[e.g.,][Thm. VII.4.3.1]{HiriartUrruty1993Convex}
    Therefore, $(w^*, \dualvar^*)$ is the unique saddle point of $\saddleobjsmooth{\nu}$, so $w^* = \argmin_{w\in \reals^d} \saddleobjsmooth{\nu}(w, \dualvar^*)$. 
    Together with \Cref{lem:lips-minimizer}, this gives us
    \begin{align} \label{eq:osvrg-pf-2}
        \norm{w\pow{k+1}_* - w^*} &\le \frac{\sqrt n G}{\strongcvx} \norm{\dualvar\pow{k} - \dualvar^*}, 
        \quad \text{and} \quad
        \norm{\dualvar\pow{k} - \dualvar^*} \le \frac{\sqrt n G}{\nu} \norm{w\pow{k} - w^*} \,.
    \end{align}
    From repeated invocations of the triangle inequality, we get, 
    \begin{align*}
        \expect\left[\norm{w\pow{k+1} - w^*} \middle| \Fcal_k\right]
        &\le 
        \expect\left[\norm{w\pow{k+1} - w\pow{k+1}_*} \middle| \Fcal_k\right]
        + \norm{w\pow{k+1}_* - w^*}  \\ 
        &\stackrel{\eqref{eq:osvrg-pf-1}}{\le}
            \frac{1}{5} \norm{w\pow{k} - w_*\pow{k+1}}_2 
            +
            \norm{w\pow{k+1}_* - w^*}_2 \\
        &\le
            \frac{1}{5} \norm{w\pow{k} - w^*}_2 
            +
            \frac{6}{5}
            \norm{w\pow{k+1}_* - w^*}_2 \\
        &\stackrel{\eqref{eq:osvrg-pf-2}}{\le}
            \frac{1}{5} \norm{w\pow{k} - w^*}_2 
            + \frac{ 6\sqrt n G}{5\strongcvx} \norm{\dualvar\pow{k} - \dualvar^*}_2  \\
        &\stackrel{\eqref{eq:osvrg-pf-2}}{\le}
            \left(\frac{1}{5} + \frac{6n G^2}{5\strongcvx\nu}\right) \norm{w\pow{k} - w^*}_2 \\
        &\le \frac{1}{2} \norm{w\pow{k} - w^*} \,,
    \end{align*}
    since we assumed $\nu$ satisfies $6n G^2 / (5\strongcvx \nu) \le 3/10$. 
    Taking an expecation w.r.t. $\Fcal_k$ and unrolling this completes the proof.
\end{proof}

\subsection{\osvrg Variants}
Algorithm~\ref{alg:extr_svrg} gives a variant of the \osvrg algorithm that computes checkpoints and the sorting at regular intervals. For simplicity, we visualize this algorithm as running in epochs. 
As in the usual SVRG algorithm for the ERM setting, we compute the full-batch subgradient at the checkpoint $\bar w_k$ at the start of each epoch (line~\ref{line:osvrg:batch-grad}). 
This is used to define the variance-reduced update in line~\ref{line:update_svrg}. 
Note also that we consider sampling at each iteration an example $i_t$ distributed as  $p_\sigma(i) = \P{}{i_t = i} = \sigma_i$; this is well-defined since $\sigma_1, ..., \sigma_n$ defines a probability measure over $\{1, \cdots, n\}$.

\begin{algorithm}[ht]
\caption{
Epoch-based \osvrg with nonuniform sampling
}\label{alg:extr_svrg}
\begin{algorithmic}[1]
\Require Number of iterates $T$ per epoch, number of epochs $K$,
regularization parameter $\reg$, learning rate $\eta$, non-decreasing probability mass function $\sigma = (\sigma_i)_{i=1}^n$, loss functions $(\ell_i)_{i=1}^n$ and their gradient oracles, initial point $\bar{w}_0$.
\For{epoch $k = 0, 1, 2, ..., K - 1$}
    \State Select $\pi_k \in \argsort\p{\ell\p{\bar{w}_k}}$.
    \State $\bar{g}_k = \sum_{i=1}^n \sigma_i \nabla \ell_{\pi_k(i)}(\bar{w}_k)$.
    \label{line:osvrg:batch-grad}
    \State $w^{(0)} = \bar{w}_k$.
    \For{iterate $t = 0, ..., T - 1$}
        \State Sample $i_{t} \sim p_\sigma$.
        \State $v^{(t)} = \nabla \ell_{\pi_k(i_{t})}(w^{(t)}) - \nabla \ell_{\pi_k(i_{t})}(\bar{w}_k) + \bar{g}_k$. \label{line:update_svrg}
        \State $w^{(t+1)} = (1 - \eta \mu) w^{(t)} - \eta v^{(t)}$.
    \EndFor
    \State Set $\bar w_{k+1} = w^{(T)}$.
\EndFor
\State \Return $\bar{w}_{K}$.
\end{algorithmic}
\end{algorithm}

\subsection{q-SVRG Review}
Consider the risk-neutral problem
\[
    f(w) = \frac{1}{n} \sum_{i=1}^n \ell_i(w) \,.
\]
The q-SVRG is a variant of SVRG that updates the batch gradient with probability $1/m$ at each step, rather than once every $m$ steps like the usual version of SVRG~\cite{hofmann2015variance}. 
See \Cref{alg:qsvrg} for details.
It has the following convergence guarantee. 

\begin{theorem}[\citealp{hofmann2015variance}, Lemma 3]
\label{thm:qsvrg}
Suppose each $\ell_i$ is $L$-smooth and $\reg$-strongly convex.
Then \Cref{alg:qsvrg} with a learning rate $\eta = 2 / (8L + n \reg)$ and $q^* = 1/n$ produces a sequence $(w\pow{t})$ that satisfies
\[
    \expect\norm{w\pow{t} - w^*}^2 \le 
    \frac{5}{4} \exp\left( - \frac{t}{8 \kappa + n} \right) \norm{w\pow{0} - w^*}^2 \,,
\]
where $w^* = \argmin_w f(w)$ and $\kappa = L/\reg$ is the condition number. 
\end{theorem}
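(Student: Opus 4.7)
The plan is to reformulate the smoothed objective \eqref{eq:orm:smooth} in saddle-point form
\[
    \saddleobj(w,\dualvar) := \sum_{i=1}^n \dualvar_i \ell_i(w) + \frac{\reg}{2}\|w\|_2^2 - \nu \Omega(\dualvar),
    \qquad \primobjsmooth{\reg,\nu\Omega}(w) = \max_{\dualvar \in \Pcal(\sigma)} \saddleobj(w,\dualvar),
\]
and interpret the smoothed variant of \Cref{alg:osvrg-u:theory-main} as an inexact alternating scheme that, at every checkpoint $k$ (spaced $N$ iterations apart), performs an exact outer update $\dualvar\pow{k} = \argmax_{\dualvar \in \Pcal(\sigma)} \saddleobj(w\pow{k},\dualvar)$ and then runs $N$ q-SVRG steps on the $\reg$-strongly convex finite sum $w \mapsto \saddleobj(w,\dualvar\pow{k})$ to produce $w\pow{k+1}$, approximating $w\pow{k+1}_* := \argmin_w \saddleobj(w,\dualvar\pow{k})$. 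Strong duality (from $\reg$-strong convexity in $w$ and $\nu$-strong concavity in $\dualvar$) guarantees a unique saddle point $(w^*,\dualvar^*)$, where $w^*$ is the minimizer of interest.

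First I would analyse the inner loop. Each component $n\dualvar_i\pow{k}\ell_i + \tfrac{\reg}{2}\|\cdot\|_2^2$ of the finite sum is $\reg$-strongly convex and $(n\sigma_{\max}L+\reg)$-smooth, and since $\dualvar_i\pow{k} \le \sigma_{\max}$ on the permutahedron, its condition number is dominated by $\kappa$. Invoking \Cref{thm:qsvrg} with the prescribed $\stepsize$ and $N = 4(n+8\kappa)$, then using Jensen's inequality to pass from squared distances to distances, yields
\[
    \E{k}{\|w\pow{k+1}-w\pow{k+1}_*\|_2} \le \tfrac{1}{5}\,\|w\pow{k}-w\pow{k+1}_*\|_2,
\]
where $\E{k}{\cdot}$ is conditional expectation given $w\pow{k}$ (under which $w\pow{k+1}_*$ is measurable).

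Second I would derive two Lipschitz-minimizer bounds. By $\reg$-strong convexity of $\saddleobj(\cdot,\dualvar)$ combined with $\|\nabla_w \saddleobj(w,\dualvar)-\nabla_w \saddleobj(w,\dualvar')\|_2 = \|\sum_i(\dualvar_i-\dualvar_i')\nabla \ell_i(w)\|_2 \le \sqrt{n}\,G\,\|\dualvar-\dualvar'\|_2$, the map $\dualvar \mapsto \argmin_w \saddleobj(w,\dualvar)$ is $\sqrt{n}G/\reg$-Lipschitz. Symmetrically, $\nu$-strong concavity of $\saddleobj(w,\cdot)$ paired with $\|\ell(w)-\ell(w')\|_2 \le \sqrt{n}\,G\,\|w-w'\|_2$ makes $w \mapsto \argmax_{\dualvar \in \Pcal(\sigma)} \saddleobj(w,\dualvar)$ an $\sqrt{n}G/\nu$-Lipschitz map. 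Chaining at $(w\pow{k},\dualvar\pow{k})$ versus $(w^*,\dualvar^*)$ gives
\[
    \|w\pow{k+1}_*-w^*\|_2 \le \frac{\sqrt{n}G}{\reg}\,\|\dualvar\pow{k}-\dualvar^*\|_2 \le \frac{nG^2}{\reg\nu}\,\|w\pow{k}-w^*\|_2 \le \tfrac{1}{4}\|w\pow{k}-w^*\|_2,
\]
where the last inequality uses the hypothesis $\nu \ge 4nG^2/\reg$.

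Combining via the triangle inequality,
$\E{k}{\|w\pow{k+1}-w^*\|_2} \le \tfrac{1}{5}\|w\pow{k}-w\pow{k+1}_*\|_2 + \|w\pow{k+1}_*-w^*\|_2 \le \tfrac{1}{5}\|w\pow{k}-w^*\|_2 + \tfrac{6}{5}\|w\pow{k+1}_*-w^*\|_2 \le (\tfrac{1}{5}+\tfrac{3}{10})\|w\pow{k}-w^*\|_2 = \tfrac{1}{2}\|w\pow{k}-w^*\|_2$. Taking the total expectation and iterating over $k$ yields the advertised geometric rate; the complexity bound then follows because each epoch costs $O(n+\kappa)$ gradient evaluations (one full pass for the checkpoint plus $N$ stochastic updates) and $K = O(\log(\|w\pow{0}-w^*\|_2^2/\eps))$ epochs suffice to reach $(\expect\|w\pow{kN}-w^*\|_2)^2 \le \eps$. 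The principal obstacle is quantitative: calibrating $\nu$, $N$, and $\stepsize$ so that the inner q-SVRG contraction ($\tfrac{1}{5}$) and the outer coupling ($\tfrac{1}{4}$) compose \emph{exactly} to $\tfrac{1}{2}$, which hinges on tracking the $\sqrt{n}G$ Lipschitz constants tightly in both the primal and dual directions.
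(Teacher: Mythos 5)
There is a fundamental mismatch here: the statement you were asked to prove is \Cref{thm:qsvrg}, the convergence guarantee for the q-SVRG algorithm (\Cref{alg:qsvrg}) applied to a plain finite-sum $f(w) = \frac{1}{n}\sum_{i=1}^n \ell_i(w)$ with $L$-smooth, $\reg$-strongly convex components. This is the result the paper imports from \citet{hofmann2015variance} and uses as a black box. Your proposal does not prove it; instead it reproduces the paper's proof of the downstream result (\Cref{thm:main_osvrg}, the saddle-point/alternating analysis of smoothed \osvrg), and in doing so it explicitly \emph{invokes} \Cref{thm:qsvrg} to obtain the inner-loop contraction $\E{k}{\|w\pow{k+1}-w\pow{k+1}_*\|_2} \le \tfrac{1}{5}\|w\pow{k}-w\pow{k+1}_*\|_2$. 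So the argument is circular with respect to the assigned statement: you assume exactly what you were supposed to establish.

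A genuine proof of \Cref{thm:qsvrg} looks entirely different. It is a variance-reduction argument at the level of single stochastic steps: one constructs a Lyapunov function coupling $\normsq{w\pow{t}-w^*}$ with the suboptimality at the current checkpoint $\bar w\pow{t}$, bounds the second moment of the update direction $v\pow{t} = \grad\ell_{i_t}(w\pow{t}) - \grad\ell_{i_t}(\bar w\pow{t}) + \bar g\pow{t}$ using $L$-smoothness (e.g.\ $\expect\normsq{v\pow{t}} \le 4L\big(f(w\pow{t})-f(w^*)\big) + 4L\big(f(\bar w\pow{t})-f(w^*)\big)$), and exploits the memoryless checkpoint refresh with probability $q^*=1/n$ to show the Lyapunov function contracts by a factor of roughly $1-\Theta\big(\min\{\eta\reg,\,1/n\}\big)$ per iteration, which with $\eta = 2/(8L+n\reg)$ yields the rate $\tfrac{5}{4}\exp\big(-t/(8\kappa+n)\big)$. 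None of the ingredients of your proposal (the permutahedron, the Lipschitz-minimizer lemma, the coupling of primal and dual updates, the choice of $\nu$) plays any role in that proof. Your write-up is a reasonable sketch of the paper's \Cref{thm:main_osvrg}, but as a proof of \Cref{thm:qsvrg} it is missing the entire argument.
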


\begin{algorithm}[t]
\caption{
q-SVRG
}\label{alg:qsvrg}
\begin{algorithmic}[1]
\Require Number of iterations $T$, loss functions $(\ell_i)_{i=1}^n$ and their gradient oracles, initial point ${w}\pow{0}$, learning rate $\eta$, probability of checkpointing $q^*$.
\State Set $\bar w\pow{-1} = w\pow{0}$ and $\bar g\pow{-1} = \frac{1}{n}\sum_{i=1}^n\grad \ell_i(w\pow{0})$
\For{iterate $t = 0, ..., T - 1$}
    \State Draw $q_t \sim \text{Unif}([0, 1])$
    \If{ $q_t \le q^*$} \Comment{Update the batch gradient}
        \State Set $\bar w\pow{t} = w\pow{t}$ and 
            $\bar g\pow{t} =\frac{1}{n}\sum_{i=1}^n\grad \ell_i(w\pow{t})$
    \Else
        \State $\bar w\pow{t} = \bar w\pow{t-1}$ and 
        $\bar g\pow{t} = \bar g\pow{t-1}$
    \EndIf
    \State Sample $i_{t} \sim \text{Unif}([n])$
    \State $v^{(t)} = \nabla \ell_{i_{t}}(w^{(t)}) - \nabla \ell_{i_{t}}(\bar{w}\pow{t}) + \bar{g}\pow{t}$
    \State $w^{(t+1)} = w^{(t)} - \eta v^{(t)}$
\EndFor
\State \Return $w\pow{T}$
\end{algorithmic}
\end{algorithm}

\subsection{Technical Results}

Note the following properties of the joint function $\saddleobjsmooth{\nu}$ defined in \eqref{eq:saddle:main}. 
\begin{property}
    The following smoothness properties hold: 
    \begin{enumerate}[label=(\alph*),nosep]
        \item For each $\dualvar \in \Pcal(\sigma)$, $\grad_w \saddleobjsmooth{\nu}(\cdot, \dualvar)$ is $(L+\strongcvx)$-Lipschitz
        \item For each $w \in \reals^d$, $\grad_\dualvar \saddleobjsmooth{\nu}(w, \cdot)$ is $\nu$-Lipschitz.
        \item For each $w \in \reals^d$, $\grad_w \saddleobjsmooth{\nu}(w, \cdot)$ 
        is $\sqrt{n}G$-Lipschitz. 
        \item For each $\dualvar \in \Pcal(\sigma)$, $\grad_\dualvar \saddleobjsmooth{\nu}(\cdot, \dualvar)$ is $\sqrt{n}G$-Lipschitz.
    \end{enumerate}
\end{property}
\begin{proof}
    The result follows from the expressions
    \begin{align*}
        \grad_w \saddleobjsmooth{\nu}(w, \dualvar) = \sum_{i=1}^n \dualvar_i \grad \ell_i(w) + \strongcvx w \quad \text{and} \quad
        \grad_\dualvar \saddleobjsmooth{\nu}(w, \dualvar) = \ell(w) - \nu(\dualvar - u_n) \,.
    \end{align*}
    \begin{enumerate}[label=(\alph*),nosep]
        \item For any $w, w' \in \R^d$,
        \begin{align*}
            \norm{\grad_w \saddleobjsmooth{\nu}(w, \dualvar) - \grad_w \saddleobjsmooth{\nu}(w', \dualvar)}_2 &\leq \sum_{i=1}^n \dualvar_i \norm{\grad \ell_i(w) - \grad \ell_i(w')}_2 + \mu \norm{w - w'}_2\\
            & \leq \sum_{i=1}^n \dualvar_i L \norm{w - w'}_2 + \mu \norm{w - w'}_2\\
            &\leq \p{L + \mu} \norm{w - w'}_2,
        \end{align*}
        as $\sum_{i=1}^n \dualvar_i = 1$ for $\dualvar \in \mc{P}(\sigma)$.
        \item For any $\dualvar, \dualvar' \in \mc{P}(\sigma)$,
        \begin{align*}
            \norm{\grad_\dualvar \saddleobjsmooth{\nu}(w, \dualvar) - \grad_\dualvar \saddleobjsmooth{\nu}(w, \dualvar')}_2 &= \norm{\nu \dualvar - \nu \dualvar'}_2 = \nu \norm{\dualvar - \dualvar'}_2.
        \end{align*}
        \item For any $\dualvar, \dualvar' \in \mc{P}(\sigma)$,
        \begin{align*}
            \norm{\grad_w \saddleobjsmooth{\nu}(w, \dualvar) - \grad_w \saddleobjsmooth{\nu}(w, \dualvar')}^2_2 &= \norm{\sum_{i=1}^n (\dualvar_i - \dualvar_i')\grad \ell_i(w)}_2^2 \\
            & \leq \sum_{i=1}^n \norm{\grad \ell_i(w)}_2^2 \sum_{i=1}^n \p{\dualvar_i - \dualvar_i'}^2 \\
            &\leq n G^2 \norm{\dualvar - \dualvar'}_2^2.
        \end{align*}
        \item For any $w, w' \in \R^d$,
        \begin{align*}
            \norm{\grad_\dualvar \saddleobjsmooth{\nu}(w, \dualvar) - \grad_\dualvar \saddleobjsmooth{\nu}(w, \dualvar')}^2_2 &= \norm{\ell(w) - \ell(w')}_2^2\\
            &= \sum_{i=1}^n \p{\ell_i(w) - \ell_i(w')}_2^2\\
            &\leq \sum_{i=1}^n G \norm{w - w'}_2^2\\
            &= n G \norm{w - w'}_2^2.
        \end{align*}
    \end{enumerate}
\end{proof}

\begin{lemma} \label{lem:lips-minimizer}
    Given closed, convex sets $X \subset \reals^d$, $Y \subset \reals^p$, 
    consider a continuously differentiable function 
    $f: X \times Y \to \reals$ such that $f(\cdot, y)$ is $\reg$-strongly convex for all $y \in Y$ and 
    $\grad_x f(x, \cdot)$ is $L_{x,y}$-Lipschitz for each $x \in X$.
    Then, the map $x^*(y) = \argmin_{x \in X} f(x, y)$ is well-defined and is $L_{x,y}/\reg$ Lipschitz.  
\end{lemma}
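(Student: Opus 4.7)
The plan is a standard sensitivity analysis via first-order optimality conditions. Well-definedness of $x^*(y) = \argmin_{x \in X} f(x,y)$ is immediate: for each fixed $y \in Y$, the function $f(\cdot,y)$ is $\mu$-strongly convex on the closed convex set $X$, so it admits a unique minimizer. The remaining task is the Lipschitz bound.

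Fix $y_1, y_2 \in Y$ and set $x_i = x^*(y_i)$. The first-order optimality condition for a minimizer of a differentiable convex function over a closed convex set gives, for any $z \in X$,
\[
    \langle \grad_x f(x_i, y_i), z - x_i \rangle \geq 0.
\]
Applying this with $z = x_{3-i}$ and summing the two resulting inequalities yields
\[
    \langle \grad_x f(x_2, y_2) - \grad_x f(x_1, y_1), x_2 - x_1 \rangle \leq 0.
\]
The next step is to split this difference along the two arguments by adding and subtracting $\grad_x f(x_2, y_1)$, which rearranges the inequality into
\[
    \langle \grad_x f(x_2, y_1) - \grad_x f(x_1, y_1), x_2 - x_1 \rangle \leq \langle \grad_x f(x_2, y_2) - \grad_x f(x_2, y_1), x_1 - x_2 \rangle.
\]

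Now the two sides are controlled by the two hypotheses. The left-hand side is bounded below by $\mu \|x_2 - x_1\|^2$ using $\mu$-strong convexity of $f(\cdot, y_1)$ (equivalently, $\mu$-strong monotonicity of $\grad_x f(\cdot, y_1)$). The right-hand side is bounded above by $\|\grad_x f(x_2, y_2) - \grad_x f(x_2, y_1)\| \cdot \|x_2 - x_1\| \leq L_{x,y} \|y_2 - y_1\| \cdot \|x_2 - x_1\|$ via Cauchy--Schwarz and the Lipschitz assumption on $\grad_x f(x_2, \cdot)$. Combining the two and dividing by $\|x_2 - x_1\|$ (the case $x_1 = x_2$ being trivial) gives $\|x_2 - x_1\| \leq (L_{x,y}/\mu) \|y_2 - y_1\|$, as desired.

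There is no real obstacle here; the only subtlety is making sure to split the gradient difference so that each resulting term is handled by exactly one of the two hypotheses (strong convexity in $x$, Lipschitzness in $y$). The argument is purely variational and does not require any interior assumption on $x^*(y)$, which is important since $X$ may have a boundary.
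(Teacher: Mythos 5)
Your proof is correct and takes essentially the same approach as the paper's: first-order optimality conditions at the two minimizers, strong monotonicity of $\grad_x f$ in $x$, Cauchy--Schwarz, and Lipschitzness of $\grad_x f$ in $y$. The only cosmetic difference is that you sum the two variational inequalities up front and then split by adding and subtracting $\grad_x f(x_2, y_1)$, whereas the paper starts from strong monotonicity of $\grad_x f(\cdot, y_2)$ and applies the two optimality conditions one at a time; the underlying estimate is identical.
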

\begin{proof}
    The map $x^*(y)$ is well-defined because $f(\cdot, y)$ is strongly convex and $X$ is closed, convex. Consider two points $y_1, y_2 \in Y$ and let $x_i = x^*(y_i)$ be the corresponding $x$-values. 
    From the first order optimality conditions of $f(\cdot, y_1)$ and $f(\cdot, y_2)$ respectively, we have 
    \begin{align} \label{eq:foo-opt}
        \inp{\grad_x f(x_1, y_1)}{x_2 - x_1} \ge 0, 
        \quad\text{and}\quad 
        \inp{\grad_x f(x_2, y_2)}{x_1 - x_2} \ge 0 \,.
    \end{align}
    Using the co-coercivity property $(*)$ of the strong convexity of $f(\cdot, y)$, we have, 
    \begin{align*}
        \reg\norm{x_1 - x_2}^2 
        &\stackrel{(*)}{\le} \inp{\grad_x f(x_1, y_2) - \grad_x f(x_2, y_2)}{x_1 - x_2} \\
        &\stackrel{\eqref{eq:foo-opt}}{\le}
            \inp{\grad_x f(x_1, y_2)}{x_1 - x_2} \\
        &\stackrel{\eqref{eq:foo-opt}}{\le} 
            \inp{\grad_x f(x_1, y_2) - \grad_x f(x_1, y_1)}{x_1 - x_2} \\
        &\le \norm{\grad_x f(x_1, y_2) - \grad_x f(x_1, y_1)} \, \norm{x_1 - x_2} \\
        &\le L_{x,y} \norm{y_1 - y_2} \, \norm{x_1 - x_2} \,.
    \end{align*}
\end{proof}

\section{Experimental Details}
\label{sec:a:experiments}
\Cref{sec:a:task} describes the tasks, datasets, and preprocessing steps used in the experiments. \Cref{sec:a:objective} reviews the objective minimized (including regularization). \Cref{sec:a:methods} describes the baseline methods compared. \Cref{sec:a:hyperparam} lists the hyperparameters of each algorithm and describes how they are selected. \Cref{sec:a:code} describes the compute environment used to run the experiments.

\subsection{Task and Dataset Descriptions}
\label{sec:a:task}

We start by describing the tasks and datasets considered in the experiments as well as their preprocessing steps.
For each task, we consider an input $x \in \msc{X}$, a feature map $\phi: \msc{X} \to \reals^d$, and an output space $\msc{Y}$. 
For regression, we have $\msc{Y} = \reals$
and for classification, we have $\msc{Y} = \{1, \ldots, C\}$, where $K$ is the number of classes. 
We make predictions with a linear model $x \mapsto w\T \phi(x)$, where $w \in \reals^d$ is the parameter vector to be optimized over. We consider the square loss between these predictions and the target $y_i$: 
\[
    \ell_i(w) = \frac{1}{2} (y_i - w\T \phi(x_i))^2 \,.
\]
for regression, and the multinomial logistic loss
\[
    \ell_i(w) = \log p_{y_i}(x_i; w), \text{ where } p_{y_i}(x_i; w) := \frac{\exp\p{w_{\cdot y}^\top x_i}}{\sum_{y' = 1}^C \exp\p{w_{\cdot y'}^\top x_i}}, \ w \in \R^{d \times C}
\]
for classification. Each input feature $\phi_j(x)$ for $j=1,\cdots, d$  is standardized to zero mean and unit variance (as are the targets $y_i$ in regression).
We now describe the datasets considered. The size and dimensionality of the resulting datasets are summarized in \Cref{tab:dataset}. 
\begin{enumerate}[nosep, label=(\alph*), leftmargin=\widthof{ (a) }]
    \item \texttt{simulated}:
This regression task entails prediction of a synthetic, real-valued response based on $d$-dimensional real vectors. The dataset is generated by sampling the inputs $x_1, ..., x_n$ and true parameter vector $w^*$ from the $d$-dimensional standard normal distribution $\mc{N}(0, I_d)$ for $n = 1000$ and $d = 10$, and the noise $\epsilon_1, ..., \epsilon_n \in \mc{N}(0, 1)$. Then, $y_i = w^\top x_i + \epsilon_i$ for $i = 1, ..., n$. The feature map $\phi$ is taken to be the identity.
    \item \texttt{yacht}: 
This regression task entails prediction of the residuary resistance of a sailing yacht based on its physical attributes \cite{Tsanas2012AccurateQE}. Each input $x \in \msc{X}$ is a sailing yacht and the feature map $\phi(x) \in \reals^d$ lists $d=6$ geometric attributes such as the length-beam ratio.
    \item \texttt{energy}:
This regression task entails prediction of the cooling load of a building based on its physical attributes \cite{Segota2020Artificial}. Each input $x \in \msc{X}$ is a building and the feature map $\phi(x) \in \reals^d$ lists $d=8$ structural attributes such as the surface area, height, etc.
    \item \texttt{concrete}: 
This regression task entails prediction of the compressive strength of a concrete type based on its physical and chemical attributes \cite{Yeh2006Analysis}. Each input $x \in \msc{X}$ is a particular composition of concrete and the feature map $\phi(x) \in \reals^d$ lists $d=8$ physical/chemical attributes such as amount of cement vs water.
    \item \texttt{iWildCam}: 
This classification task entails prediction of an animal present in an image captured by various wilderness camera traps, with drastic variation in illumination, camera angle, background, vegetation, color, and relative animal frequencies \cite{beery2020iwildcam}. Each input $x \in \msc{X}$ is an image the feature map $\phi(x) \in \reals^d$ for $d=189$ is the output of the sequence of the following operations.
\begin{itemize}
    \item A ResNet50 neural network \cite{He2016DeepResidual} that is pretrained on ImageNet \cite{deng2009imagenet} is applied to the image $x_i$, resulting in vector $x'_i$.
    \item The $x'_1, \ldots, x'_n$ are standardized to have zero mean and unit variance in each dimension.
    \item Principle Components Analysis (PCA) is applied, resulting in $d = 189$ components that explain $99\%$ of the variance, resulting in vectors $x''_i \in \R^{189}$.
    \item The $x''_1, \ldots, x''_n$ are standardized once again, giving $\phi(x_1), \ldots, \phi(x_n)$.
\end{itemize}

\end{enumerate}

\begin{table}[t!]
    \centering
    \begin{tabular}{ccccc}
        Dataset & $d$ & $n_{\text{train}}$ & $n_{\text{test}}$ & Source\\
        \hline
        \texttt{simulated} & 10 & 800 & 200 & n/a\\
        \texttt{yacht} & 6 & 244 & 62 & UCI\\
        \texttt{energy} & 8 & 614 & 154 & UCI\\
        \texttt{concrete} & 8 & 824 & 206 & UCI\\
    \end{tabular}
    \caption{Benchmark dataset descriptions.}
    \label{tab:dataset}
\end{table}

\subsection{Objective}\label{sec:a:objective}

In the experiments, we consider minimizing regularized ordered risk minimization problems of the form
\begin{align*}
    \min_{w\in \R^d} \quad & \msc{R}_\sigma(w) + \frac{\mu}{2} \|w\|_2^2, \\
    \mbox{where} \quad & \msc{R}_\sigma(w) = \sum_{i=1}^n \sigma_i \ell_{(i)}(w),
\end{align*}
where the coefficients $\sigma$ are defined using the spectrum of the spectral risk measure in question. We consider the mean, superquantile, extremile, and exponential spectral risk measure (ESRM), as defined in \Cref{sec:theory}. The regularization parameter $\mu$ is chosen as $1/n$ in the experiments presented in the main text, whereas other choices of $\mu$ are shown in \Cref{sec:a:additional}. By adding a regularization $\| \cdot \|_2^2$ to the objective, the \osvrg algorithm is modified by considering a direction of the form $v^{(t)}_{\mathrm{reg}} = v^{(t)}_{\mathrm{non\_reg}} + \mu w^{(t)}$, where $v^{(t)}_{\mathrm{non\_reg}} = \bar{g}\pow{t}$ is the direction presented in line 10 of Algorithm~\ref{alg:extr_svrg}. All algorithms are initialized with $w^{(0)} = 0$.

\subsection{Baseline Methods}
\label{sec:a:methods}

The baseline methods described below rely on a {\it stochastic subgradient estimate}, or a random quantity $g^{(t)}$ that estimates $\nabla \msc{R}_{\sigma}(w^{(t)})$ if $\msc{R}_{\sigma}$ is differentiable at $w^{(t)}$ and a subgradient of $\partial \msc{R}_{\sigma}(w^{(t)})$ otherwise. As described in \Cref{sec:optim}, we use
\begin{align}
    g\pow{t} := \sum_{j = 1}^m \hat{\sigma}_{j} \nabla \ell_{i_{(j)}}(w^{(t)})
    \label{eqn:stochastic_gradient}
\end{align}
for a minibatch $\{i_1, ..., i_m\}$ of size $m$ with weights
$\hat{\sigma}_j = \int_{\frac{j-1}{m}}^{\frac{j}{m}} s(t) \d t$, and $\ell_{i_{(j)}}$ being the ordered losses $\ell_{i_{(1)}} \leq \ldots \ell_{i_{(m)}}$ in the minibatch. We refer to the direction as $g\pow{t} = v_m\pow{t}$ in \Cref{alg:sgd}. 

\myparagraph{SGD} We refer to the stochastic subgradient method as SGD. The update can be written as
\begin{align*}
    w^{(t+1)} &:= w^{(t)} - \eta (g\pow{t} + \mu w\pow{t}),
\end{align*}
where $v\pow{t}_m$ is a stochastic estimate of the minibatch extremile subgradient (\Cref{eqn:stochastic_gradient}).

\myparagraph{SRDA} The stochastic regularized dual averaging (SRDA)~\cite{Xiao2009Dual} update can be written as
\begin{align*}
    w^{(t+1)} &:= \argmin_{w \in \R^d} w^\top\bar{g}^{(t)}+ \frac{\mu}{2} \norm{w}_2^2 + \frac{1}{2\eta t} \|w\|_2^2,
\end{align*}
where $\bar{g}^{(t)} = \sum_{i=0}^t g\pow{i}$ is the average of all stochastic subgradients (again computed by \Cref{eqn:stochastic_gradient}). 
Note that for $\Omega = \|\cdot\|_2^2/2$ and $w^{(0)} = 0$,
Note that for $w^{(0)} = 0$,
\begin{align*}
    w^{(t+1)} &= 0 -\frac{1}{\mu + {1}/{t\eta}} \bar{g}^{(t)}\\
    &= w^{(0)} - \sum_{s=0}^t  \frac{1}{\mu t + 1/\eta} g^{(s)}.
\end{align*}
Thus, the SRDA solution at time $t + 1$ can be seen as applying SGD with a {\it constant} learning rate of $\eta = 1/({\mu t}/{n} + \beta)$ (as $t$ refers to the value of only the last iteration). It is also seen that when $\mu = 0$ (no statistical regularization), SRDA reduces exactly to SGD.

\subsection{Hyperparameter Selection}
\label{sec:a:hyperparam}

The fixed optimization hyperparameters include the minibatch size $m = 64$ (SGD, SRDA) and the epoch length $N = n$ (\osvrg). The statistical regularization parameter $\mu = 1 / n$ is shown in the main text, whereas training curves for $\mu = 0.1 / n$ and $\mu = 10 / n$ are shown in \Cref{sec:a:additional}. Specifically, $c \in \{1, 2, 3, 4, 5\}$ be a seed that determines the randomness for sampling the minibatch $\{i_1, ..., i_m\}$ at each iteration of SGD and SRDA, $i_t$ at each iteration of \osvrg. Let $T$ be the total number of iterations for the algorithm, and denote the trajectory of iterates seeded by $c$ using learning rate $\eta$ as $w\pow{1}_{c, \eta}, ..., w\pow{T}_{c, \eta}$. Then, define the quantity
$   L(\eta) = \frac{1}{5} \sum_{c=1}^5 \msc{R}_\sigma(w\pow{T}_{c, \eta}).$
The learning rate $\eta$ is chosen in the set $\{3\times 10^{-4}, 1\times 10^{-3}, 3\times 10^{-3}, 1\times 10^{-2}, 3\times 10^{-2}, 1\times 10^{-1}, 3\times 10^{-1}, 1\times 10^{0}, 3\times 10^{0}\}$ to minimize $L(\eta)$ for each algorithm. If any of the trajectories diverge, we consider $L(\eta) = +\infty$. Note that $\msc{R}_\sigma$ is computed using the {\it training set}, as we are selecting hyperparameters for optimization.

\subsection{Compute Environment}
\label{sec:a:code}

All experiments were run on a workstation with Intel i9 processor (clock
speed: 2.80GHz) with 32 virtual cores and 126G of memory. We did not use GPUs for any experiments. Code used for this project was written in Python 3. 

\subsection{Experimental Details on Clustering}

Recall that we consider clustering $n$ points $x_1, \ldots, x_n$ into $k$ clusters with centers $C=(c_1, \ldots, c_k)$ by minimizing a weighted average of the distances of each point to its closest center, i.e., problems of the form
\[
\min_{C \in \reals^{d\times k}} \sum_{i=1}^n \sigma_i \ell_{(i)}(C) \quad \mbox{for}\ \ell_i(C) = \min_{\substack{z_i \in \{0, 1\}^k\\ z_i^\top \ones = \ones}}\sum_{j=1}^k z_{ij}\|x_i-c_j\|_2^2.
\]
We consider the weights $\sigma_i$ to be the discretization of a spectrum $s$ such that $\sigma_i= \int_{\frac{i-1}{n}}^{\frac{i}{n}} s(t)dt$ with $s$ being one of the following examples:
\begin{enumerate}[nosep]
    \item uniform spectrum, $s(t) = \ones_{[0, 1]}(t)$ which corresponds to a classical kmeans objective of the form $\min_{C\in \reals^{d\times k}} \frac{1}{n} \sum_{i=1}^n \ell_i(C)$,
    \item a truncated spectrum, $s_q(t) = \ones_{[0, q]}(t)/q$ for $q\in (0, 1)$ that seeks to only consider minimizing losses with small enough values compared to the whole distribution,
    \item an extremile spectrum $s_r(t) = r(1-t)^r$ for $r\geq 1$ that can be interpreted as minimizing the expected minimum of $r$ random variables distributed as the losses~\citep{Daouia2019Extremiles}.
\end{enumerate}

We consider a stochastic subradient descent with constant stepsize with mini-batch estimates given by the empirical L-statitics estimate on the mini-batches as described in Sec.~\ref{sec:optim}.

\subsubsection{Synthetic Data}
As~\cite{Maurer2020RobustUnsupervised} we consider as training data three cloud of Gaussians composed of $100$ two dimensional points each with  variance $0.1$ along both axis and centers $(-3,0), (0, 1)$ and $(3, 0)$ respectively. We add $100$ outliers sampled from a Gaussian with  variance $5$ along both axis and center $(-1, -5)$. 
The test set consists in points sampled from the three aforementioned inlier Gaussians, $100$ points per Gaussian. 
To test our method, we compute the number of correct assignments of the test points in their associated cluster after relabeling the clusters to match the true labeling. Namely, the groups found by a method may be correct but instead of labeling the first cloud of points by 1 the method may have assigned the label 1 to the second group and 2 to the first group for example, so we first find the permutation of the labels that leads to the highest accuracy.

We used mini-batches of size $64$, a learning rate of $1$ found by grid-search on log-10 scale, a uniform spectrum, a truncated spectrum with parameter $q=0.75$ or an extremile spectrum with $r=5$ and we initialize the centers at $0$. In Fig.~\ref{fig:full_synth_kmeans} we present the estimated centers found for each spectrum as well as the training and test losses and the training and test accuracies, where for the training accuracy we only consider the assignment of the inlier points. 

\begin{figure}
    \centering
    \includegraphics[width=0.32\linewidth]{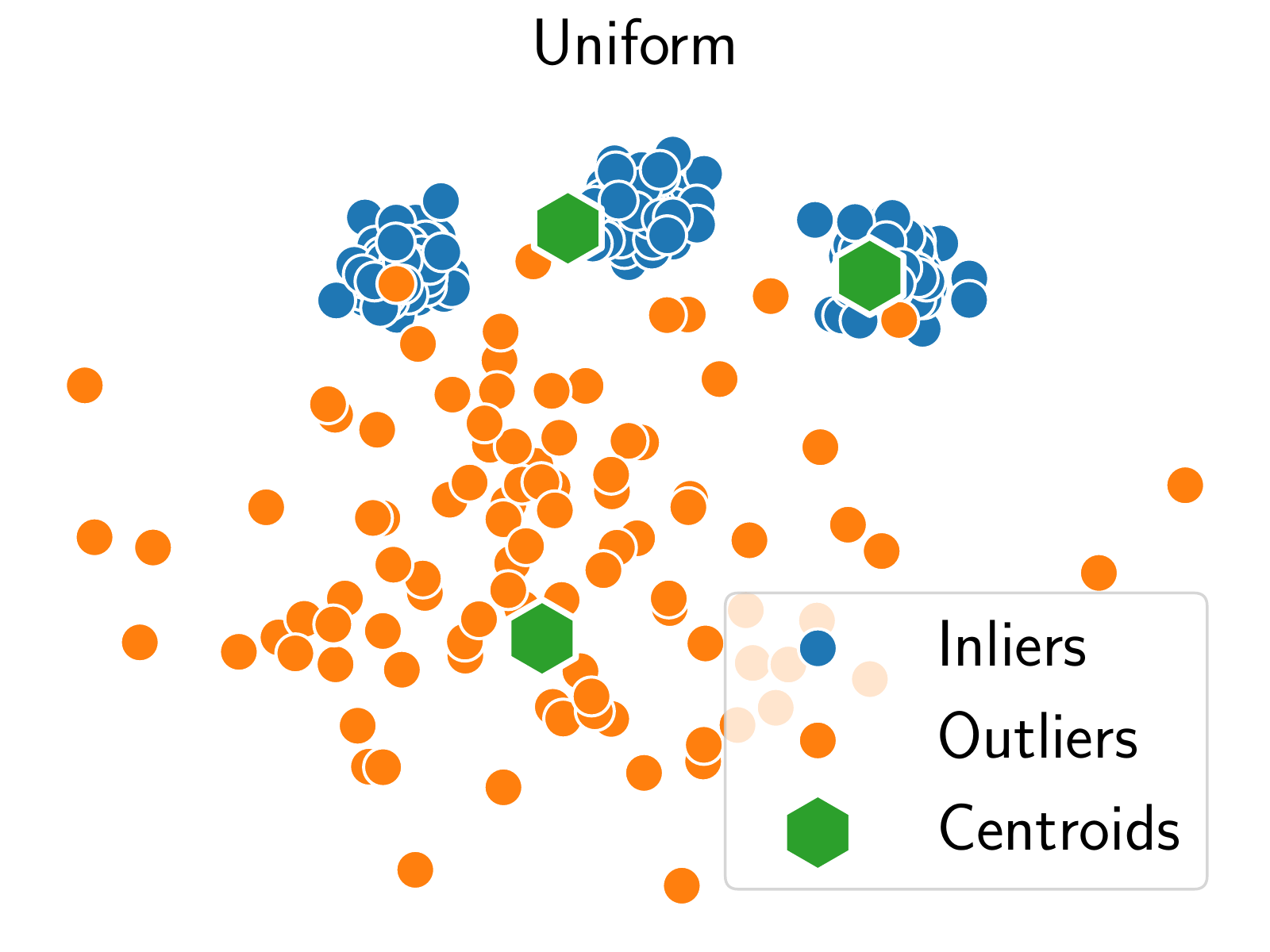}~
    \includegraphics[width=0.32\linewidth]{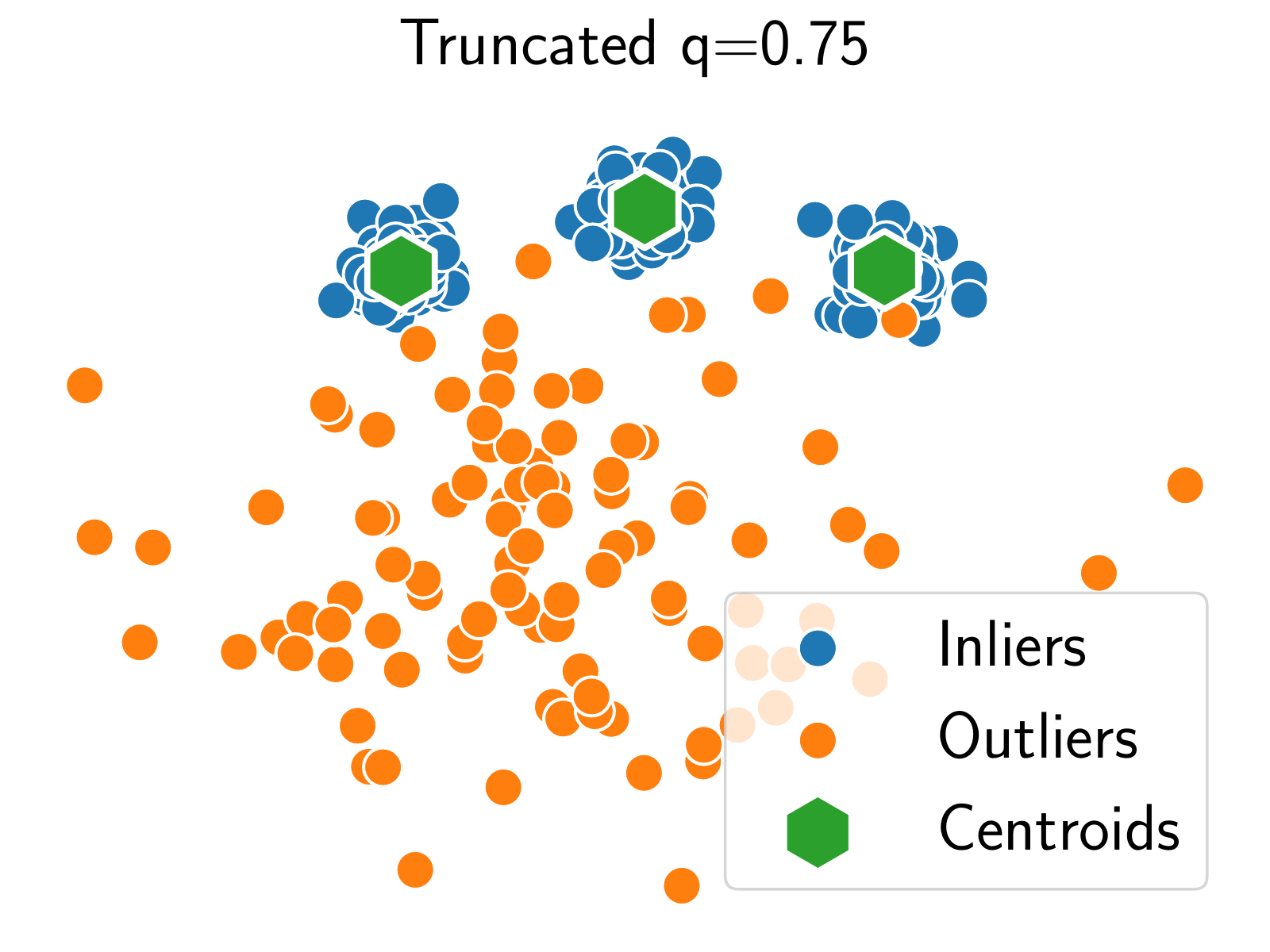}~
    \includegraphics[width=0.32\linewidth]{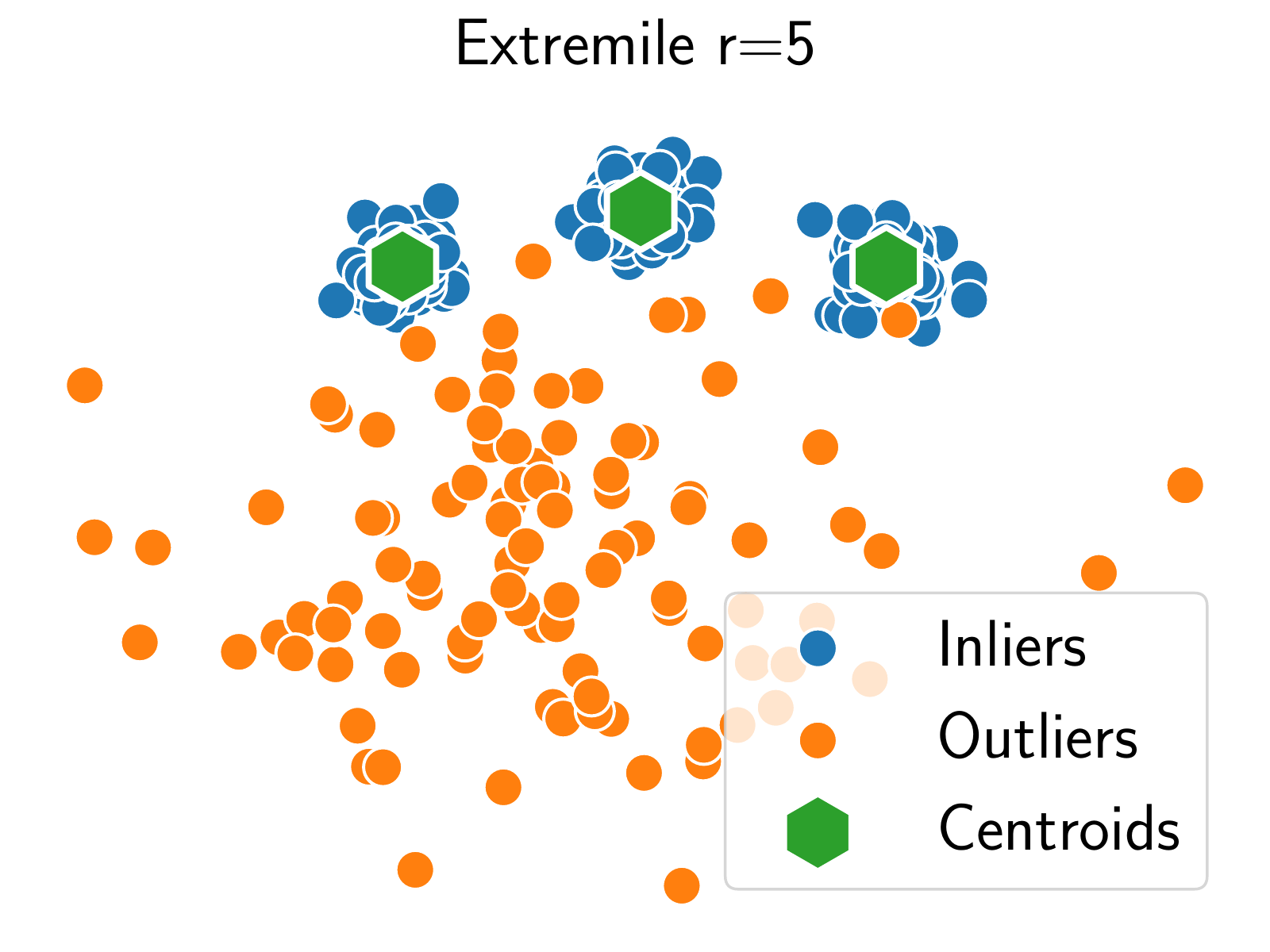}
    \includegraphics[width=0.8\linewidth]{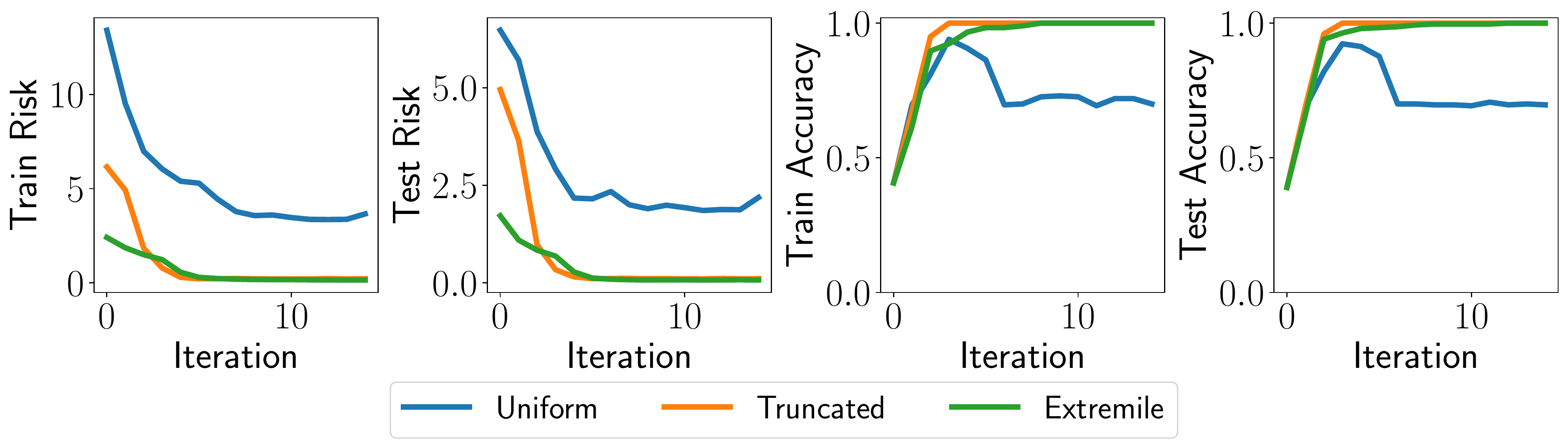}
    \caption{Clustering synthetic data points in the presence of outliers.}
    \label{fig:full_synth_kmeans}
\end{figure}

\subsubsection{Clustering Digits Images}
We consider forming a subset of the MNIST dataset~\citep{lecun1998mnist} of $28\times 28$ black and white images of handwritten digits by selecting $1000$ images of the digit $1$, $1000$ images of the digit $3$ each and $125$ images of each other digit in $\{0, \ldots, 9\}\setminus \{1, 3\}$ for a total of $2000$ inlier examples and $1000$ outlier examples. The images are standardized pixel by pixel. 
Our goal is to cluster the samples from $1$ and $3$ correctly even in the presence of outliers. We test our estimated centers on all images of the digits $1$ and $3$ from the test set of the MNIST database, that is, as in the synthetic experiment we test whether our estimated centers lead to the correct assignments of the test images in their respective group. 

We consider mini-batches of size $256$, a learning rate of $0.1$ found by grid-search on a log-10 scale, a uniform spectrum, a truncated spectrum with parameter $q=0.66$ or an extremile spectrum with $r=2$ and we initialize the centers at $0$.  
In Fig.~\ref{fig:full_mnist_kmeans} we present the estimated centers found for each spectrum as well as the training and test losses and the training and test accuracies, where for the training accuracy we only consider the assignment of the inlier points. 
 
 \begin{figure}
    \centering
    \includegraphics[width=0.32\linewidth]{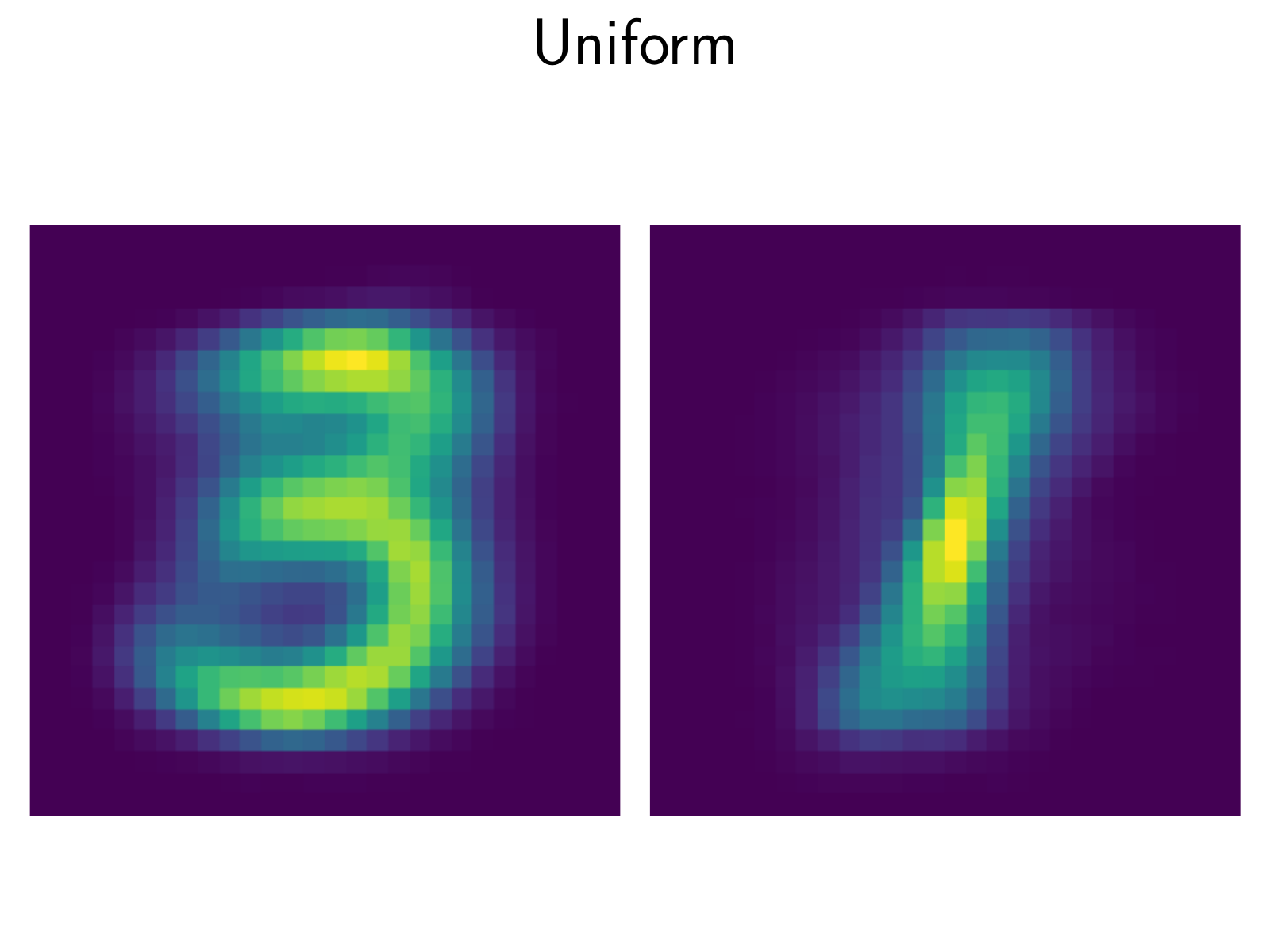}~
    \includegraphics[width=0.32\linewidth]{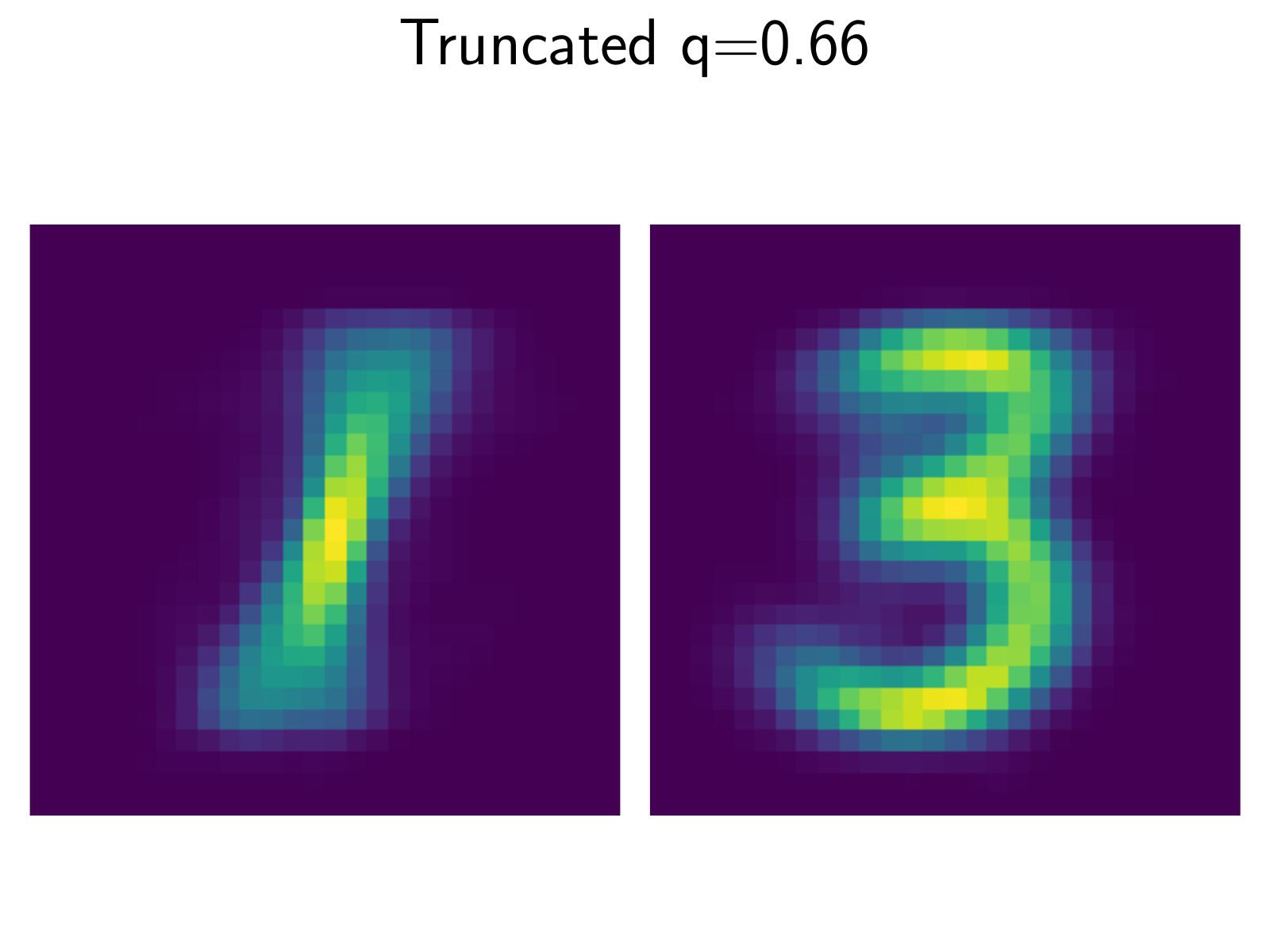}~
    \includegraphics[width=0.32\linewidth]{figs/MNIST_centers_extremile.pdf}
    \includegraphics[width=0.8\linewidth]{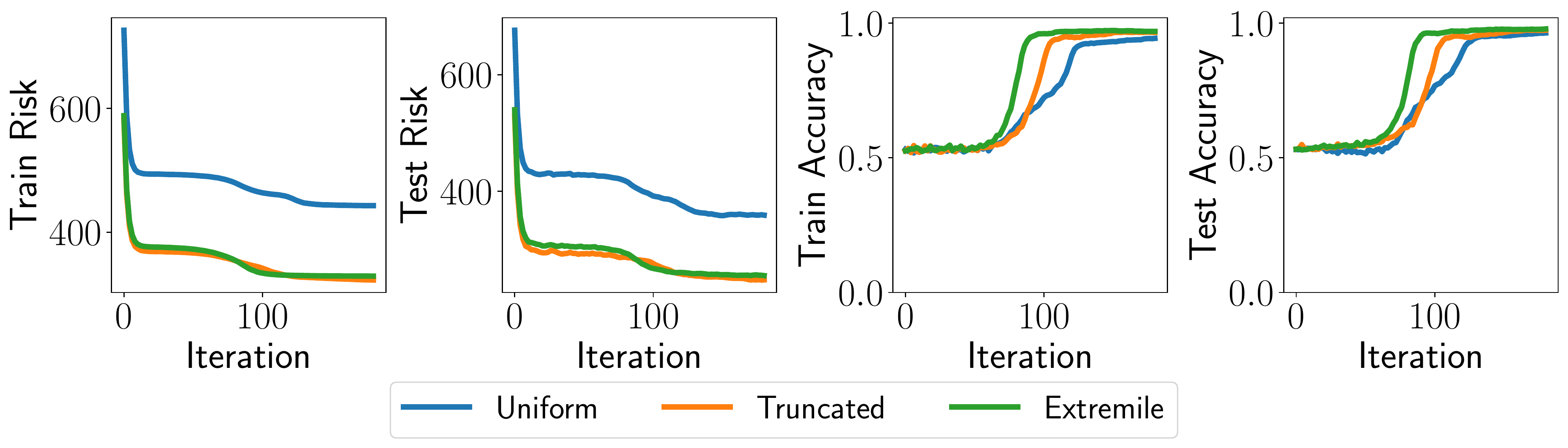}
    \caption{Clustering images of digits in the presence of outliers.}
    \label{fig:full_mnist_kmeans}
\end{figure}

\subsubsection{Clustering Images of Clothes}
As~\citet{Maurer2020RobustUnsupervised} we also consider clustering images of clothes from the dataset FashionMNIST~\citep{xiao2017fashion} that consist in $28\times28$ black and white images of $10$ clases of clothing such as: t-shirt, trouser, pullover, dress, coat, sandal, shirt, sneaker, bag, ankle boot. The images are standardized pixel by pixel. We form a training set composed of $1000$ images of trousers, $1000$ images of sneakers, and $250$ images of each of the other classes for a total of $2000$ inliers and $2000$ outliers. Our goal is to cluster teh trousers and the sneakers in the presence of the outliers. To test our estimators we use all images of trousers and sneakers from the test set of the FashionMNIST dataset. 

We consider mini-batches of size $64$, a learning rate of $1.$ found by grid-search on a log-10 scale, a uniform spectrum, a truncated spectrum with parameter $q=0.5$ or an extremile spectrum with $r=5$ and we initialize the centers at $0$.  
In Fig.~\ref{fig:full_fashion_mnist_kmeans} we present the estimated centers found for each spectrum as well as the training and test losses and the training and test accuracies, where for the training accuracy we only consider the assignment of the inlier points. 

Note that compared to~\citet{Maurer2020RobustUnsupervised} we obtain $100$\% accuracy of these methods on the test set. An approach by stochastic subgradient may be less sensitive to the initialization (performed with K-means++ by~\citet{Maurer2020RobustUnsupervised}). 

\begin{figure}
    \centering
    \includegraphics[width=0.32\linewidth]{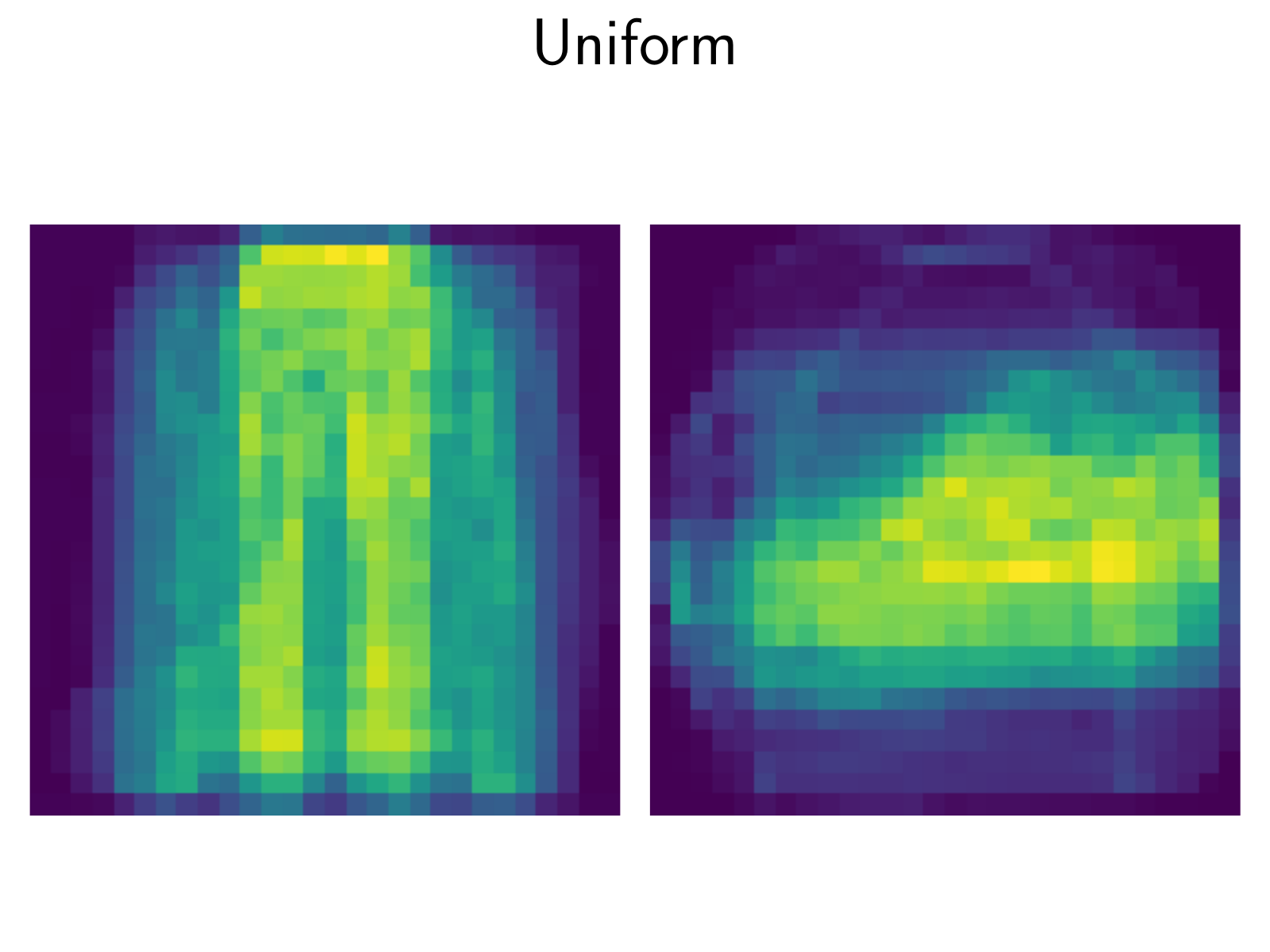}~
    \includegraphics[width=0.32\linewidth]{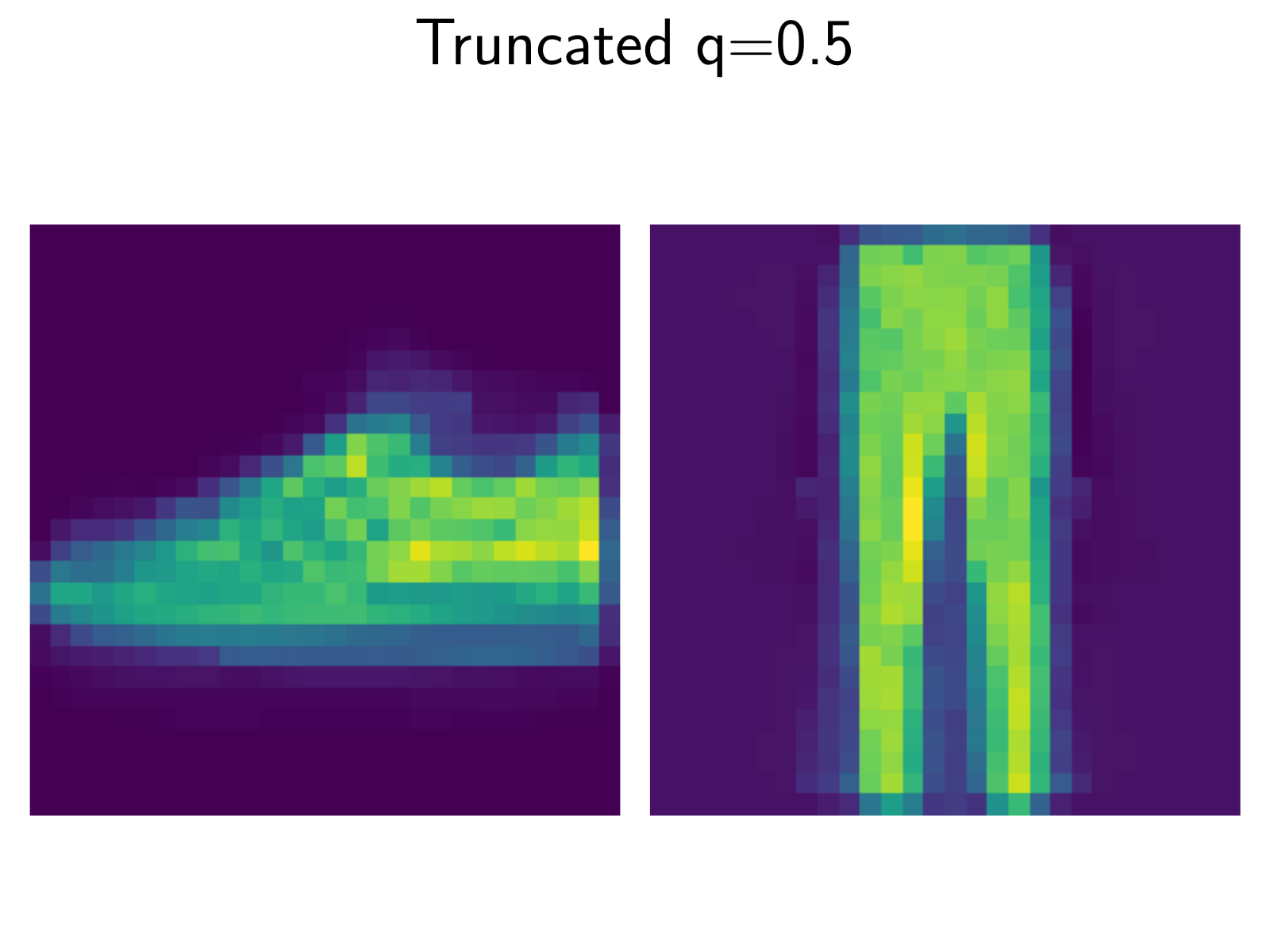}~
    \includegraphics[width=0.32\linewidth]{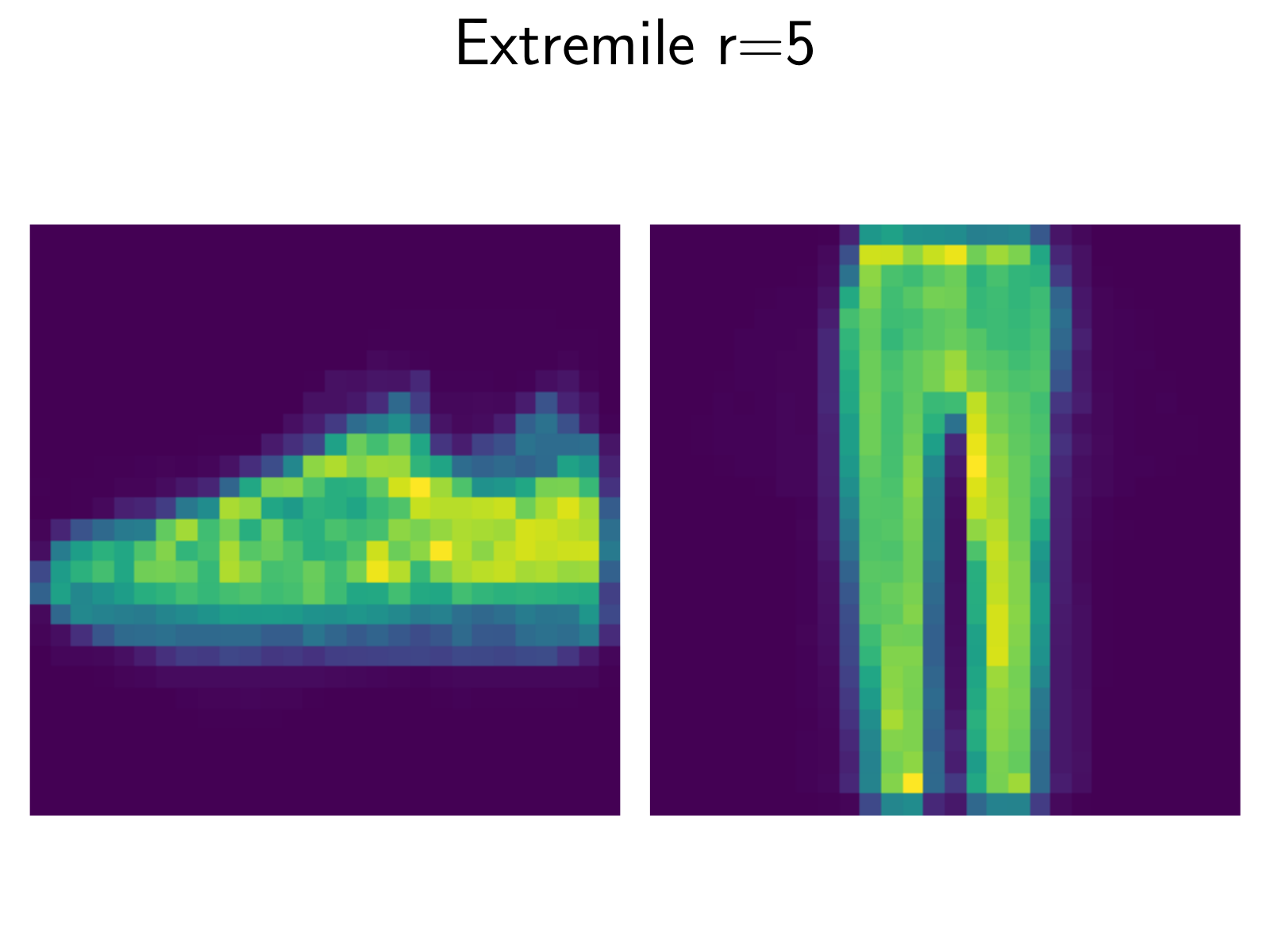}
    \includegraphics[width=0.8\linewidth]{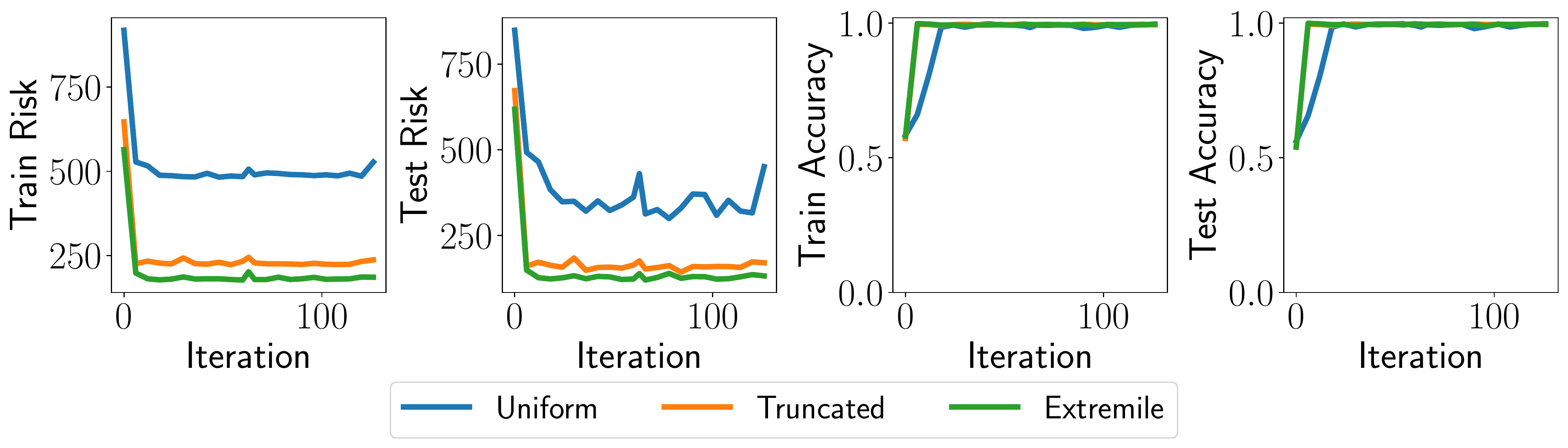}
    \caption{Clustering images of clothes in the presence of outliers.}
    \label{fig:full_fashion_mnist_kmeans}
\end{figure}

\section{Additional Experiments}
\label{sec:a:additional}
\myparagraph{Optimization effect of varying regularization parameter}
We demonstrate the robustness of the algorithm comparison with respect to the \emph{statistical regularization parameter} $\mu$. Hyperparameters are selected in accordance with \Cref{sec:a:hyperparam}. \Cref{fig:training_curves1}, \Cref{fig:training_curves10}, and \Cref{fig:training_curves0.1} show the suboptimality trajectories for $\mu = 1/n$, $10/n$, and $0.1/n$, respectively. The same rankings of algorithms result from each of the three figures, that \osvrg generally outperforms SGD and SRDA.

\begin{figure}[t!]
    \centering
    \includegraphics[width=\linewidth]{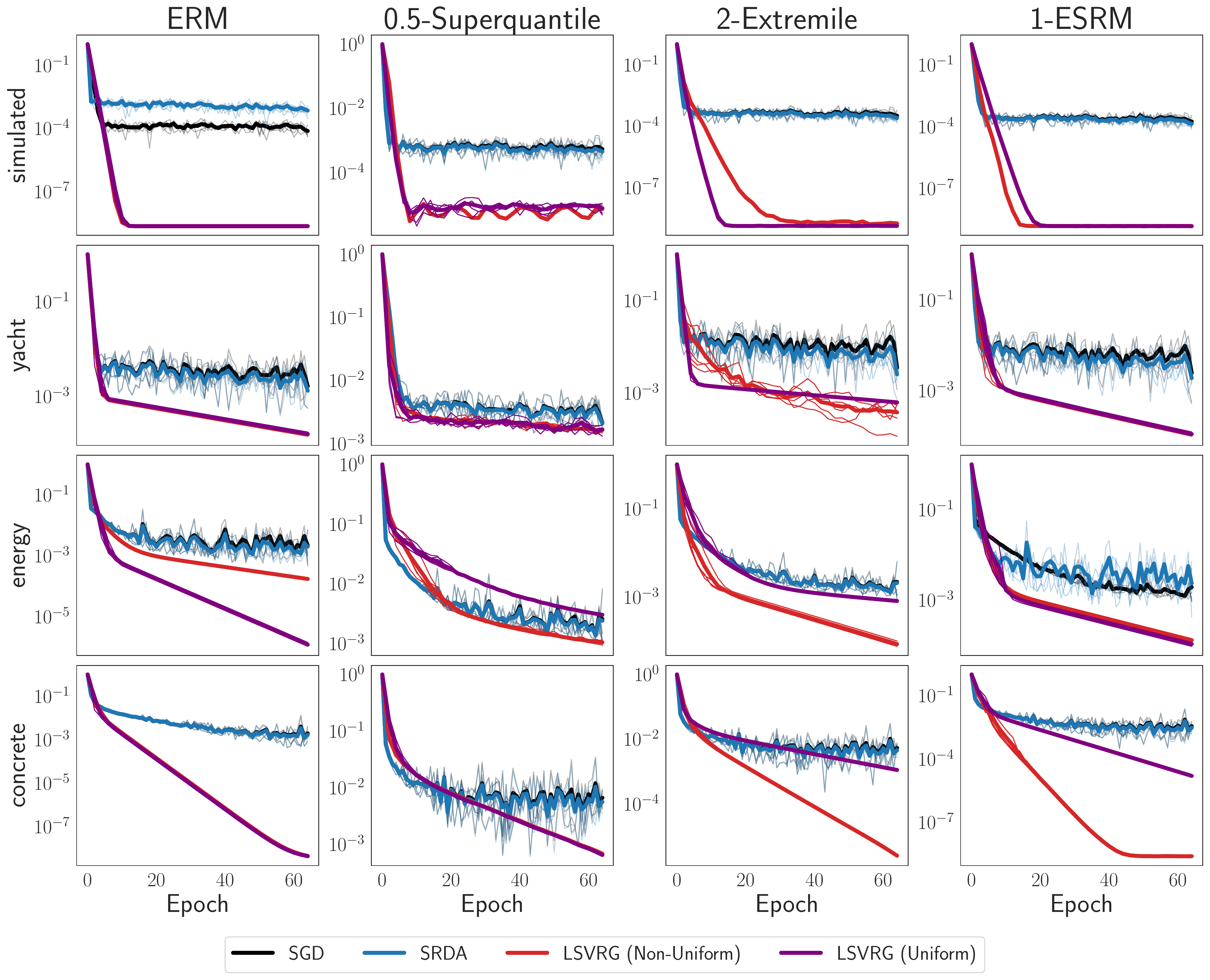}
    \caption{The suboptimality gap (base 10) for various optimization algorithms on spectral risk objectives for $\mu = 1/n$. The $x$-axis shows the number of effective passes through the data.}
    \label{fig:training_curves1}
\end{figure}
\begin{figure}[t!]
    \centering
    \includegraphics[width=\linewidth]{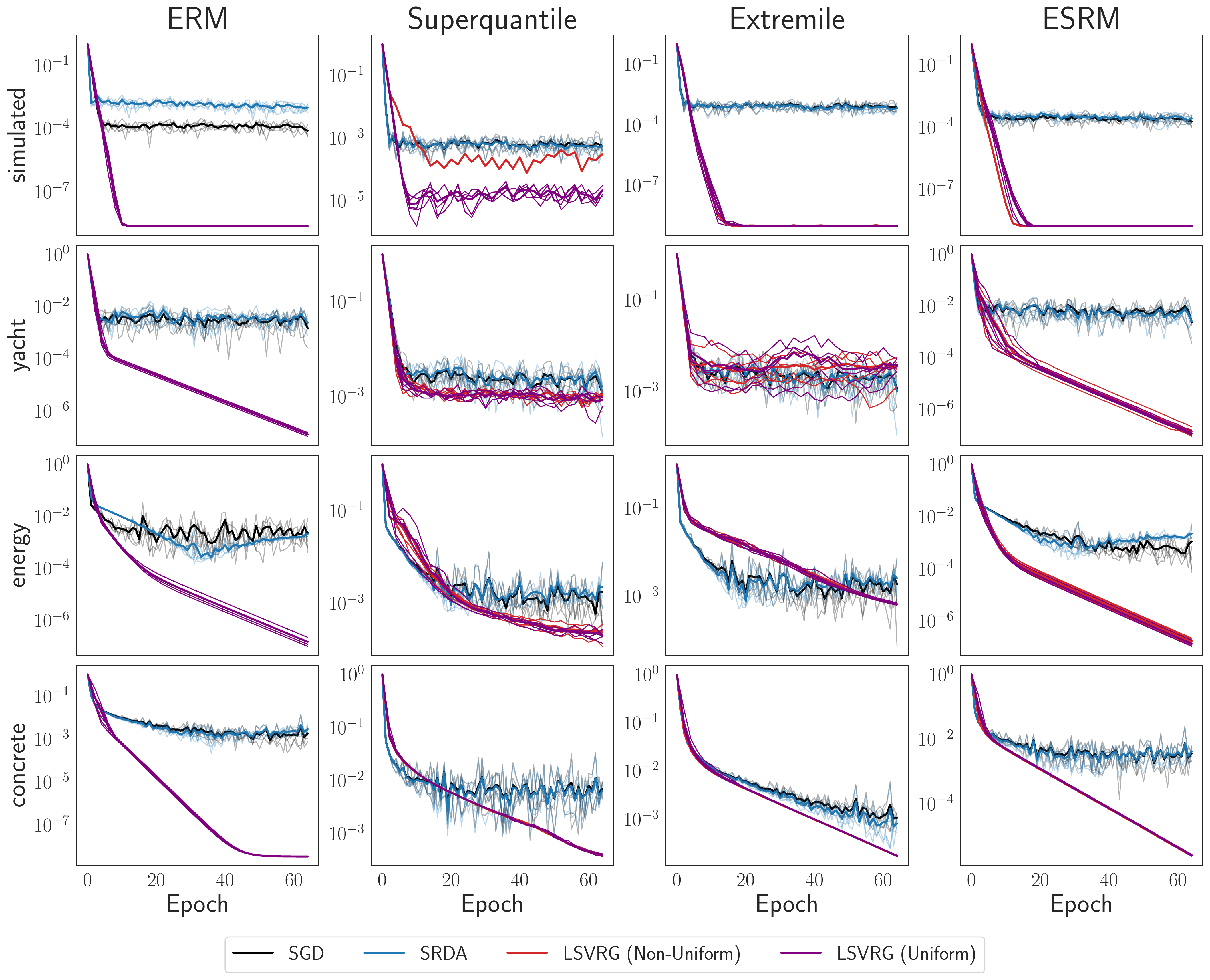}
    \caption{The suboptimality gap (base 10) for various optimization algorithms on spectral risk objectives for $\mu = 10/n$. The $x$-axis shows the number of effective passes through the data.}
    \label{fig:training_curves10}
\end{figure}
\begin{figure}[t!]
    \centering
    \includegraphics[width=\linewidth]{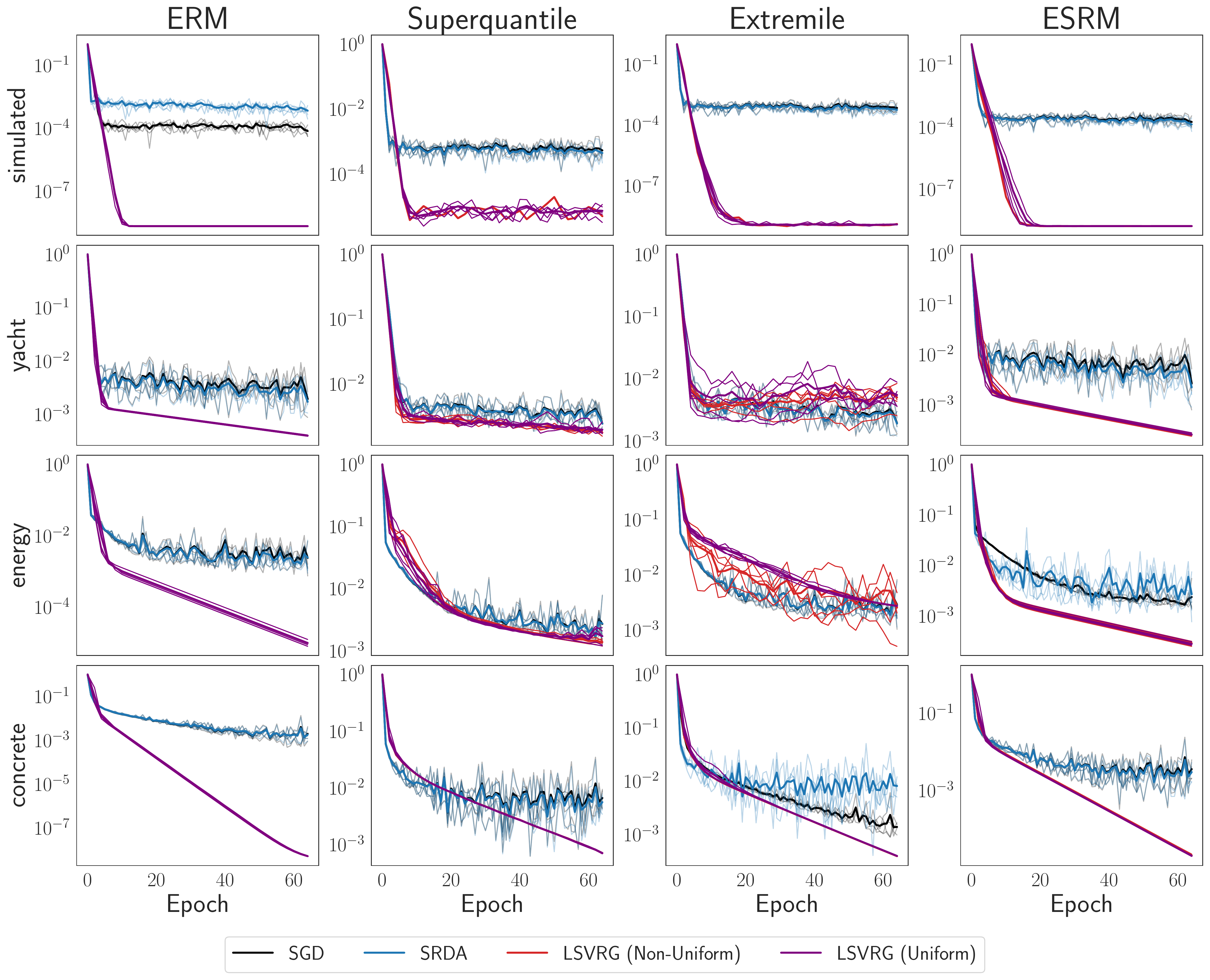}
    \caption{The suboptimality gap (base 10) for various optimization algorithms on spectral risk objectives for $\mu = 0.1/n$. The $x$-axis shows the number of effective passes through the data.}
    \label{fig:training_curves0.1}
\end{figure}

\myparagraph{Optimization effect of varying risk parameter}
We demonstrate the robustness of the algorithm comparison with respect to the statistical regularization parameter $\mu$. Hyperparameters are selected in accordance with \Cref{sec:a:hyperparam}. \Cref{fig:riskopti1}, \Cref{fig:riskopti2}, and \Cref{fig:riskopti3} show the suboptimality trajectories for $(q, r, \rho)$ set to $(0.25, 1.5, 0.5)$, $(0.5, 2, 1)$, and $(0.75, 2.5, 2)$, respectively. The same rankings of algorithms result from each of the three figures, that \osvrg generally outperforms SGD and SRDA. It should be noted that for the $0.75$-superquantile, \osvrg suffers from slow convergence and is outperformed by SGD and SRDA, suggesting that the superquantile is a particularly difficult learning objective.

\begin{figure}
    \centering
    \includegraphics[width=0.9\linewidth]{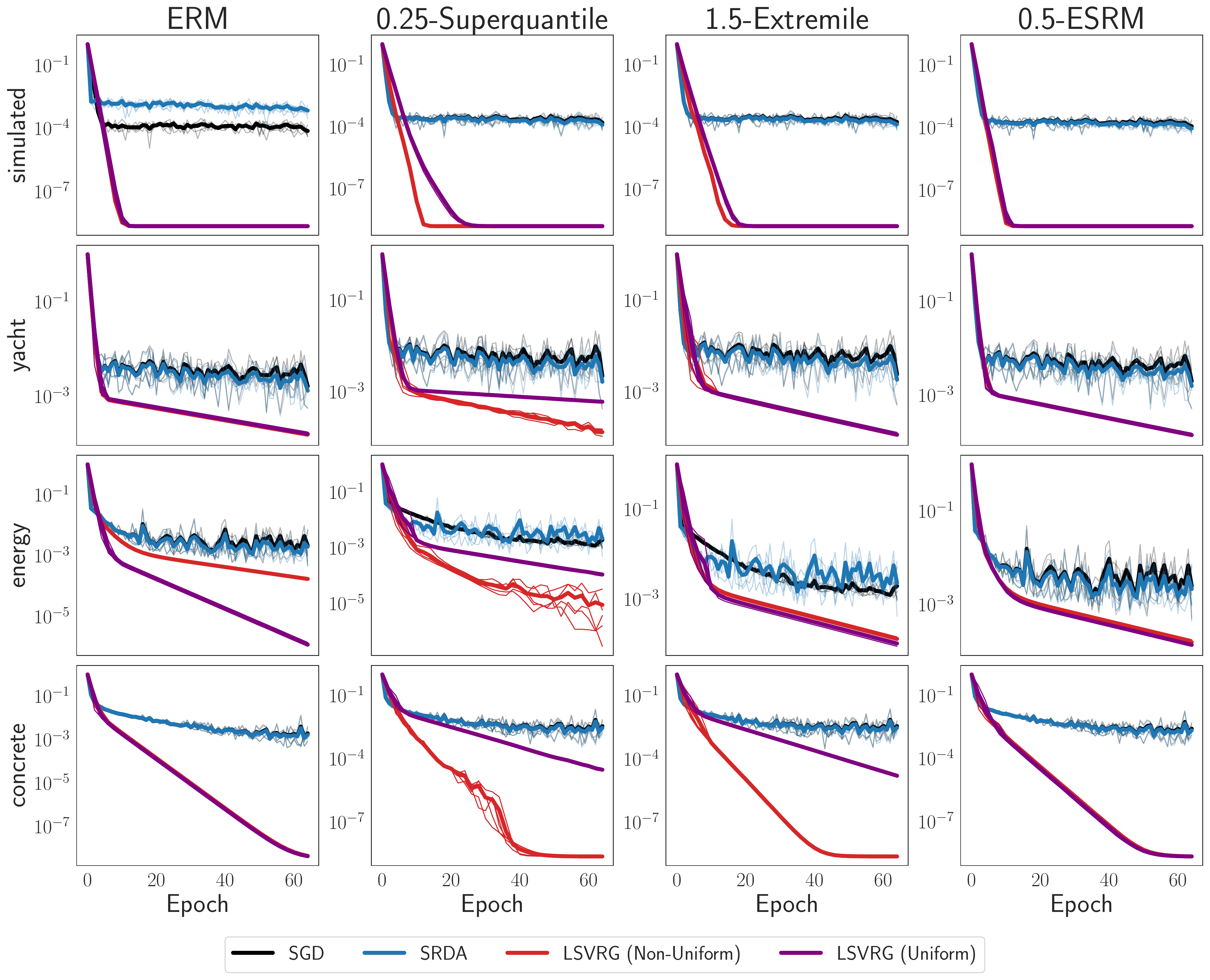}
    \caption{The suboptimality gap (base 10) for various optimization algorithms on ERM, $q$-superquantile, $r$-extremile, and $\rho$-ESRM objectives for$(q, r, \rho)$ set to $(0.25, 1.5, 0.5)$. The $x$-axis shows the number of effective passes through the data.}
    \label{fig:riskopti1}
\end{figure}
\begin{figure}
    \centering
    \includegraphics[width=0.9\linewidth]{figs/training_curves_1.0.pdf}
    \caption{The suboptimality gap (base 10) for various optimization algorithms on ERM, $q$-superquantile, $r$-extremile, and $\rho$-ESRM objectives for$(q, r, \rho)$ set to $(0.5, 2, 1)$. The $x$-axis shows the number of effective passes through the data.}
    \label{fig:riskopti2}
\end{figure}
\begin{figure}
    \centering
    \includegraphics[width=0.9\linewidth]{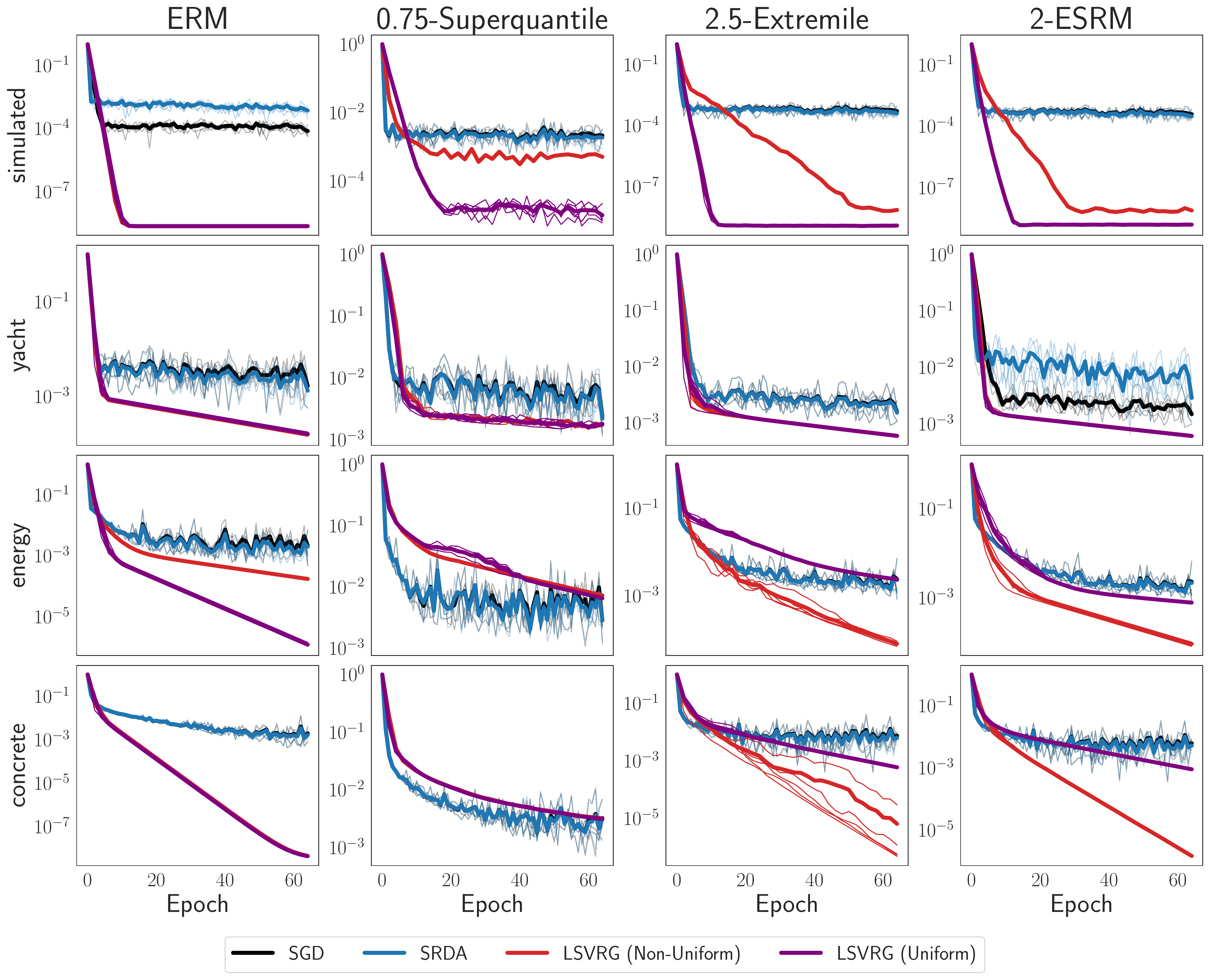}
    \caption{The suboptimality gap (base 10) for various optimization algorithms on ERM, $q$-superquantile, $r$-extremile, and $\rho$-ESRM objectives for$(q, r, \rho)$ set to $(0.75, 2.5, 2)$. The $x$-axis shows the number of effective passes through the data.}
    \label{fig:riskopti3}
\end{figure}

\myparagraph{Statistical effect of varying risk parameter}
We inspect how the test losses of the $L$-risk minimizers behave compared to the corresponding ERM solutions. Letting $\hat{w}_{\text{ERM}}$ be the approximate solution of ERM, whereas $\hat{w}_{\text{LRM}}$ is the approximate solution of an $L$-Risk minimization problem other than ERM, \Cref{fig:robustness1}, \Cref{fig:robustness2}, and \Cref{fig:robustness3} plot the following against $p$:
\begin{equation}
    \ell_{(\ceil{np})}\p{\hat{w}_{\text{ERM}}} - \ell_{(\ceil{np})}\p{\hat{w}_{\text{LRM}}},
    \label{eqn:emp_quantile_diff}
\end{equation}
that is, the difference in the $p$-th quantile of the test loss of $\hat{w}_{\text{ERM}}$ and the $p$-th quantile of the test loss of $\hat{w}_{\text{LRM}}$. The plots are in order of ``easy'', ``medium'', and ``hard'' values of the risk parameters, corresponding to $(q, r, \rho)$ being $(0.25, 1.5, 0.5)$, $(0.5, 2, 1)$, and $(0.75, 2.5, 2)$, respectively. The medium settings are shown primarily in the main text. The median test loss ($p = 0.5$) is similar between the $L$-risk minimizers and standard ERM across risk parameters. However, for $p > 0.5$, the ERM solution can make predictions with much higher loss, indicating that the tail is not controlled. The superquantile at parameters $q = 0.5$ generally fails to control test risk, even substantially underperforms in comparison to ERM in {\tt energy}. On the other hand, the extremile and ESRM convincingly dominate ERM in the region $(0.9, 1)$ of the empirical quantile function for each of the risk parameters, with the extremile having a more pronounced effect.
\begin{figure}
    \centering
    \includegraphics[width=0.8\linewidth]{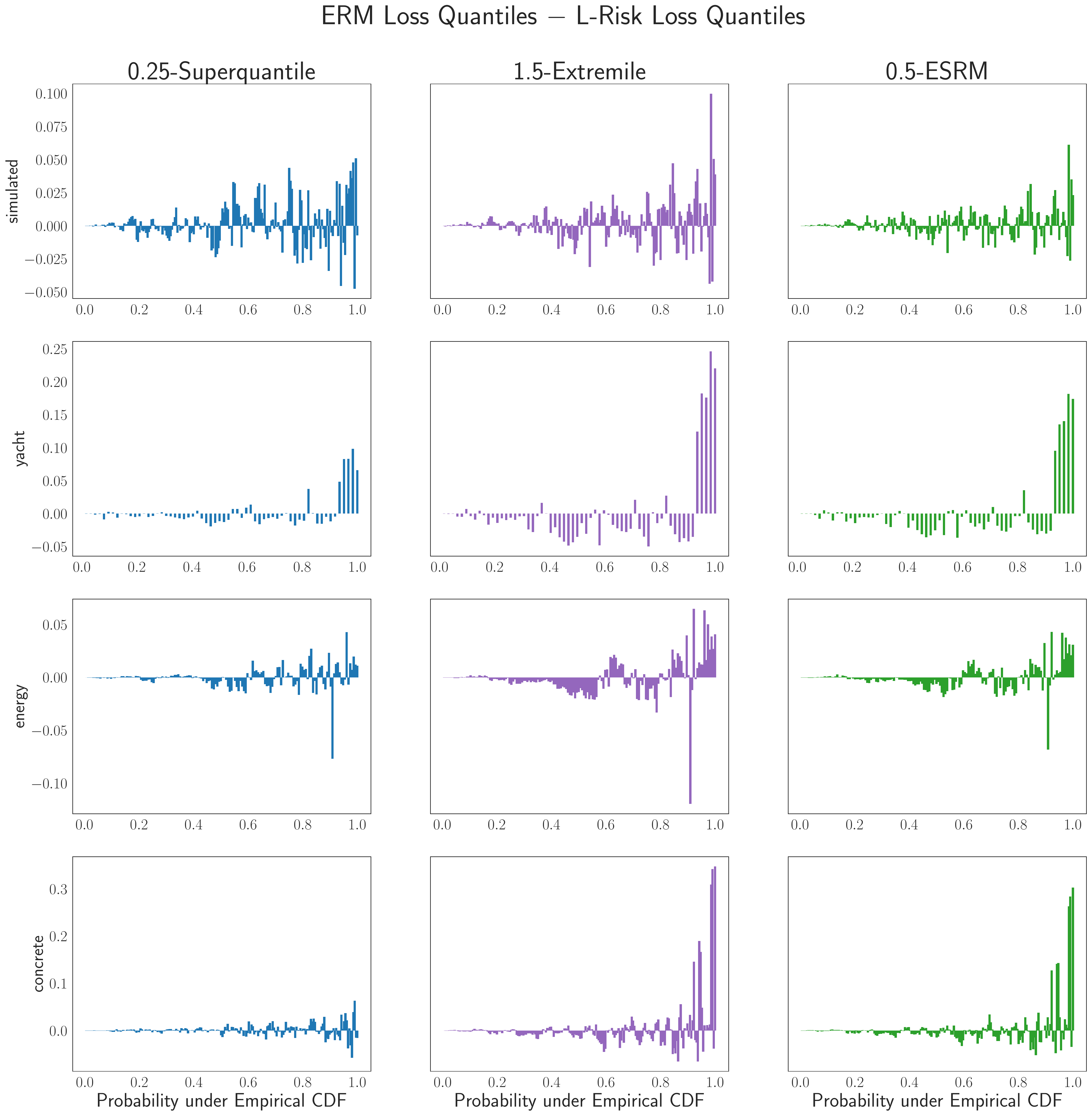}
    \caption{The difference between the empirical quantile function given by $\ell_{(1)}(\hat{w}_{\text{ERM}}), \ldots, \ell_{(n)}(\hat{w}_{\text{ERM}})$ and the empirical quantile function of an $L$-risk minimizer $\ell_{(1)}(\hat{w}_{\text{LRM}}), \ldots, \ell_{(n)}(\hat{w}_{\text{LRM}})$, where the $L$-risk is the $q$-superquantile (left column), $r$-extremile (middle column), or $\rho$-exponential spectral risk measure (right column). Each row represents a dataset out of {\tt simulated}, {\tt yacht}, {\tt energy}, and {\tt concrete}. Here, $(q, r, \rho) = (0.25, 1.5, 0.5)$, constituting $L$-risks that are ``close'' to ERM.}
    \label{fig:robustness1}
\end{figure}
\begin{figure}
    \centering
    \includegraphics[width=0.8\linewidth]{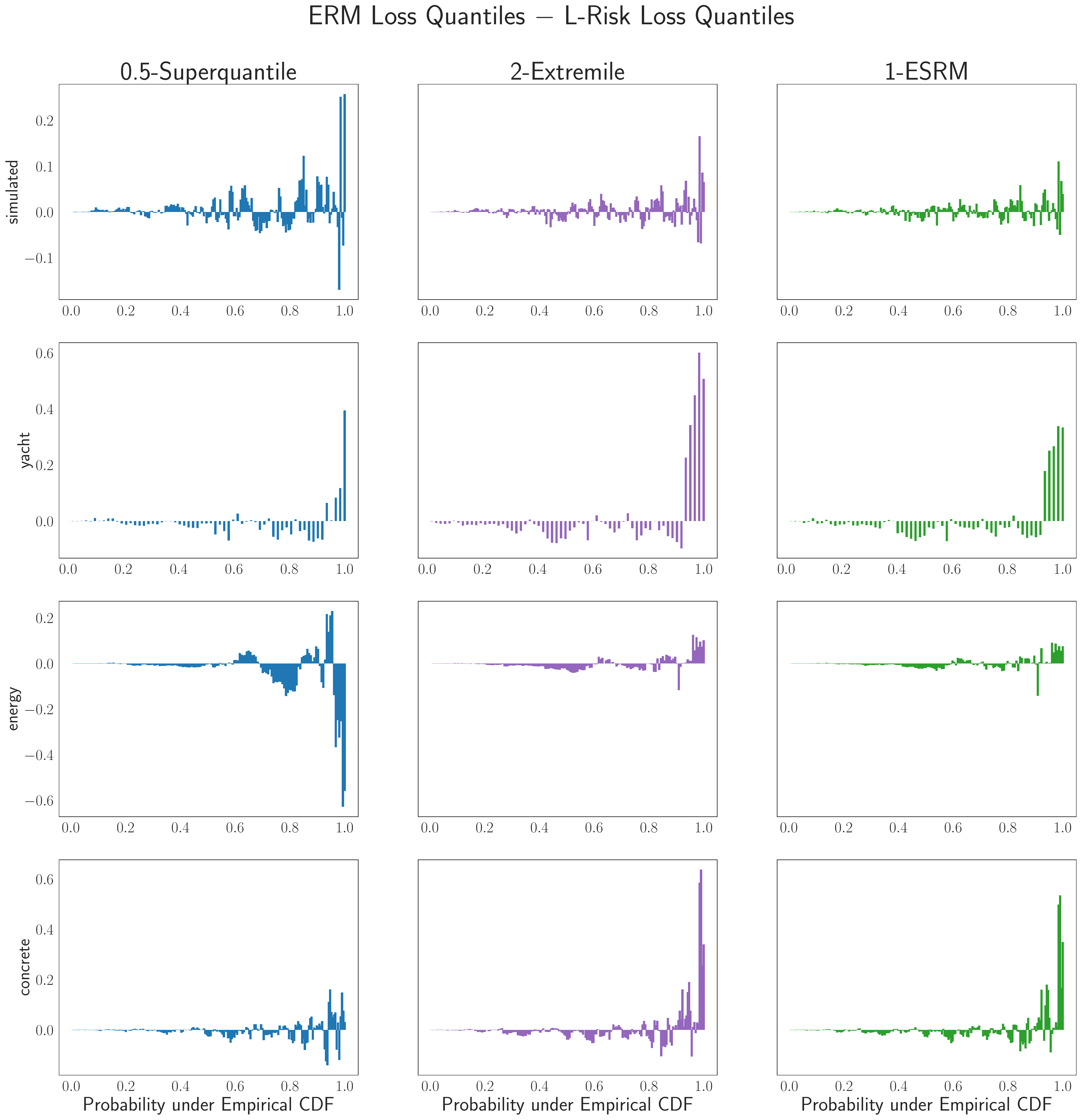}
    \caption{The difference between the empirical quantile function given by $\ell_{(1)}(\hat{w}_{\text{ERM}}), \ldots, \ell_{(n)}(\hat{w}_{\text{ERM}})$ and the empirical quantile function of an $L$-risk minimizer $\ell_{(1)}(\hat{w}_{\text{LRM}}), \ldots, \ell_{(n)}(\hat{w}_{\text{LRM}})$, where the $L$-risk is the $q$-superquantile (left column), $r$-extremile (middle column), or $\rho$-exponential spectral risk measure (right column). Each row represents a dataset out of {\tt simulated}, {\tt yacht}, {\tt energy}, and {\tt concrete}. Here, $(q, r, \rho) = (0.5, 2, 1)$, constituting $L$-risks that are ``moderately far'' from ERM.}
    \label{fig:robustness2}
\end{figure}
\begin{figure}
    \centering
    \includegraphics[width=0.8\linewidth]{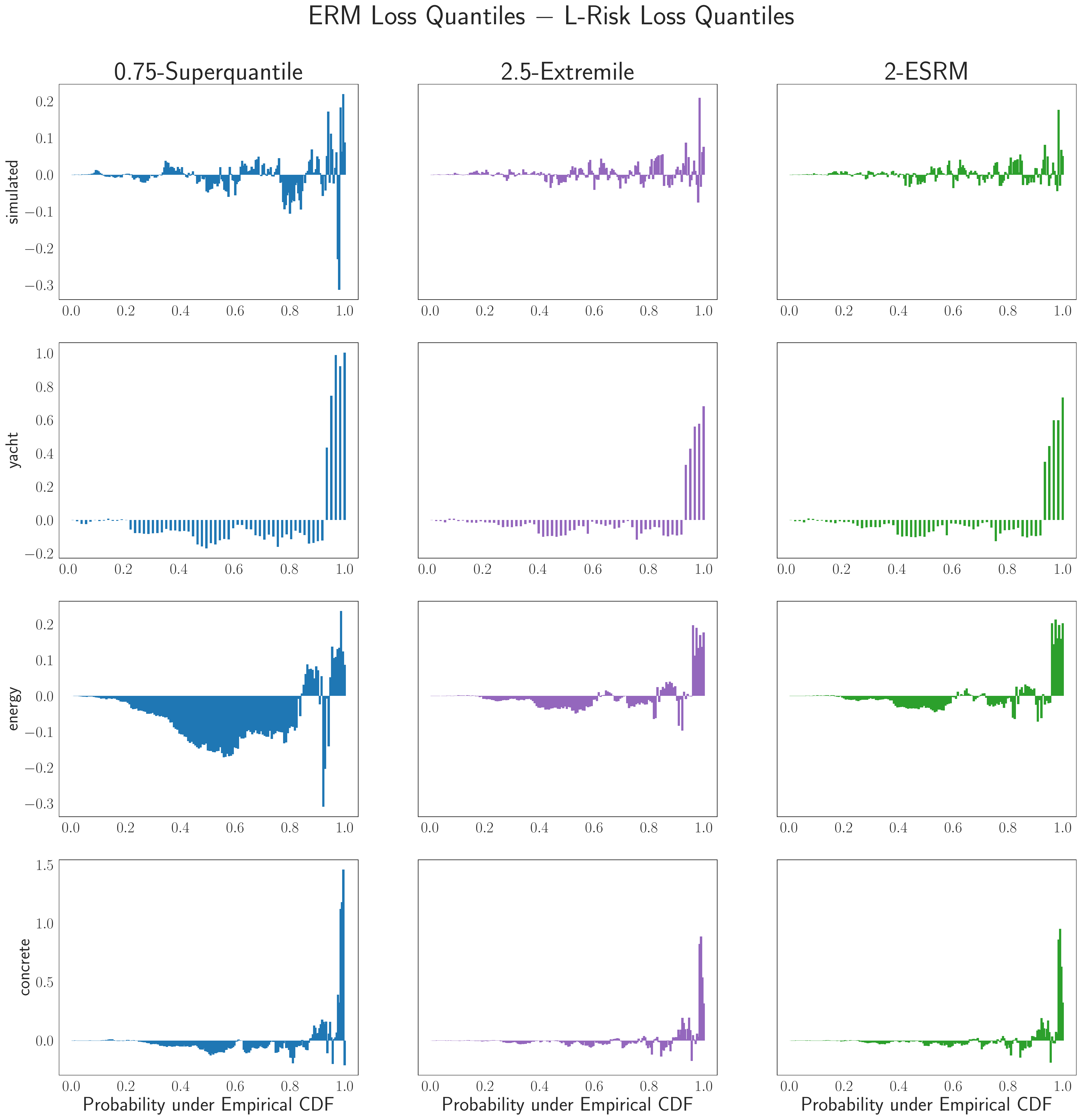}
    \caption{The difference between the empirical quantile function given by $\ell_{(1)}(\hat{w}_{\text{ERM}}), \ldots, \ell_{(n)}(\hat{w}_{\text{ERM}})$ and the empirical quantile function of an $L$-risk minimizer $\ell_{(1)}(\hat{w}_{\text{LRM}}), \ldots, \ell_{(n)}(\hat{w}_{\text{LRM}})$, where the $L$-risk is the $q$-superquantile (left column), $r$-extremile (middle column), or $\rho$-exponential spectral risk measure (right column). Each row represents a dataset out of {\tt simulated}, {\tt yacht}, {\tt energy}, and {\tt concrete}. Here, $(q, r, \rho) = (0.75, 2.5, 2)$, constituting $L$-risks that are ``significantly far'' from ERM.}
    \label{fig:robustness3}
\end{figure}

\myparagraph{Comparison between smoothed and non-smooth \osvrg}
We compare the implementation of \osvrg with smoothing presented in \cref{alg:osvrg-u:theory-1} to the non-smooth epoch-based implementation of \osvrg presented in \cref{alg:extr_svrg}.
We consider the datasets \texttt{simulated}, \texttt{yacht}, \texttt{energy} and \texttt{concrete} presented in \cref{sec:a:task} and spectral risk measure objectives~\eqref{eqn:orm} defined by the empirical superquantile ($q = 0.5$), extremile ($r = 2$), and ESRM ($\rho=1$) of the losses, plus an $\ell_2^2$ regularization term of magnitude $1/n$.

We implemented the smoothed \osvrg algorithm (\cref{alg:osvrg-u:theory-1}) with $N=n$, $q^*=0$ and a smoothing given by either a centered negative entropy regularizer $\Omega_1$ or a centered square Euclidean norm $\Omega_2$, with $\Omega_1$ and $\Omega_2$ from Eq.~\eqref{eq:chosen_smoothing}. We consider using a smoothing coefficient of $\nu_1 = 10^{-3}$ for $\Omega_1$ and $\nu_2 = n10^{-3}$ for $\Omega_2$ (using the fact that the approximation done by $\Omega_2$ has an approximation error of $\chi^2(s||u)/n$ as detailed in \cref{sec:a:smoothing}). On the vertical axis we consider is the suboptimality gap  $\frac{\mc{R}_\sigma(w\pow{t}) - \mc{R}_\sigma(w^*)}{\mc{R}_\sigma(w\pow{0}) - \mc{R}_\sigma(w^*)}$ for $w^*$ computed by L-BFGS. 

In \cref{fig:smooth_osvrg_conv}, we observe that the non-smooth and smooth implementations of \osvrg generally match. For the ERM objective, this observation was expected since the permutahedron associated with the vector $\avg = \ones/n$ reduces to $\{\avg\}$ since all entries of $\avg$ are equal. Hence the maximization defining the smooth approximations $\lstat_{\nu \Omega}$ given in \cref{sec:a:smoothing} have a maximizer independent of the values of the losses and naturally given by $\avg$ such that the smooth approximation of $\lstat$ reduces exactly to $\lstat$ for any choice of $\nu$ and $\Omega$. 
For the other spectral risk measures, we observe some discrepancies between the non-smooth and the smooth implementations with the smooth implementation giving generally smoother curves as it is the case for the superquantile on the \texttt{simulated} dataset or the ESRM on the \texttt{concrete} dataset. However, such differences are not observed for, e.g., the superquantile on the \texttt{yacht}, \texttt{energy}, \texttt{concrete} datasets or the extremile and the ESRM objectives on the \texttt{simulated} and \texttt{yacht} datasets. Overall these experiments suggest that the non-smooth nature of the problem has moderate impact on the performance of \osvrg. This behavior may be explained by the fact that the non-smoothness of the losses only intervene if the minimizer of the objective produces a vector of losses with ties which may not happen in practice. In addition note, that the negative entropy or the squared Euclidean smoothing generally give the same results (after appropriately scaling the smoothing coefficient of $\Omega_2$ by $n$ as suggested by the approximation errors given in~\Cref{cor:smooth_approx} (\Cref{sec:a:smoothing}).

In \cref{fig:smooth_osvrg_conv}, we also consider \cref{alg:osvrg-u:theory-1} with $N= 2n$, $q^*=1/n$ and the same smoothing method as presented above. We scaled the horizontal axis by multiplying all algorithms by the total  number of calls to the gradient oracles of the losses such that \osvrg in \cref{alg:extr_svrg} is scaled by a factor 2 while \cref{alg:osvrg-u:theory-1} is scaled by a factor $\rho\geq 2$. We observe that the non-smooth implementation of \osvrg in \cref{alg:extr_svrg} compares generally on par or better than the implementation of \cref{alg:osvrg-u:theory-1} after taking into account the total number of passes over the data, except for the ESRM risk on \texttt{concrete} and the extremile on \texttt{yacht}.

\begin{figure}
	\begin{center}
    \includegraphics[width=0.8\linewidth]{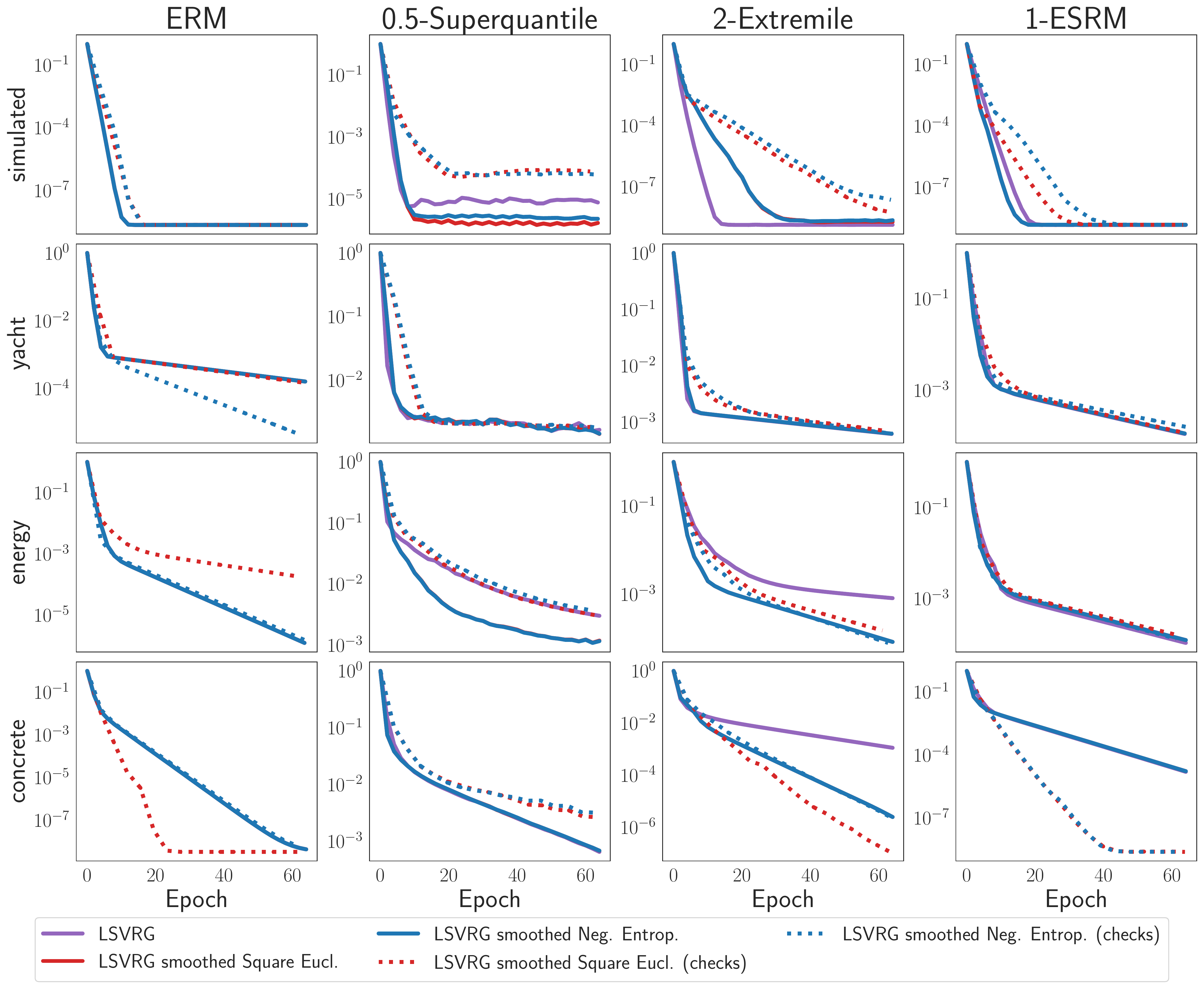}
	\end{center}
\caption{Comparison of the non-smooth implementation of \osvrg in \cref{alg:extr_svrg} and the smoothed implementation of \osvrg in \cref{alg:osvrg-u:theory-1} with $N=n$, $q^*=0$, and with a centered non-negative entropy smoothing function $\Omega_1$ and $\nu_1 = 10^{-3}$ or a centered Euclidean smoothing function $\Omega_2$ with $\nu_2 = n 10^{-3}$ (see \cref{eq:chosen_smoothing} for the exact definitions of $\Omega_1, \Omega_2$).\label{fig:smooth_osvrg_conv}}
\end{figure}

\myparagraph{Run time experiments}
\Cref{fig:runtime} contains plots of optimizer runtimes in each of the datasets considered. The values are calculated using the {\tt time} module in Python 3 with logging disabled on the compute environment described in \Cref{sec:a:code}. The two variants of \osvrg trade off run time for precision, as their suboptimality achieves $\sim 1$ order of magnitude improvement on {\tt yacht}, up to $\sim 4$ orders of magnitude improvement on {\tt concrete} over the SGD and SRDA baseline. SGD and SRDA also run $\sim 2$ orders of magnitude faster across datasets, but fail to converge due to both bias and variance.

\begin{figure}
    \centering
    \includegraphics[width=\linewidth]{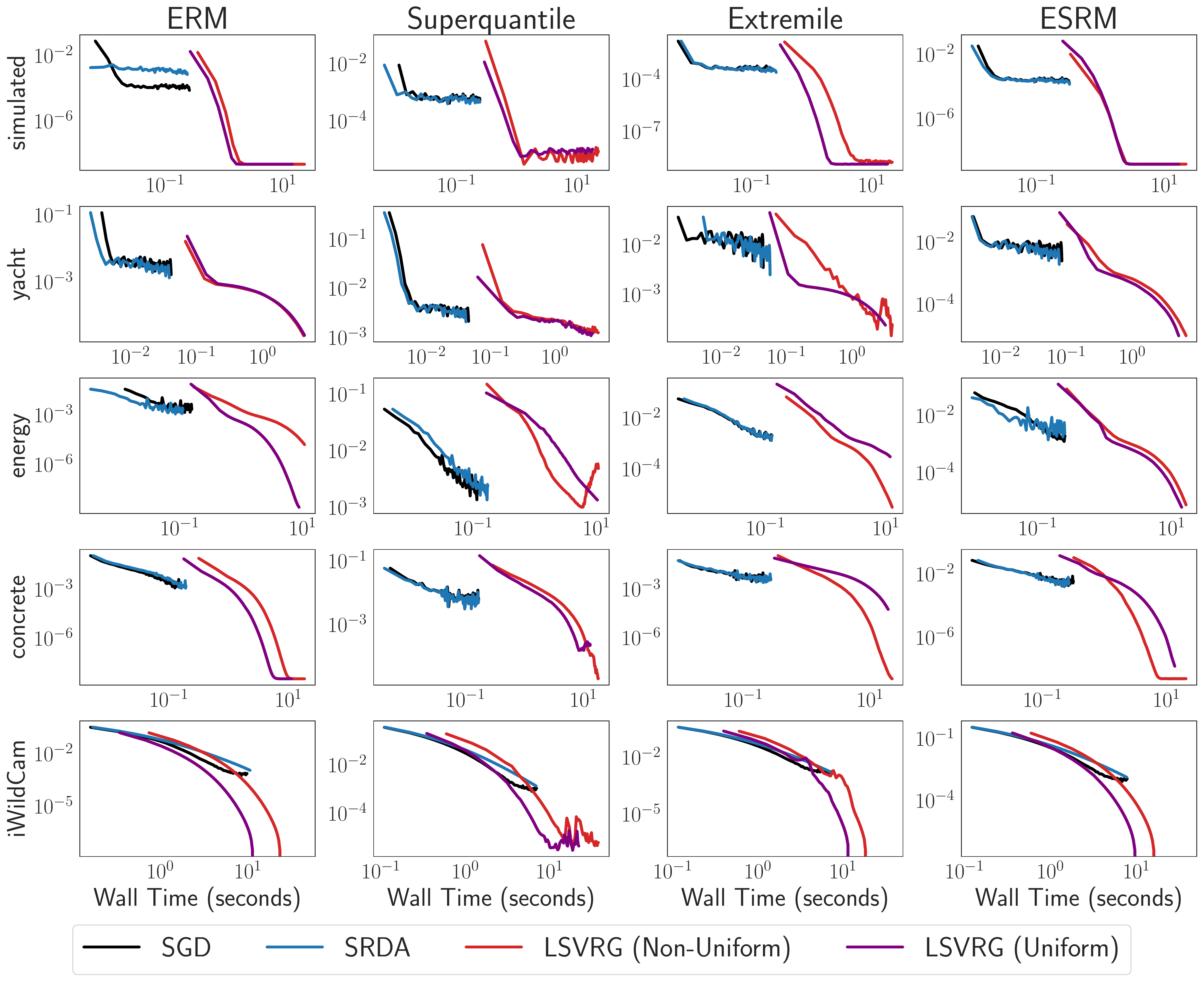}
    \caption{Algorithm run time of SGD, SRDA, and LSVRG optimizers on fives datasets (rows) and four objectives (columns). The $y$-axis plots the suboptimality in log scale, whereas the $x$-axis contains wall time in seconds in log scale.}
    \label{fig:runtime}
\end{figure} 

\end{document}